\newcommand*\widefbox[1]{\fbox{\hspace{1em}#1\hspace{1em}}}
\appto\TPTnoteSettings{\small}
\newtheorem{thm}{Theorem}
\newtheorem{cor}{Corollary}
\newtheorem{lem}{Lemma}
\newtheorem{defn}{Definition}
\newtheorem{rem}{Remark}
\newcommand{\abs}[1]{\left\vert#1\right\vert}
\newcommand{\norm}[1]{\left\Vert#1\right\Vert}
\newcommand{\Mod}[1]{\ (\mathrm{mod}\ #1)}
\newcommand{\blue}{\color{blue}}
\appto\TPTnoteSettings{\small}
\DeclareMathOperator{\atantwo}{atan2}
\begin{document}

\title{\small {\blue Robotics and Autonomous Systems}\\
\Huge{Rapid Path Planning for Dubins Vehicles\\ under Environmental Currents}}
\author{ \begin{tabular}{cccccccccc}
 \textbf{Khushboo Mittal${^\dag}{^\ast}$} & \ \ \  \textbf{Junnan Song${^\dag}{^\ast}$} & \ \ \  \textbf{Shalabh Gupta${^\dag}{^\star}$} & \ \ \ \textbf{Thomas A. Wettergren$^{\ddag}$}
\end{tabular}
\thanks{This work was supported by US Office of Naval Research under Award Number N000141613032. Any opinions or findings herein are those of the authors and do not necessarily reflect the views of the sponsoring agencies.}
\thanks{$^{\ast}$ These authors contributed equally to this work.}
\thanks{$^{\dag}$ Department of Electrical and Computer Engineering, University of Connecticut, Storrs, CT 06269, USA.}\
\thanks{$^{\ddag}$ Naval Undersea Warfare Center, Newport, RI 02841, USA.}\
\thanks{$^{\star}$ Corresponding Author (email id: shalabh.gupta@uconn.edu)}
\vspace{-18pt}
}

\maketitle
\begin{abstract}
This paper presents a rapid (real time) solution to the minimum-time path planning problem for Dubins vehicles under environmental currents (wind or ocean currents). Real-time solutions are essential in time-critical situations (such as replanning under dynamically changing environments or tracking fast moving targets). Typically, Dubins problem requires to solve for six  path types; however, due to the presence of currents, four of these path types require to solve the root-finding problem involving transcendental functions. Thus, the existing methods result in high computation times and their applicability for real-time applications is limited. In this regard, in order to obtain a real-time solution, this paper proposes a novel approach where only a subset of two Dubins path types ($LSL$ and $RSR$) are used which have direct analytical solutions in the presence of currents. However, these two path types do not provide full reachability. We show that by extending the feasible range of circular arcs in the $LSL$ and $RSR$ path types from $2\pi$ to $4\pi$: 1) full reachability of any goal pose is guaranteed, and 2) paths with lower time costs as compared to the corresponding $2\pi$-arc paths can be produced. Theoretical properties are rigorously established, supported by several examples, and evaluated in comparison to the Dubins solutions by extensive Monte-Carlo simulations.
\end{abstract}
\vspace{-3pt}

\begin{IEEEkeywords}
Dubins paths, path planning, environmental currents, curvature-constrained vehicles.
\end{IEEEkeywords}
\vspace{-3pt}

\thispagestyle{empty}
\vspace{0pt}

\nomenclature[1]{$\bf v$}{Vehicle velocity vector (m/s,~m/s)}
\nomenclature[2]{${\bf v}_w$}{Current velocity vector (m/s,~m/s)}
\nomenclature[3]{${\bf v}_{net}$}{Net velocity vector which is the vector sum of $\bf v$ and ${\bf v}_w$ (m/s,~m/s)}
\nomenclature[4]{$w_x$, $w_y$}{Components of ${\bf v}_w$ along the x and y axes (m/s)}
\nomenclature[5]{$\theta$}{Vehicle heading (rad)}
\nomenclature[6]{$\theta_w$}{Current heading (rad)}
\nomenclature[7]{$u$}{Turn rate (rad/s)}
\nomenclature[8]{$r$}{Turning radius (m)}
\nomenclature[9]{$(x_0,y_0,\theta_0)$}{Start pose (m,~m,~rad)}
\nomenclature[91]{$(x_f,y_f,\theta_f)$}{Goal pose (m,~m,~rad)}
\nomenclature[92]{$k$}{Feasible range parameter for an $LSL$ or $RSR$ path (-)}
\nomenclature{$\alpha$}{Turning angle of the first arc of an $LSL$ or $RSR$ path (rad)}
\nomenclature{$\beta$}{Length of the straight line segment of an $LSL$ or $RSR$ path (m)}
\nomenclature{$\gamma$}{Turning angle of the last arc of an $LSL$ or $RSR$ path (rad)}
\nomenclature{$T$}{Total travel time (s)}
\nomenclature{$(p^k_{LSL}, q^k_{LSL})$}{Center of rotation of the reachability ray for an $LSL$ path and a given $k$ (m,~m)}
\nomenclature{$(p^k_{RSR}, q^k_{RSR})$}{Center of rotation of the reachability ray for an $RSR$ path and a given $k$ (m,~m)}
\nomenclature{$\omega^k_{LSL[RSR]}(\alpha)$}{Rotation of a reachability ray for a given $\alpha$ and $k$ for an $LSL$[$RSR$] path (rad)}
\nomenclature{$\overline{\omega^k_{LSL[RSR]}}(\alpha)$}{Rotation of $\omega^k_{LSL[RSR]}(\alpha)$ by $\pi$ (rad)}
\nomenclature{$\alpha_{inf[sup]}^k$}{Infimum [supremum] of $\alpha$ for a given $k$ (rad)}
\nomenclature{$T_{Dubins}$}{Travel time of the optimal Dubins path (s)}
\nomenclature{$T_{2\pi[4\pi]}$}{Travel time of the $2\pi$-arc [$4\pi$-arc] path (s)}
\nomenclature{$\phi_{k_1, k_2}$}{Rotation of the line segment joining the centers of rotation of two path types with parameters $k_1$ and $k_2$, respectively (rad)}

\printnomenclature[55pt]

\section{Introduction}\label{sec:intro}
\subsection{Background}
A fundamental problem in robotics is to find the minimum-time path from a start pose to a goal pose while considering several constraints on vehicles such as bounded curvature~\cite{DM12}\cite{FS04}, bounded velocity~\cite{BM02}\cite{LK08} and bounded acceleration~\cite{RP94}\cite{bestaoui1989line}. In particular, bounded curvature implies that the vehicle's turning is  subject to a non-zero minimum turning radius corresponding to its speed and maximum turn rate.

Dubins~\cite{D57}\cite{shkel2001classification} used a geometrical approach to show that in absence of obstacles, the shortest path for a curvature-constrained vehicle between a pair of poses must be one of the following six path types (also known as the Dubins curves): $LSL$, $RSR$, $LSR$, $RSL$, $LRL$ and $RLR$, where $L(R)$ refers to a left (right) turn with the maximum curvature, and $S$ indicates a straight line segment. Since each path type is composed of three segments, it is uniquely determined by three path parameters, which describe the angles of the circular arcs and the length of the straight line segment. Recently, the authors proposed the T$^\star$ algorithm~\cite{SGW19} which extended the Dubins approach to variable speed vehicles in obstacle-rich environments  for time-optimal risk-aware motion planning. However, when environmental currents (e.g., wind or ocean currents) are present, the vehicle trajectory can be significantly distorted~\cite{MG19}, resulting in a minimum-time trajectory which is different from the minimum-distance trajectory.

\begin{figure}[t]
    \centering
    \subfloat[$2\pi$-arc paths in the inertial frame (IF) and the current frame (CF).]{
        \includegraphics[width=0.99\columnwidth]{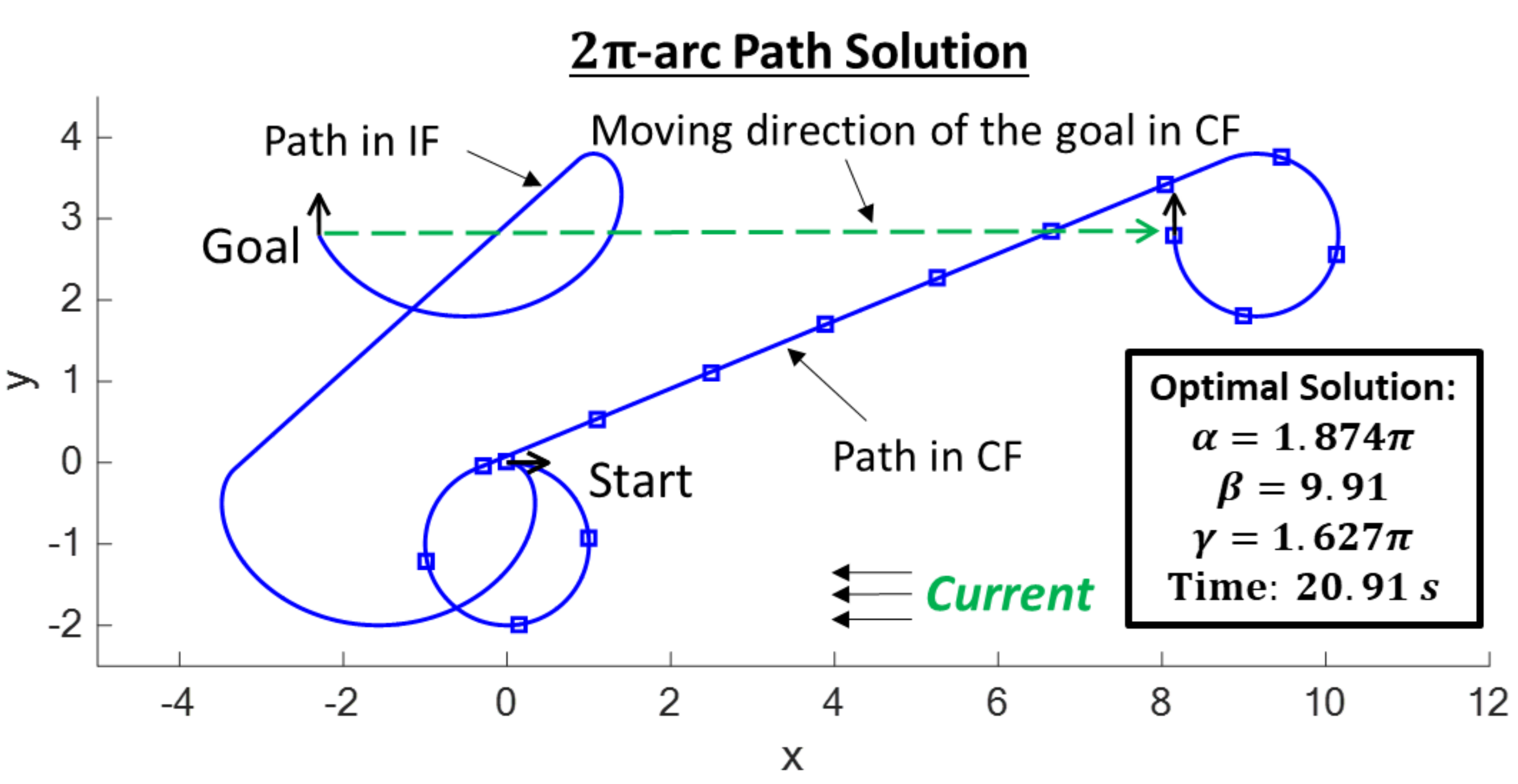}\label{fig:frontpage_fig1}}
        \\
    \subfloat[$4\pi$-arc paths in the inertial frame (IF) and the current frame (CF).]{
         \includegraphics[width=.99\columnwidth]{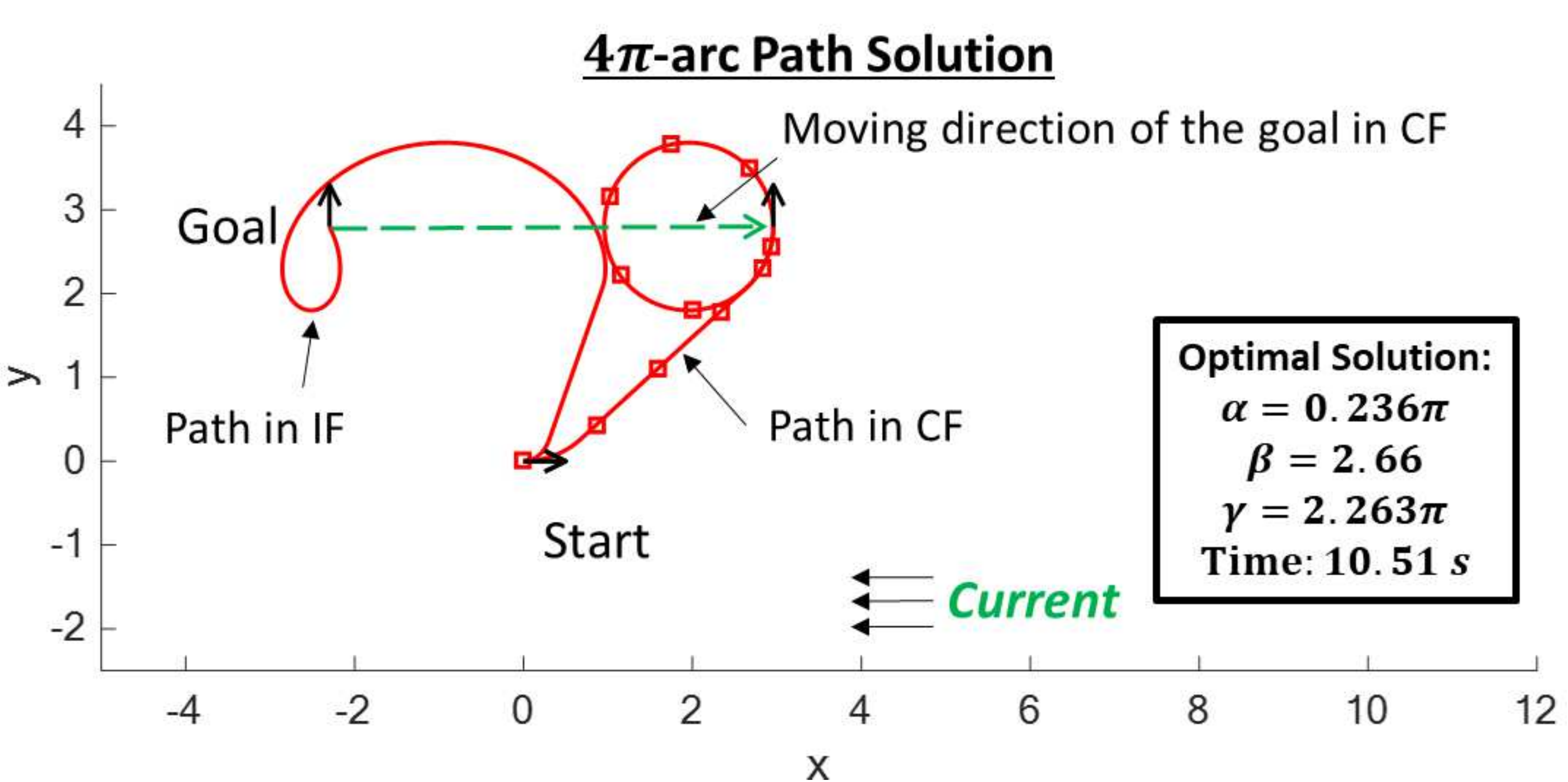}\label{fig:frontpage_fig2}}
         \vspace{-1pt}
    \caption{The minimum-time $2\pi$-arc paths vs. $4\pi$-arc paths. The current vector $(-0.5, 0)$, the start pose $(0,0,0)$ and the goal pose $(-2.3, 2.8, \pi/2)$.}\label{fig:frontpage_fig}
\vspace{-9pt}
\end{figure}

Along this line, the existing methods to compute the minimum-time trajectory for Dubins vehicles in the presence of  environmental currents can be categorized into two types: (1) solutions in the inertial frame (IF)~\cite{TW09} and (2) solutions in the current frame (CF)~\cite{MSH05}\cite{BT13}. The current frame is the inertial frame that moves at the speed and direction of the current. Fig.~\ref{fig:frontpage_fig1} shows the minimum-time Dubins path both in the IF and the CF. Due to the effect of current, the optimal Dubins path in the CF results in the distorted trochoidal path in the IF, therefore, the solutions in the IF have complex expressions~\cite{TW09}. A major advantage of using the CF is that the effect of current on the vehicle trajectory is completely encompassed by the motion of the reference frame, hence the path planning problem can be simplified to a moving-target interception problem using Dubins paths~\cite{MSH05}\cite{BT13}\cite{meyer2015dubins}\cite{ding2019curvature}. While details are discussed later, Fig.~\ref{fig:frontpage_fig2} shows the optimal paths obtained by our method in the CF and the IF.

\vspace{-12pt}
\subsection{The Real-time Challenge}
Although the above methods can produce the minimum-time trajectory for Dubins vehicles in the presence of static currents, their real-time application is limited due to their computational complexity.  As shown in~\cite{TW09}\cite{BT13}, the existing approaches require to solve for all six Dubins path types to find the minimum-time trajectory. Out of these six path types, only $LSL$ and $RSR$ paths have analytical solutions, while the remaining four path types require to solve a root-finding problem involving transcendental equations, which demand significant computational efforts. However, in dynamic situations (e.g., changing currents, adaptive exploration~\cite{SGH13}\cite{GRP09} and target tracking~\cite{HGW19}) it is critical to obtain a real-time solution for fast replanning, which is the focus of this paper.

\begin{figure}[t]
    \centering
    \includegraphics[width=0.90\columnwidth]{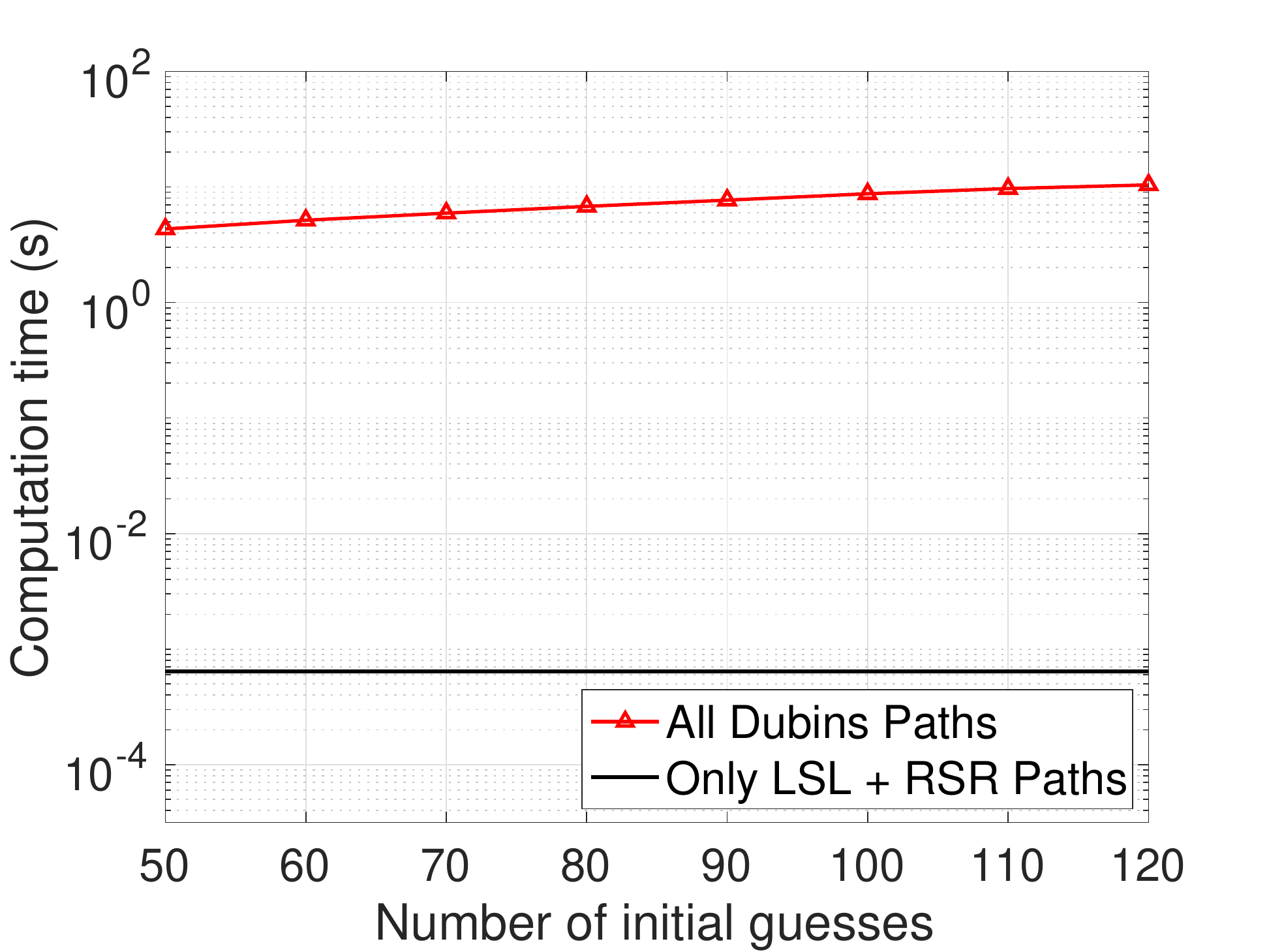}
    \caption{Mean computation times for $LSL$ and $RSR$ paths as compared to all six Dubins paths, over $1000$ randomly selected start and goal poses in a steady current environment, on a $2.4$~GHz CPU computer with $8$~GB RAM.}\label{fig:frontpage_computationtime} \vspace{-6pt}
\end{figure}

To motivate this further, we generated the computation time required to obtain the minimum-time path from all six Dubins path types, as shown in Fig.~\ref{fig:frontpage_computationtime}. Also, we compared this to the computation time required to get the minimum-time path from only the $LSL$ and $RSR$ path types. These computation times were obtained by averaging over $1000$ randomly selected start and goal poses in an environment with steady currents. The simulations were run in MATLAB on a computer with $2.4$~GHz CPU and $8$~GB RAM. It is seen that using only $LSL$ and $RSR$ paths takes $\sim6.4\times 10^{-4}$~s to get a solution. In contrast, using all six path types takes several orders of magnitude higher time to solve the transcendental equations. Furthermore, for practical applications, these numbers can become significantly larger for less powerful on-board processors. Moreover, these computation times depend on the non-linear solvers used.  In addition, the implementation of these optimization solvers on on-board processors is challenging as compared to a system of equations with analytical solutions.

\textit{Example}: The potential implications of computation times are shown with an example. Consider an underwater vehicle moving at 2.5~m/s in an environment with a time-varying current with a speed of $2$~m/s. Now, suppose the current changes direction towards that of the vehicle motion, then a new path needs to be computed. Suppose that it takes $\sim8.72$~s for the on-board processor to get a solution using all six path types. Then, the vehicle would drift by a distance of $ 8.72 \cdot (2+2.5)=39.24$~m before it could compute a new path. In comparison, if it uses only $LSL$ and $RSR$ path types, then this drift would be as little as $6.4\times 10^{-4}\cdot (2+2.5)= 0.0029$~m. Thus, computation time plays a crucial role in real-time path planning in dynamic environments.

\vspace{-6pt}
\subsection{Our Approach} \label{ourapproach}

Based on the above discussion, we propose a rapid (real-time) analytical solution as described below.

\vspace{6pt}
\subsubsection{\textbf{Proposed solution using $4\pi$-arc $LSL$ and $RSR$ paths}}
We propose a solution in the CF using only the $LSL$ and $RSR$ path types. However, the limitation of using only this subset of path types is the lack of full reachability, i.e., they cannot reach every goal pose in the presence of currents. To overcome the above limitation, we propose a simple yet powerful technique. Instead of using the regular $LSL$ and $RSR$ paths where the arc angles are within a range of $[0, 2\pi)$, we propose to extend their arc range to $[0, 4\pi)$~\cite{mittal2019real}. Accordingly, we define the concepts of $2\pi$-arc and $4\pi$-arc paths below, where the parameters $\alpha$ ($\gamma$) and $\beta$ refer to the turning angle of the first (second) arc and the length of the straight line segment, respectively.

\begin{defn}[\textbf{$2\pi$-arc Path}]\label{defn:conventional_path}
An $L^\alpha S^\beta L^\gamma$ or $R^\alpha S^\beta R^\gamma$ path is called a \textit{$2\pi$-arc} path, if $\alpha \in [0,2\pi)$ and $\gamma \in [0,2\pi)$.
\end{defn}

\begin{defn}[\textbf{$4\pi$-arc Path}]\label{defn:unconventional_path}
An $L^\alpha S^\beta L^\gamma$ or $R^\alpha S^\beta R^\gamma$ path is called a \textit{$4\pi$-arc} path, if $\alpha \in [0, 4\pi)$ and $\gamma \in [0, 4\pi)$.
\end{defn}

\begin{rem} The six Dubins path types use the $2\pi$-arcs.
\end{rem}

\begin{rem}It is shown that the $4\pi$-arc $LSL$ and $RSR$ paths provide full reachability along with reduced total time costs as compared to the $2\pi$-arc $LSL$ and $RSR$ paths.
\end{rem}

\textit{Example}: Figs.~\ref{fig:frontpage_fig1} and \ref{fig:frontpage_fig2} show the minimum-time $2\pi$-arc and $4\pi$-arc paths, respectively, in both the IF and the CF. Fig.~\ref{fig:frontpage_fig1} shows the optimal $2\pi$-arc path, which is a $RSR$ path with the total time cost of $20.91$~s. In comparison,  Fig.~\ref{fig:frontpage_fig2} shows the optimal $4\pi$-arc path, which is a $LSL$ path with $\gamma = 2.263\pi > 2\pi$ and the total time cost of $10.51$~s. Intuitively, this happens because instead of traveling against the current, the vehicle spends more time on arcs which allows the current to help it  to reach the goal in less time.
\vspace{6pt}

\subsubsection{\textbf{Theoretical analysis of $4\pi$-arc $LSL$ and $RSR$ paths}} We present a rigorous theoretical analysis of the properties of $4\pi$-arc $LSL$ and $RSR$ paths. First, we develop a comprehensive procedure for reachability analysis of the $2\pi$-arc $LSL$ and $RSR$ paths. We present the conditions for full reachability using these two path types with the support from Lemmas~\ref{lem:swipe}$-$\ref{lem:slopeproperty3}. The derivation of these conditions and the proofs of supporting lemmas are provided in Appendices~\ref{app:reachability} and~\ref{app:lemma_proofs}, respectively. Next, it is numerically validated that the $2\pi$-arc $LSL$ and $RSR$ paths fail to satisfy the reachability conditions under all goal poses and current velocities. Thus, we present  Theorem~\ref{claim1}, which provides a guarantee of full reachability using $4\pi$-arc $LSL$ and $RSR$ paths. Further, it is established through Theorem~\ref{claim2} and Corollary~\ref{claim2_cor}, that the computational complexity of both $2\pi$-arc and $4\pi$-arc path solutions is the same. Along with providing full reachability, another important benefit of $4\pi$-arc paths is their ability to generate faster, i.e., reduced time cost, paths in comparison to the $2\pi$-arc paths, which is highlighted in Theorem~\ref{claim3}. Finally, Theorem~\ref{rem:over_4pi} is presented to prove that $\alpha,\gamma\in[0, 4\pi)$ is sufficient for optimality using $LSL$ and $RSR$ path types and thus further increasing of range is not needed. For validation of our approach, extensive Monte Carlo simulations are performed to compare the performance of Dubins solutions and the proposed $4\pi$-arc path solutions.

\vspace{6pt}
\subsubsection{\textbf{Comparison of $4\pi$-arc $LSL$ and $RSR$ paths with Dubins}} The solution obtained from the $4\pi$-arc $LSL$ and $RSR$ paths might be sub-optimal for certain goal poses as compared to the one obtained from  the six Dubins path types; however, the longer convergence time of the Dubins path solution might render it unsuitable for real-time applications.

For offline applications in static current environments, one can use the Dubins path types to compute the minimum-time path. In this regard, Section~\ref{app:dubins_comparison} provides a detailed comparison of the solution quality (i.e., travel time cost) obtained for the $4\pi$-arc $LSL$ and $RSR$ solutions and the Dubins solutions. This analysis indicates that the advantage  of the Dubins solutions over the $4\pi$-arc $LSL$ and $RSR$ solutions in terms of travel time costs is not significant. Furthermore, upon adding the computation time costs, the advantage of Dubins solutions is further reduced. On the other hand, for time critical real-time applications (e.g., target tracking, planning under moving obstacles, and changing currents), $4\pi$-arc paths provide rapid and reliable solutions without causing any vehicle drift. In contrast, the high computation times for Dubins solutions can cause vehicle drifts, thereby, resulting in longer sub-optimal trajectories which sometimes do not even converge to the goal pose. Section~\ref{changingcurrents} presents a comparative analysis in the presence of dynamic currents, which highlights the benefits of the solutions obtained from the $4\pi$-arc $LSL$ and $RSR$ paths over the ones obtained from the six Dubins paths.

\vspace{-6pt}
\subsection{Our Contributions} The paper makes the following novel contributions:
\begin{itemize}
\item Provides an analytical solution of the path planning problem for Dubins vehicles under environmental currents, where the solution is based on a novel concept of $4\pi$-arc $LSL$ and $RSR$ paths and can be computed in  real-time. In this regard, the paper presents the following:
\begin{itemize}
\item A detailed analytical method to construct the reachability graphs of $LSL$ and $RSR$ paths.
\item A detailed derivation of the conditions under which $2\pi$-arc $LSL$ and $RSR$ paths provide full reachability.
\item A mathematical proof of full reachability of the $4\pi$-arc $LSL$ and $RSR$ paths under all conditions unlike the corresponding $2\pi$-arc paths (\textbf{Theorem~\ref{claim1}}).
\item A mathematical proof that a solution using $4\pi$-arc $LSL$ and $RSR$ paths can be obtained with the same computational workload as that needed for $2\pi$-arc paths (\textbf{Theorem~\ref{claim2} and  Corollary~\ref{claim2_cor}}),
\item  A mathematical proof that $4\pi$-arc $LSL$ and $RSR$ paths provide reduced travel time costs as compared to the corresponding $2\pi$-arc paths. (\textbf{Theorem~\ref{claim3}}).
\end{itemize}
\item Theoretical properties of $4\pi$-arc $LSL$ and $RSR$ paths are rigorously established and  evaluated  in  comparison  to  Dubins solutions  by  extensive  Monte-Carlo  simulations.
\end{itemize}

\vspace{-6pt}
\subsection{Organization}
The rest of the paper is organized as follows. Section~\ref{sec:litreview} reviews the existing literature. Section~\ref{sec:problemandsolution} presents the path planning problem and its analytical solution. Section~\ref{sec:reachabilityanalysis} presents a detailed analytical procedure for the reachability analysis of the 2$\pi$-arc $LSL$ and $RSR$ paths. Section~\ref{sec:main_contents} presents the theoretical properties of $4\pi$-arc paths and shows their advantages over the $2\pi$-arc paths. Section~\ref{sec:results} presents the comparative evaluation results. Finally, the paper is concluded in Section~\ref{sec:conclusion} with recommendations for future work. Appendices~\ref{app:reachability} and~\ref{app:lemma_proofs} provide proofs of reachability conditions and supporting lemmas.

\vspace{3pt}
\section{Literature Review}\label{sec:litreview}
Recently, several papers~\cite{zeng2016comparison} have addressed the path planning problem in the presence of  currents. Garau et al.~\cite{GBARP09}\cite{GAO05} studied the minimum-time path planning problem in marine environments with spatial current variability, where the time cost was defined as the sum of step-wise costs that are specified by the traveling distance over the vehicle speed in the presence of ocean currents. However, the drawback in their design is that infeasible paths are penalized rather than being prohibited. Petres et al.~\cite{PPPPEL07} presented the FM$^\star$ algorithm to find the minimum-time path for underwater vehicles, where the time cost is defined over the inner product of the distance function and the current field; however, their cost function still penalizes rather than restricts infeasible paths. In this regard, Soulignac et al.~\cite{STR09} proposed a time cost function that projects the speed vector to both axes as opposed to taking its norm as in~\cite{GAO05}. Accordingly, their method is restricted to feasible paths. In addition, energy based cost functions~\cite{ACO04}\cite{ZIOM08} have also been used for planning in the presence of ocean currents.

However, the above-mentioned methods ignore any kinematic motion constraints for vehicles. Along this line, Techy and Woolsey~\cite{TW09} addressed the minimum-time path planning problem for a curvature-constrained vehicle in constant wind, based on the fact that the circular arcs are distorted by the wind into the trochoidal curves in the inertial frame. They derived analytical solutions for $LSL$ and $RSR$ candidate paths, while for other paths of $LSR$, $RSL$, $LRL$ and $RLR$, they must solve certain transcendental equations to obtain solutions. However, as we show in Fig.~\ref{fig:frontpage_computationtime}, the root finding problem for transcendental equations can be computationally expensive.

In contrast, McGee et al.~\cite{MSH05} studied the minimum-time path planning problem in the current frame. They first used Pontryagin's Minimum Principle to demonstrate that the optimal path is comprised of straight line segments and curves of maximum turn rate. Then, they introduced the concept of a "virtual target" which starts at the goal state but moves in the opposite direction as the wind. In this setup, the minimum-time problem is simplified into a target interception problem, where the objective is to find the earliest interception point in the current frame so that the Dubins path can meet with the virtual target in minimum time. However, one must repeatedly check for the validity of possible interception points, which can be arbitrarily heavy to compute if the actual interception point lies far from the beginning search point.

In this regard, Bakolas et al.~\cite{BT13} directly solved for the interception point in the current frame by introducing an extra parameter of interception time. They also showed that when the wind speed is less than the vehicle speed, the vehicle has full reachability, i.e., the optimal path always exists for any given goal pose. However, their solution methodology still involves solving for the roots of multiple transcendental equations, which could lead to heavy computational burden, thus prohibiting it from real-time applications.

Some researchers used the Nonlinear Trajectory Generation (NTG) algorithm~\cite{inanc2005} based on spline curves to obtain the optimal trajectory of a glider with kinematic constraints in presence of dynamically varying ocean currents. The proposed algorithm relies on Sequential Quadratic Programming (SQP) approach to solve the nonlinear programming problem which might lead to sub-optimal solutions and high computational time. In comparison, this paper proposes a novel method which provides a rapid analytical solution to the path planning problem under currents with guaranteed full reachability.

\vspace{0pt}
\section{Problem Description and Solution}\label{sec:problemandsolution}
This section presents the minimum-time path planning problem for Dubins vehicles and its analytical solution.

\vspace{0pt}
\subsection{Problem Description}\label{sec:problem}
Consider a vehicle moving  at a velocity ${\bf v} =(v\cos{\theta}, v\sin{\theta})$, where $v\in \mathbb{R}^+$ is its speed and $\theta\in [0, 2\pi)$ is its heading. A steady current is assumed to be present in the environment with velocity ${\bf v}_w = (v_w\cos{\theta_w}, v_w\sin{\theta_w})\equiv (w_x, w_y) $, where $v_w \in \mathbb{R}^+$ is its speed and $\theta_w \in [0, 2\pi)$ is its direction. The current speed is assumed to be slower than the vehicle speed, i.e., $v_w < v$. Then, the motion of the vehicle can be described as:

\vspace{-3pt}
\begin{equation}
  \begin{cases}
      \dot{x}(t) &=  v \cdot \cos{\theta(t)} + w_x \\
      \dot{y}(t) &=  v \cdot \sin{\theta(t)} + w_y \\
      \dot{\theta}(t) &=  u(t) \label{eq:vehicle_model_currents}
  \end{cases},
\end{equation}
where $\mathbf{p} = (x,y,\theta) \in SE(2)$ is the vehicle pose and $u$ indicates its turn rate. By choosing a proper unit, the vehicle speed can be normalized to $v = 1$. The turn rate $u$ is symmetric and bounded, s.t., $u \in [-u_{\max}, u_{\max}]$, where $u_{\max} \in \mathbb{R}^+$ is the maximum turn rate and the $+$/$-$ sign indicates a left/right turn. These constraints imply that the vehicle is subject to the minimum turning radius of $r= 1/u_{\max}$ (for $v=1$).

Then, for a vehicle operating in a current environment, as described in~(\ref{eq:vehicle_model_currents}), the objective is to find the minimum-time path from a start pose $\mathbf{p}_{start} = (x_0, y_0, \theta_0)$ to a goal pose $\mathbf{p}_{goal} = (x_f, y_f, \theta_f)$. The state-of-the-art solutions~\cite{TW09}\cite{MSH05}\cite{BT13} to this problem require to solve for all six Dubins path types to find the minimum-time path. However, as shown in (34) and (39) of~\cite{BT13}, in order to obtain the path types of $LSR$, $RSL$, $LRL$ and $RLR$, one must solve a root-finding problem involving transcendental equations for numerical solutions. This inevitably requires significant computation resources and thus can seriously restrict their usage in real-time applications.

In this regard, in order to achieve a real-time solution, we address the above problem using only two path types which have direct analytical solutions. These are $L^{\alpha}S^{\beta}L^{\gamma}$ and $R^{\alpha}S^{\beta}R^{\gamma}$, where $\alpha$ and $\gamma$ are the turning angles of the first and second arc segments, respectively; and $\beta \geq 0$ denotes the length of the straight line segment. Thus, the solution for each path type is uniquely determined by the 3-tuple $\{\alpha, \beta, \gamma\}$ of path parameters. Since these parameters can be solved analytically, the solution is obtained very fast (in real-time).

However, due to using only a subset of the Dubins path types, there exist goal poses for which neither $LSL$ nor $RSR$ path can provide feasible solutions, i.e., $LSL$ and $RSR$ paths do not provide full reachability. To address this issue, we extend the feasible ranges of $\alpha$ and $\gamma$ from $[0,2\pi)$ to $[0,4\pi)$. It is shown later that the extended $LSL$ and $RSR$ path types guarantee full reachability, and can provide the solutions with even less time costs.

\vspace{-6pt}
\subsection{Solutions for the $LSL$ and $RSR$ Paths}\label{sec:method}
This section derives the analytical solutions for the parameters of the $LSL$ and $RSR$ path types using the CF, which moves with the same speed and direction as that of the current. In the  CF, the goal moves in the opposite direction with $(-w_x, -w_y)$. Thus, the problem is  simplified to a moving-target interception  problem. Therefore, the objective is to find the minimum interception time to meet with the moving goal using Dubins  $LSL$ and $RSR$ paths. Without loss of generality, we choose the start pose $(x_0, y_0, \theta_0) = (0,0,0)$.

\vspace{6pt}
\subsubsection{$L^{\alpha}S^{\beta}L^{\gamma}$ Path}\label{sec:LSL}
As seen in Fig.~\ref{fig:LSL_derivation}, in order to reach the goal $(x_f,y_f,\theta_f)$ in the CF, the following boundary constraints must be satisfied for an $LSL$ path~\cite{BT13}:
\begin{equation}\label{eq:LSL_conditions}
  \begin{cases}
      x_f - w_x T &=  r \sin\theta_f + \beta \cos\alpha \\
      y_f - w_y T &=  r (1-\cos\theta_f) + \beta \sin\alpha \\
      T &=  \big(r (\alpha + \gamma) + \beta \big)/v \\
      \alpha + \gamma &= 2 k \pi + \theta_f
  \end{cases},
\end{equation}
where $v = 1$ and $T \in \mathbb{R}^+$ is the total travel time.

\begin{figure}[t]
    \flushleft
    \hspace*{-1em} \subfloat[$L^\alpha S^\beta L^\gamma$ path]{
        \includegraphics[width=.520\columnwidth]{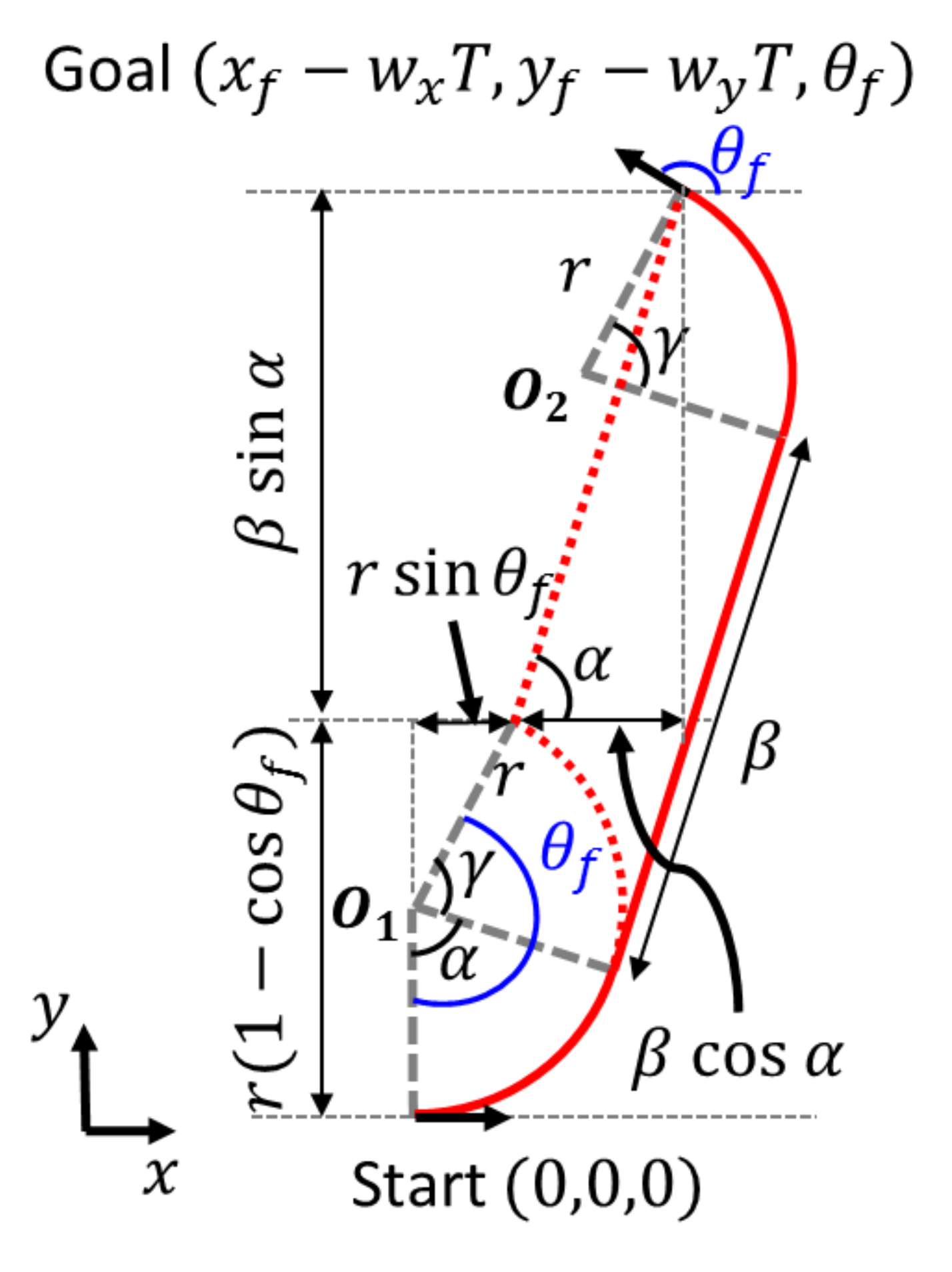}\label{fig:LSL_derivation}}
     \hspace*{-1.2em} \subfloat[$R^\alpha S^\beta R^\gamma$ path]{
         \includegraphics[width=.520\columnwidth]{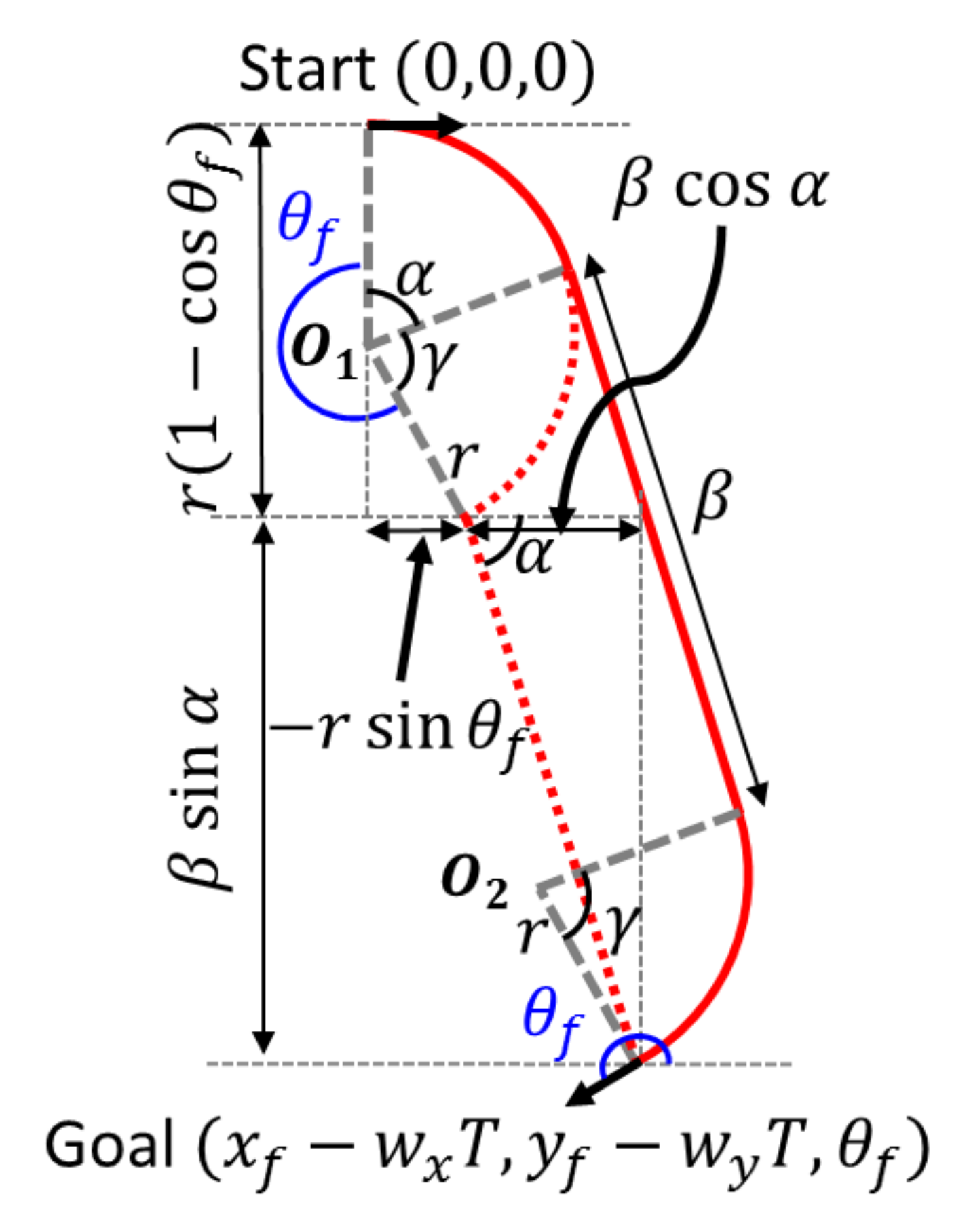}\label{fig:RSR_derivation}}
         \caption{Geometric illustration for $LSL$ and $RSR$ paths.} \label{fig:LSL_RSR_derivation}
         \vspace{-12pt}
\end{figure}

In addition, we introduce $k\in \mathbb{Z}$ to control the feasible ranges of $\alpha$ and $\gamma$. Specifically, for a $2\pi$-arc $LSL$ path, since $\theta_f \in [0, 2\pi)$ and $\alpha, \gamma \in [0, 2\pi)$, one has $k \in \{0, 1\}$. In contrast, for a $4\pi$-arc $LSL$ path, since $\alpha, \gamma \in [0, 4\pi)$, one has $k \in \{0,1,2,3\}$. Note: We show later that we need only $k \in \{0,1\}$ to find a feasible minimum-time $4\pi$-arc $LSL$ path.

Now, for a given $k$, define $A^k$ and $B^k$ as follows:
\begin{equation}\label{eq:LSL_AB}
  \begin{cases}
      A^k &=  x_f - r \sin \theta_f - w_x r (2k\pi + \theta_f) \\
      B^k &=  y_f -r (1- \cos \theta_f) - w_y r(2k\pi +  \theta_f)\\
  \end{cases},
\end{equation}
which are constants that can be computed given the current velocity, and the start and goal poses. Then, using (\ref{eq:LSL_conditions}) and (\ref{eq:LSL_AB}), we get:

\vspace{-3pt}
\begin{equation}\label{eq:LSL_AB_condition}
  \begin{cases}
      A^k &=  \beta \cos \alpha + w_x \beta \\
      B^k &=  \beta \sin \alpha + w_y \beta \\
  \end{cases}.
\end{equation}

Based on (\ref{eq:LSL_AB_condition}), we can compute $\beta$ by solving the quadratic equation $\big(A^k -w_x \beta \big)^2 + \big( B^k -w_y\beta \big)^2 = \beta^2$, such that

\begin{equation}\label{eq:LSL_beta}
\beta = \frac{\pm\sqrt{(A^k w_x + B^k w_y)^2 + ({A^k}^2 + {B^k}^2)(1-v_w^2)} - (A^k w_x + B^k w_y)}{1-v_w^2}.
\end{equation}

It is seen from (\ref{eq:LSL_beta}) that when $v_w < 1$, $\beta$ has valid solutions. Then, $\alpha$ can be computed as

\vspace{-3pt}
\begin{equation}
\alpha = \atantwo \big( B^k - \beta w_y, A^k -\beta w_x \big)  (\textrm{mod} \ \kappa),
\label{eq:LSL_alpha}
\end{equation}
where $\kappa = 2\pi$ for $2\pi$-arc paths, and $\kappa = 4\pi$ for $4\pi$-arc paths. Thereafter, $\gamma$ is computed as $\gamma = 2k\pi + \theta_f - \alpha$ (mod $\kappa$).

\vspace{10pt}
\subsubsection{$R^{\alpha}S^{\beta}R^{\gamma}$ Path}\label{sec:RSR}
As seen in Fig.~\ref{fig:RSR_derivation}, the following boundary constraints must be satisfied for an $RSR$ path:

\vspace{-3pt}
\begin{equation}\label{eq:RSR_conditions}
  \begin{cases}
      x_f - w_x T &=  -r \sin\theta_f + \beta \cos\alpha\\
      y_f - w_y T &=  -r (1-\cos\theta_f) - \beta \sin\alpha \\
      T &=  \big( r (\alpha + \gamma) + \beta \big) /v \\
      -\alpha - \gamma & = 2 k \pi + \theta_f
  \end{cases}.
\end{equation}

For a $2\pi$-arc $RSR$ path, since $\theta_f \in [0, 2\pi)$ and $\alpha, \gamma \in [0, 2\pi)$, one has $k \in \{-1,-2\}$; while for a $4\pi$-arc $RSR$ path, because $\alpha, \gamma \in [0,4\pi)$, one has $k \in \{-1, -2, -3, -4\}$. Note: We show later that we need only $k \in \{-1,-2\}$ to find a feasible minimum-time $4\pi$-arc $RSR$ path. Now, define
\vspace{-3pt}
\begin{equation}\label{eq:RSR_AB}
  \begin{cases}
      A^k &=  x_f + r \sin \theta_f + w_x r(2k\pi + \theta_f)  \\
      B^k &=  y_f + r (1- \cos \theta_f) + w_y r(2k\pi + \theta_f)  \\
  \end{cases},
\end{equation}
and using (\ref{eq:RSR_conditions}) and (\ref{eq:RSR_AB}), we get:

\vspace{-3pt}
\begin{equation}\label{eq:RSR_AB_condition}
  \begin{cases}
      A^k &=  \beta \cos \alpha + w_x \beta \\
      B^k &=  -\beta \sin \alpha + w_y \beta \\
  \end{cases}.
\end{equation}

Then, $\beta$ is solved using $\big(A^k - w_x\beta \big)^2 + \big( B^k - w_y\beta \big)^2 = \beta^2$, which results in the same expression as (\ref{eq:LSL_beta}). Similarly, when $v_w <1$, $\beta$ has valid solutions. Then, $\alpha$ can be computed as
\begin{equation}
\alpha = \atantwo(-B^k + \beta w_y, A^k -\beta w_x) \ (\text{mod} \ \kappa),
\label{eq:RSR_alpha}
\end{equation}
and $\gamma$ is computed as $\gamma = - 2k\pi -\theta_f - \alpha$  (mod $\kappa$).

\vspace{10pt}
\subsection{Feasible Ranges of Path Parameters}\label{sec:tight_bounds}
According to Defn.~\ref{defn:conventional_path} and Defn.~\ref{defn:unconventional_path}, the parameters $\alpha$ and $\gamma$ are defined over $[0, 2\pi)$ and $[0, 4\pi)$ for $2\pi$-arc paths and $4\pi$-arc paths, respectively. Given the direction $\theta_f \in [0, 2\pi)$ of the goal pose, we can obtain tighter feasible ranges for $\alpha$ and $\gamma$. Table~\ref{table:range_conventional}  shows the feasible ranges of path parameters for both $2\pi$-arc and $4\pi$-arc paths.  An example is provided below.

\vspace{6pt}
\textit{Example}: Consider a $4\pi$-arc $LSL$ path, where $\alpha \in [0, 4\pi)$ and $\gamma \in [0, 4\pi)$. There are four cases to study:

\begin{itemize}
\item $k = 0$ (i.e., $\alpha + \gamma = \theta_f < 2\pi$): Now, $\gamma \geq 0$ $\implies$ $\alpha \leq \theta_f$.
Similarly, $\alpha \geq 0$ $\implies$ $\gamma \leq \theta_f$. Thus, the feasible range for both $\alpha$ and $\gamma$ is $[0,\theta_f]$.

\item $k = 1$ (i.e., $\alpha + \gamma = 2\pi + \theta_f< 4\pi$): Again, $\gamma \geq 0$ $\implies$ $\alpha \leq 2\pi + \theta_f$. Similarly, $\alpha \geq 0$ $\implies$ $\gamma \leq  2\pi + \theta_f$. Thus, the feasible range for both $\alpha$ and $\gamma$ is $[0, 2\pi + \theta_f]$.

\item $k = 2$ (i.e., $\alpha + \gamma =  4\pi + \theta_f < 6\pi$): Here $\gamma < 4\pi$ $\implies$ $\alpha > \theta_f$. Similarly, $\alpha < 4\pi$ $\implies$ $\gamma > \theta_f$. Thus, the feasible range for both $\alpha$ and $\gamma$ is $(\theta_f, 4\pi)$.

\item $k = 3$ (i.e., $\alpha + \gamma = 6\pi + \theta_f < 8\pi$): Here $\gamma < 4\pi$ $\implies$ $\alpha > 2\pi + \theta_f$. Similarly, $\alpha < 4\pi$ $\implies$ $\gamma > 2\pi + \theta_f$. Thus, the feasible range for both $\alpha$ and $\gamma$ is $(2\pi + \theta_f, 4\pi)$.
\end{itemize}

Similarly, we can obtain the feasible range of path parameters for $4\pi$-arc $RSR$ path and for $2\pi$-arc $LSL$ and $RSR$ paths.

{
\begin{table}[t!]
\centering
\footnotesize
\caption{Feasible parameter ranges for $2\pi$-arc and $4\pi$-arc paths}\label{table:range_conventional}
\begin{tabular}{c|cc|c|cc}
\hline
\multicolumn{6}{c}{$2\pi$-arc Paths ($\alpha,\gamma$ ranges are up to mod $2\pi$)}\\ \hline
\multicolumn{3}{c|}{$LSL$ Path Type} & \multicolumn{3}{c}{$RSR$ Path Type} \\ \hline
$k$ & $\alpha$ and $\gamma$ & $\beta$ & $k$ & $\alpha$ and $\gamma$ & $\beta$ \\ \hline
$0$ & $[0, \theta_f]$ & $[0, \infty)$ & $-1$ & $[0, 2\pi-\theta_f]$  & $[0, \infty)$ \\ \hline
$1$ & $(\theta_f, 2\pi)$ & $[0, \infty)$ & $-2$ & $(2\pi-\theta_f, 2\pi)$ & $[0, \infty)$ \\ \hline
\multicolumn{6}{c}{$4\pi$-arc Paths ($\alpha,\gamma$ ranges are up to mod $4\pi$)}\\
\hline
\multicolumn{3}{c|}{$LSL$ Path Type} & \multicolumn{3}{c}{$RSR$ Path Type} \\ \hline
$k$ & $\alpha$ and $\gamma$ & $\beta$ & $k$ & $\alpha$ and $\gamma$ & $\beta$ \\ \hline
$0$ & $[0, \theta_f]$ & $[0, \infty)$ & $-1$ & $[0, 2\pi-\theta_f]$ & $[0, \infty)$ \\ \hline
$1$ & $[0, 2\pi + \theta_f]$ & $[0, \infty)$ & $-2$ & $[0, 4\pi -\theta_f]$ & $[0, \infty)$ \\ \hline
$2$ & $(\theta_f, 4\pi)$ & $[0, \infty)$ & $-3$ & $(2\pi-\theta_f, 4\pi)$ & $[0, \infty)$ \\ \hline
$3$ & $(2\pi+\theta_f, 4\pi)$ & $[0, \infty)$ & $-4$ & $(4\pi-\theta_f, 4\pi)$ & $[0, \infty)$ \\ \hline
\end{tabular}
\end{table}
}

\vspace{6pt}
\section{Reachability Analysis of $2\pi$-arc Paths}\label{sec:reachabilityanalysis}

This section derives the analytical expressions for generating the reachability graphs of $2\pi$-arc $LSL$ and $RSR$ path types and for finding the conditions of full reachability.

\subsection{Construction of Reachability Graphs}
\label{sec:reachability_graphs}
First, we show that for a given $\alpha$, the reachable goal points $(x_f,y_f)$ lie on a ray. Then, we show that by varying $\alpha$, this ray rotates to form the reachability graph.

\vspace{10pt}
$\bullet$ \textit{\textbf{$2\pi$-arc $LSL$ Paths:}} Let us denote
\begin{subequations}\label{eq:p_q_LSL}
\begin{align}
p^k_{LSL} & \equiv  r\sin{\theta_f} + w_{x}r(2k\pi + \theta_f), \\
q^k_{LSL} & \equiv r(1-\cos{\theta_f}) + w_{y}r(2k\pi + \theta_f),
\end{align}
\end{subequations}
which are constants for $k \in \{0,1\}$ given $\theta_f, w_x$ and $w_y$. Further, let us denote
\begin{subequations}\label{eq:def_a_c}
\begin{align}
    a(\alpha)\equiv \sin{\alpha} + w_y,\\
    c(\alpha) \equiv \cos{\alpha} + w_x.
\end{align}
\end{subequations}
Then, using (\ref{eq:LSL_AB}), (\ref{eq:LSL_AB_condition}), (\ref{eq:p_q_LSL}) and (\ref{eq:def_a_c}) we get:
\begin{subequations}\label{eq:LSL_beta_two_new}
\begin{align}
    x_f &= p^k_{LSL} + \beta \cdot c(\alpha), \label{eq:LSL_beta_1} \\
    y_f &= q^k_{LSL} + \beta \cdot a(\alpha). \label{eq:LSL_beta_2}
\end{align}
\end{subequations}
By performing $a(\alpha)\cdot$(\ref{eq:LSL_beta_1})$- c(\alpha)\cdot$(\ref{eq:LSL_beta_2}), (\ref{eq:LSL_beta_two_new}) is equivalent to the following:

\vspace{-6pt}
\begin{empheq}[box=\widefbox]{align}\label{eq:LSL_reachability}
  a(\alpha) x_f - c(\alpha) y_f - \big(a(\alpha) p^k_{LSL} - c(\alpha) q^k_{LSL} \big) = 0, \nonumber \\
  \text{s.t.: } x_f \geq p_{LSL}^k, y_f \geq q_{LSL}^k \text{, if } a(\alpha) \geq 0, c(\alpha) \geq 0,  \nonumber\\
   x_f < p_{LSL}^k, y_f \geq q_{LSL}^k \text{, if } a(\alpha) \geq 0, c(\alpha) < 0, \\
    x_f < p_{LSL}^k, y_f < q_{LSL}^k \text{, if } a(\alpha) < 0, c(\alpha) < 0, \nonumber \\
  x_f \geq p_{LSL}^k, y_f < q_{LSL}^k \text{, if } a(\alpha) < 0, c(\alpha) \geq 0. \nonumber
\end{empheq}

\vspace{6pt}
The constraints in (\ref{eq:LSL_reachability}) are obtained by using the feasible range of $\beta \geq 0$ in (\ref{eq:LSL_beta_1}) and (\ref{eq:LSL_beta_2}). As shown in Fig. \ref{fig:quadrants}, these constraints define the quadrants of the coordinate frame with center at $\left(p^k_{LSL},q^k_{LSL}\right)$. For a given $\alpha$, (\ref{eq:LSL_reachability}) represents a reachability ray and the goal $(x_f,y_f)$ is reachable if it lies on such ray. The rotation of (\ref{eq:LSL_reachability}), i.e., the angle it makes with the x-axis measured in the counterclockwise direction, is given as
\begin{empheq}[box=\widefbox]{align}\label{eq:LSL_slope}
\omega^{k}_{LSL}(\alpha)=\atantwo \big(a(\alpha),c(\alpha)\big)  \Mod{2\pi}, \ k \in \{0,1\}.
\end{empheq}

\begin{figure}[t]
    \centering
        \includegraphics[width=0.80\columnwidth]{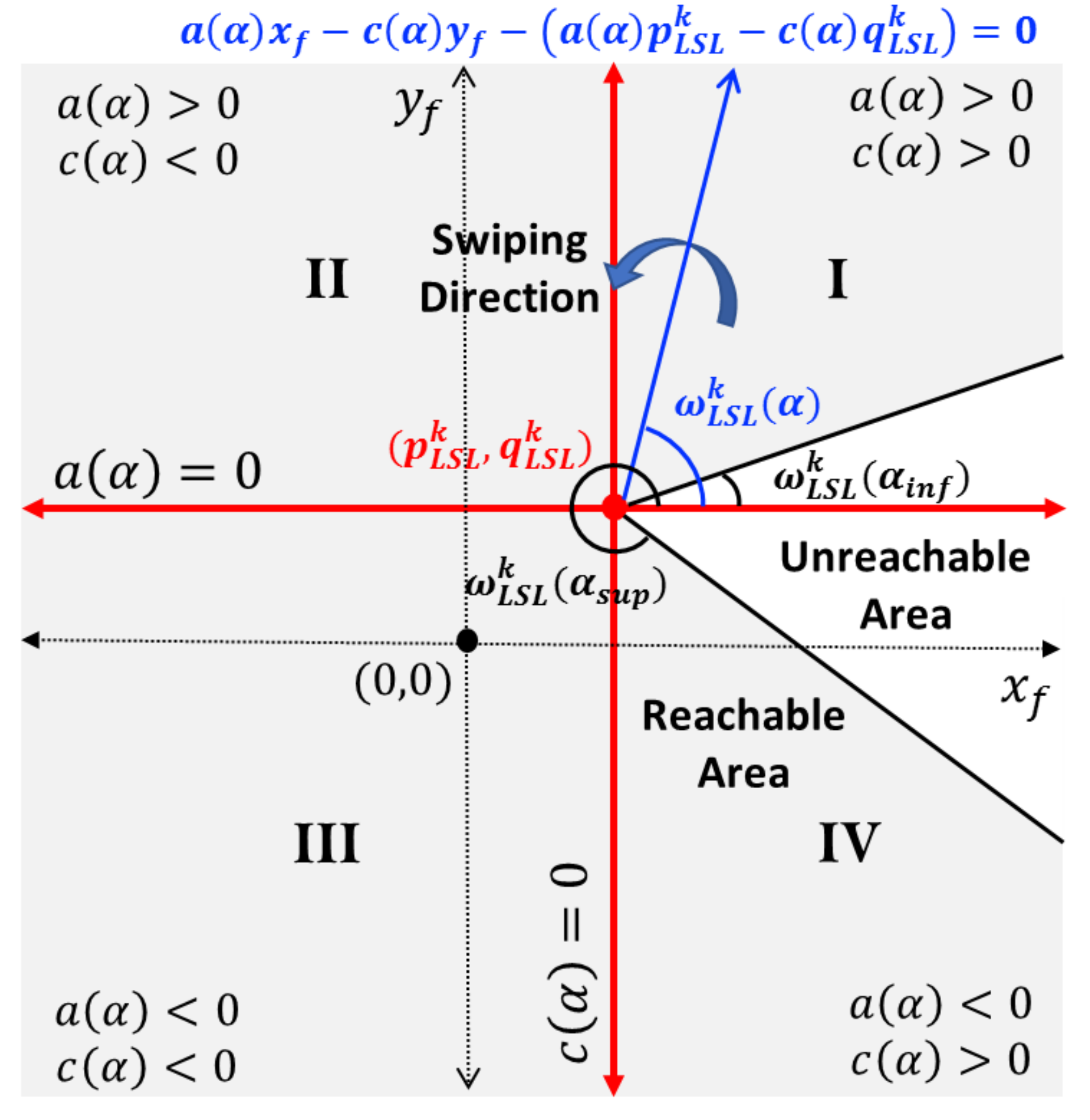}
     \caption{Reachability region of the $LSL$ path type obtained by anticlockwise rotation of~(\ref{eq:LSL_reachability}) about the center of rotation $(p^{k}_{LSL},q^{k}_{LSL})$.}
         \label{fig:quadrants} \vspace{-6pt}
\end{figure}

\begin{figure*}[t]
    \centering
        \includegraphics[width=1\textwidth]{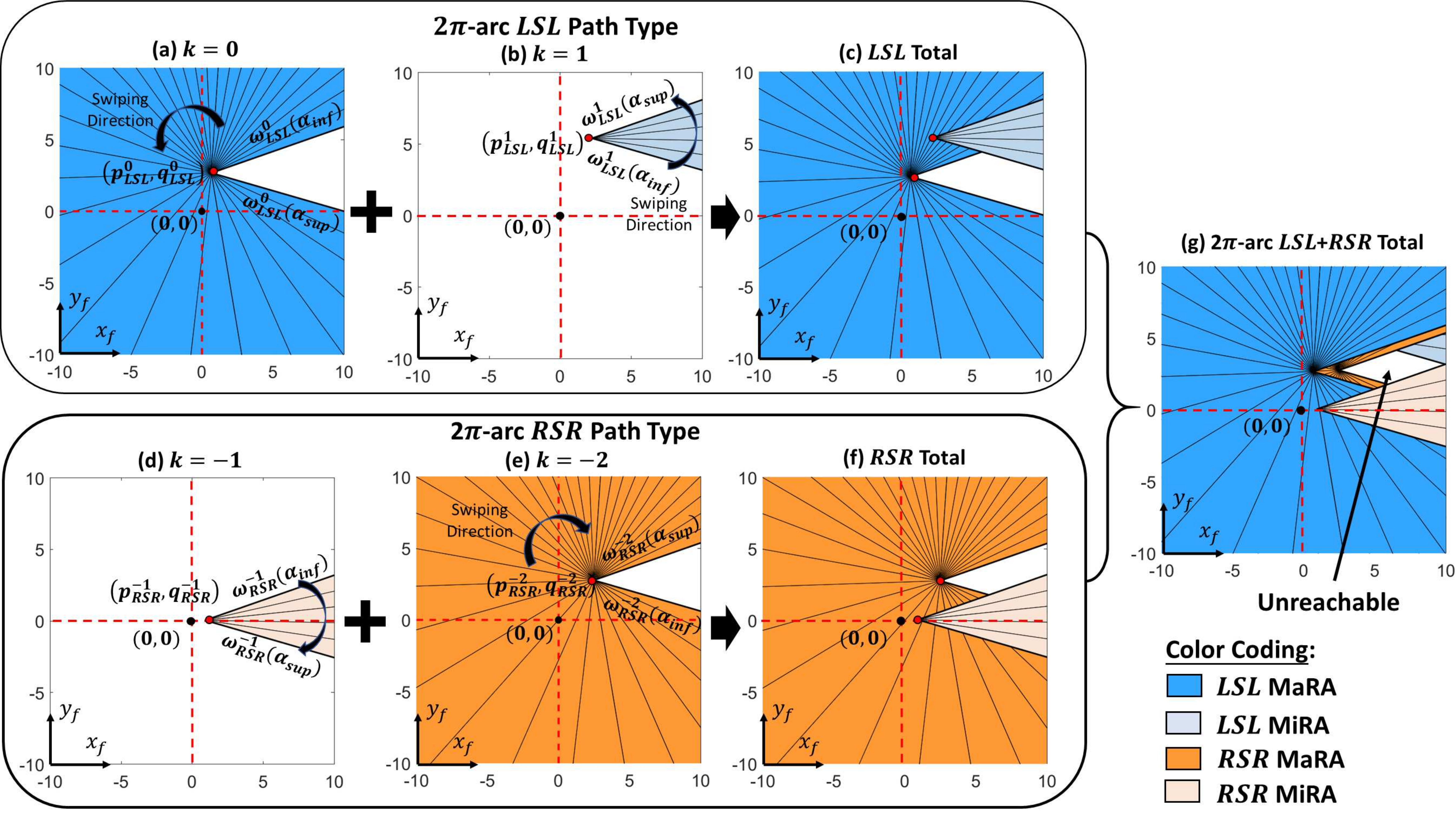}
         \caption{An example showing the construction of reachability graph for $2\pi$-arc $LSL$ and $RSR$ path types. (a) MaRA for $LSL$ with $k=0$, (b) MiRA for $LSL$ with $k=1$, (c) union of MaRA and MiRA for $LSL$ path, (d) MiRA for $RSR$ with $k=-1$, (e) MaRA for $RSR$ with $k=-2$, (f) union of MaRA and MiRA for $RSR$ path, (f) complete reachability graph obtained by taking union of both $LSL$ and $RSR$  path types.}\label{fig:reachability_example}
\end{figure*}

\vspace{6pt}
$\bullet$ \textit{\textbf{$2\pi$-arc $RSR$ Paths:}} Let us denote
\begin{subequations}\label{eq:p_q_RSR}
\begin{align}
p^k_{RSR} &\equiv -r\sin{\theta_f} - w_x r (2k\pi + \theta_f), \\
 q^k_{RSR} &\equiv -r(1-\cos{\theta_f}) - w_y r (2k\pi + \theta_f).
\end{align}
\end{subequations}
which are constants for $k\in \{-1,-2\}$ given $\theta_f, w_x$ and $w_y$. Further, let us denote
\begin{equation}\label{eq:def_b}
    b(\alpha) \equiv \sin{\alpha} - w_y.
\end{equation}
Then, using (\ref{eq:RSR_AB}), (\ref{eq:RSR_AB_condition}), (\ref{eq:p_q_RSR}) and (\ref{eq:def_b}) we get:
\begin{subequations}\label{eq:RSR_beta_two_new}
\begin{align}
    x_f &= p^k_{RSR} + \beta \cdot c(\alpha), \label{eq:RSR_beta_1} \\
    y_f &= q^k_{RSR} - \beta \cdot b(\alpha). \label{eq:RSR_beta_2}
\end{align}
\end{subequations}
By performing $b(\alpha) \cdot$(\ref{eq:RSR_beta_1})$+ c(\alpha) \cdot$(\ref{eq:RSR_beta_2}), (\ref{eq:RSR_beta_two_new}) is equivalent to the following:
\vspace{-8pt}
\begin{empheq}[box=\widefbox]{align}\label{eq:RSR_reachability}
  b(\alpha) x_f + c(\alpha) y_f - \big(b(\alpha) p^k_{RSR} + c(\alpha) q^k_{RSR} \big) = 0, \nonumber \\
  \text{s.t.: } x_f \geq p_{RSR}^k, y_f \geq q_{RSR}^k \text{, if } b(\alpha) \leq 0, c(\alpha) \geq 0,  \nonumber\\
  x_f < p_{RSR}^k, y_f \geq q_{RSR}^k \text{, if } b(\alpha) \leq 0, c(\alpha) < 0, \\
   x_f < p_{RSR}^k, y_f < q_{RSR}^k \text{, if } b(\alpha) > 0, c(\alpha) < 0, \nonumber \\
  x_f \geq p_{RSR}^k, y_f < q_{RSR}^k \text{, if } b(\alpha) > 0, c(\alpha) \geq 0. \nonumber
   \end{empheq}

 \begin{figure}[t]
\centering
\subfloat[3D reachable space for different parameters]{
    \includegraphics[width=0.70\columnwidth]{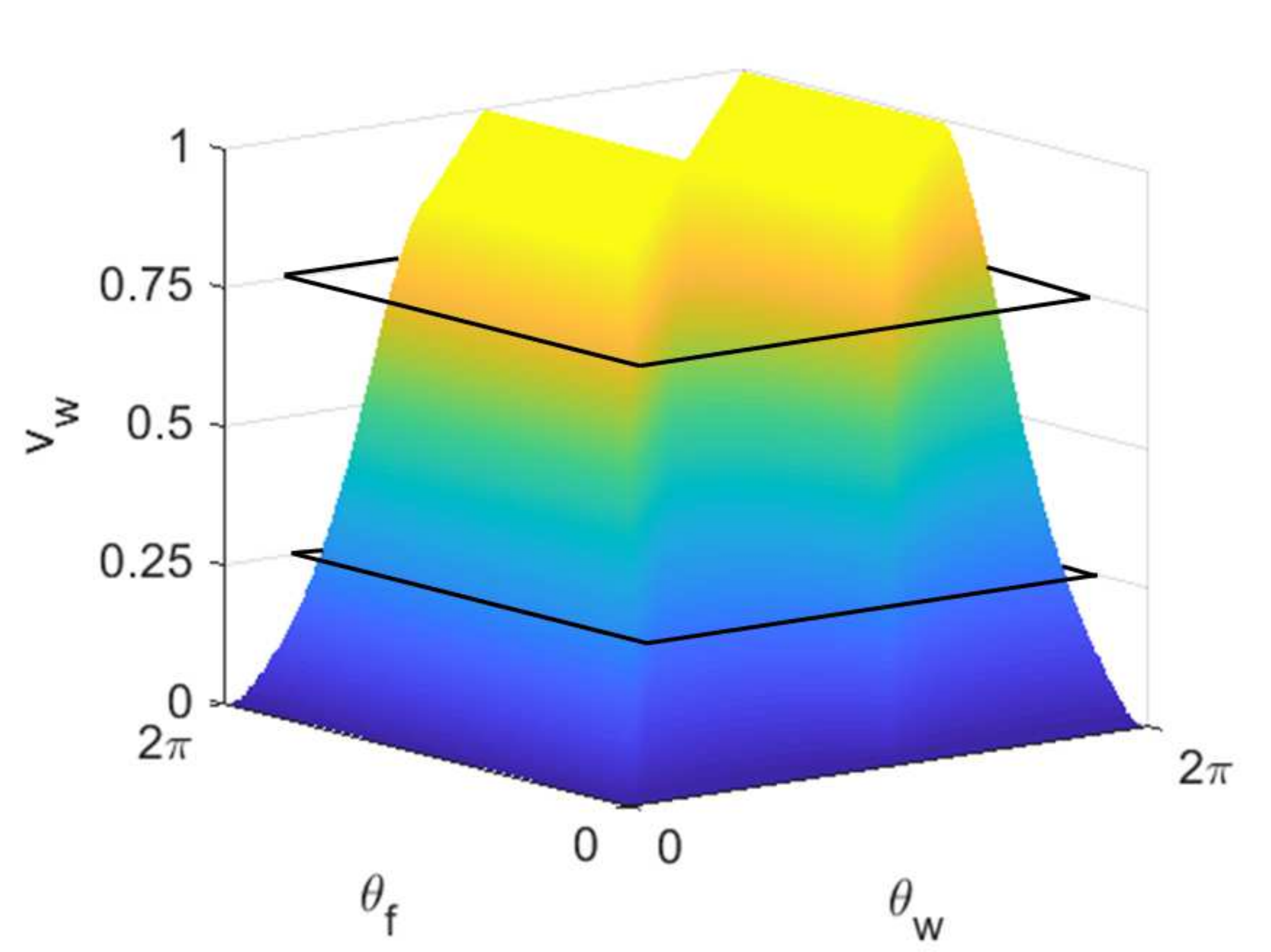}\label{fig:proposition1_3D_1}} \\
\subfloat[$v_w = 0.25$]{
     \includegraphics[width=.5\columnwidth]{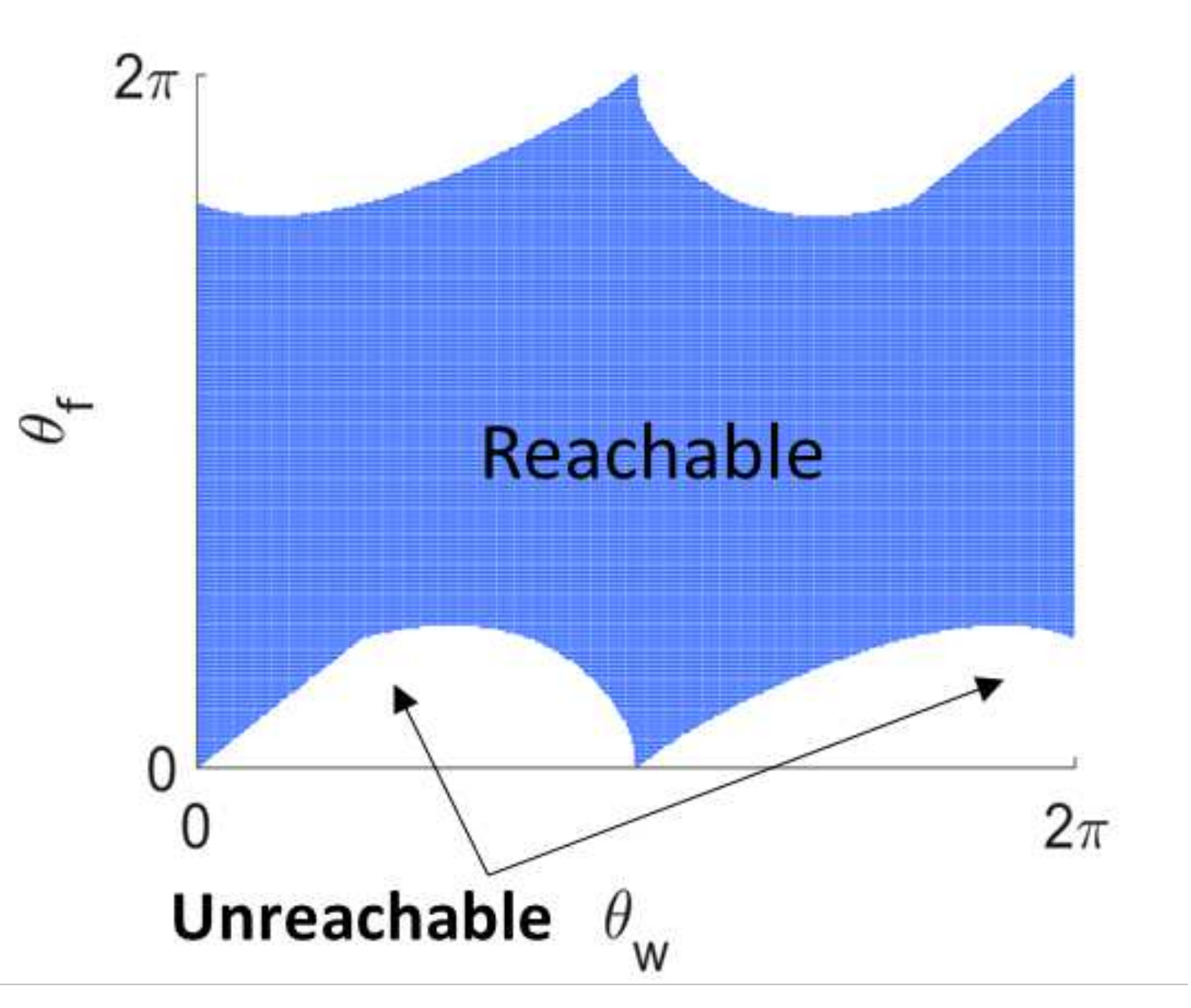}\label{fig:proposition1_3D_2}}
 \subfloat[$v_w = 0.75$]{
     \includegraphics[width=.5\columnwidth]{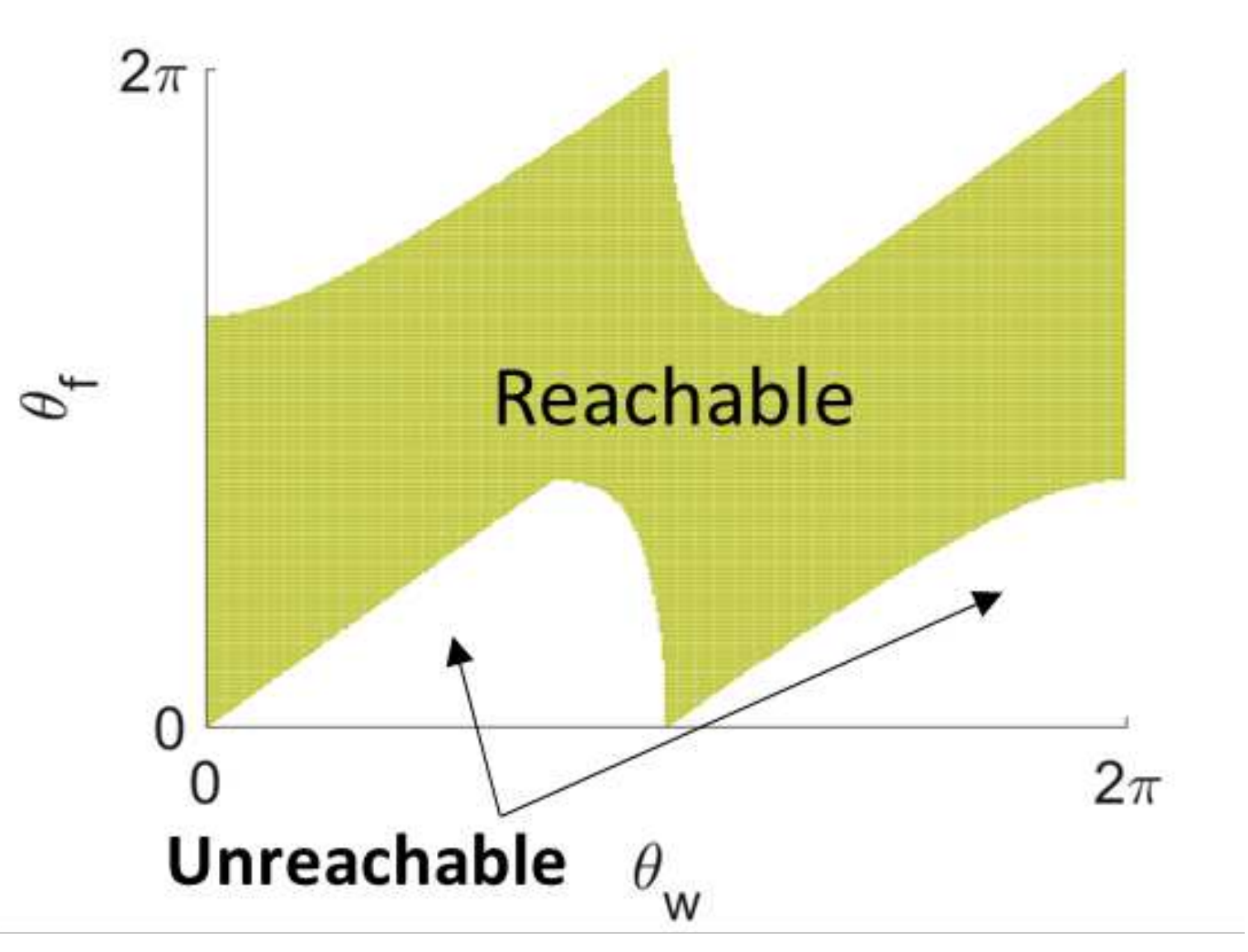}\label{fig:proposition1_3D_4}}
     \caption{The parameter space between $\theta_f \in [0, 2\pi)$, $\theta_w \in [0,2\pi)$ and $v_w = (0,1)$,  where full reachability is achieved.}\label{fig:proposition1_3D}
\end{figure}

\vspace{6pt}
The constraints in (\ref{eq:RSR_reachability}) are obtained by using the feasible range of $\beta \geq 0$ in (\ref{eq:RSR_beta_1}) and (\ref{eq:RSR_beta_2}). Again, these constraints define the quadrants of the coordinate frame with center at $\left(p^k_{RSR},q^k_{RSR}\right)$. For any given $\alpha$, (\ref{eq:RSR_reachability}) represents a reachability ray, and the goal $(x_f,y_f)$ is reachable if it lies on such ray. The  rotation of (\ref{eq:RSR_reachability}) is given as

\begin{empheq}[box=\widefbox]{align}\label{eq:RSR_slope}
    \omega^{k}_{RSR}(\alpha)=\atantwo \big(-b(\alpha),c(\alpha)\big)  \Mod{2\pi}, \ k \in \{-1,-2\}.
\end{empheq}

\vspace{12pt}
Now, we show a lemma that helps in constructing the reachability graphs using (\ref{eq:LSL_reachability}) and (\ref{eq:RSR_reachability}).

\vspace{0pt}
\begin{lem}
\label{lem:swipe}
As $\alpha$ increases from  $\alpha^k_{inf}$ to $\alpha^k_{sup}$, then for:
\begin{itemize}
    \item $LSL$ path type: ray (\ref{eq:LSL_reachability}) rotates anticlockwise about the center $\left(p_{LSL}^k,q_{LSL}^k\right)$, $\forall k \in \{0,1\}$.

\vspace{3pt}
\item  $RSR$ path type: ray (\ref{eq:RSR_reachability}) rotates clockwise about the center $\left(p_{RSR}^k,q_{RSR}^k\right)$, $\forall k \in \{-1,-2\}$.
\end{itemize}

\end{lem}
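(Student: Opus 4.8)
The plan is to reduce the statement to a monotonicity property of the rotation angles $\omega^k_{LSL}(\alpha)$ and $\omega^k_{RSR}(\alpha)$ defined in~(\ref{eq:LSL_slope}) and~(\ref{eq:RSR_slope}), since ``the ray rotates anticlockwise (resp.\ clockwise) about its center'' is precisely the assertion that $\omega^k_{LSL}$ is increasing (resp.\ $\omega^k_{RSR}$ is decreasing) in $\alpha$ on the feasible interval $[\alpha^k_{inf},\alpha^k_{sup}]$. Note first that the center of rotation $(p^k_{LSL},q^k_{LSL})$ (resp.\ $(p^k_{RSR},q^k_{RSR})$) does not depend on $\alpha$, so the ray genuinely pivots about a fixed point; only its direction changes with $\alpha$. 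So the whole lemma is a statement about the direction vectors $(c(\alpha),a(\alpha))$ for $LSL$ and $(c(\alpha),-b(\alpha))$ for $RSR$.

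For the $LSL$ case I would consider the map $\alpha \mapsto (c(\alpha),a(\alpha)) = (\cos\alpha + w_x,\ \sin\alpha + w_y)$. As $\alpha$ traverses an interval, this traces (an arc of) the unit circle translated by the current vector $(w_x,w_y)$; since $v_w = \sqrt{w_x^2+w_y^2} < 1$, the origin lies strictly inside this translated circle. The key elementary fact is that the argument (as seen from the origin) of a point moving counterclockwise around a circle that encloses the origin is strictly increasing. Concretely, I would compute the cross product $c(\alpha)\,a'(\alpha) - a(\alpha)\,c'(\alpha) = (\cos\alpha+w_x)\cos\alpha + (\sin\alpha+w_y)\sin\alpha = 1 + w_x\cos\alpha + w_y\sin\alpha \ge 1 - v_w > 0$, which is exactly the derivative of $\atantwo(a(\alpha),c(\alpha))$ (where the denominator is $c^2+a^2$, always positive). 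Hence $\omega^k_{LSL}(\alpha)$ is strictly increasing in $\alpha$, i.e.\ the ray rotates anticlockwise, and this holds for every $k\in\{0,1\}$ since $k$ only shifts the fixed center. The $RSR$ case is symmetric: with direction vector $(c(\alpha),-b(\alpha)) = (\cos\alpha+w_x,\ -\sin\alpha+w_y)$ the analogous cross product works out to $-(1 - w_x\cos\alpha + w_y\sin\alpha) \le -(1-v_w) < 0$, so $\omega^k_{RSR}(\alpha)$ is strictly decreasing, giving clockwise rotation, again independently of $k\in\{-1,-2\}$.

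The one point that needs genuine care — and which I expect to be the main obstacle — is the $\atantwo$ and its ``$\Mod{2\pi}$'': $\omega$ is only defined modulo $2\pi$, so ``strictly increasing'' must be interpreted as the lift being strictly increasing, i.e.\ the ray may wrap past the positive $x$-axis as $\alpha$ grows. I would handle this by arguing on a continuous lift of the angle: since the derivative of the (lifted) $\atantwo$ is $(1 + w_x\cos\alpha + w_y\sin\alpha)/(c^2+a^2)$, which is continuous and strictly positive on all of $[\alpha^k_{inf},\alpha^k_{sup}]$, the lifted angle is a strictly increasing continuous function, and the quadrant-constraint bookkeeping in~(\ref{eq:LSL_reachability}) is exactly consistent with this monotone sweep of the ray through the four quadrants around the center. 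A brief remark that the total variation of $\omega$ over the feasible $\alpha$-range stays within $2\pi$ (so that ``anticlockwise rotation'' describes a simple, non-self-overlapping sweep) finishes the argument; this follows because the feasible range of $\alpha$ for each $k$, as tabulated in Table~\ref{table:range_conventional}, has length at most $2\pi$, and the angular speed is bounded by $(1+v_w)/(1-v_w)^2$ times nothing worse than a bounded factor — more simply, over a full $2\pi$ turn of $\alpha$ the point $(c(\alpha),a(\alpha))$ makes exactly one loop of the translated circle and hence $\omega$ increases by exactly $2\pi$, so on any sub-interval it increases by less.
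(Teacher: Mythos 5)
Your proposal is correct and follows essentially the same route as the paper's proof in Appendix~\ref{app:lemma1}: the paper likewise fixes the center $(p^k_{LSL},q^k_{LSL})$ (resp.\ $(p^k_{RSR},q^k_{RSR})$) and establishes monotone rotation by differentiating the slope $S_{LSL}(\alpha)=a(\alpha)/c(\alpha)$ (resp.\ $S_{RSR}(\alpha)=-b(\alpha)/c(\alpha)$), whose numerator is exactly your cross product $1+w_x\cos\alpha+w_y\sin\alpha$, so your variant of differentiating the lifted $\atantwo$ angle is the same computation, merely packaged so as to sidestep the $c(\alpha)=0$ positions and the mod-$2\pi$ bookkeeping that the paper handles through the quadrant constraints. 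The only blemish is a harmless sign slip in your $RSR$ cross product, which should read $-\bigl(1+w_x\cos\alpha-w_y\sin\alpha\bigr)=-\bigl(1+v_w\cos(\alpha+\theta_w)\bigr)$; since this is still bounded above by $-(1-v_w)<0$, the clockwise conclusion is unaffected.
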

\begin{proof}
See Appendix~\ref{app:lemma1}.
\end{proof}

Lemma \ref{lem:swipe} implies that the reachable area for $LSL$ paths is obtained by rotating (\ref{eq:LSL_reachability}) about the center $\left(p_{LSL}^k,q_{LSL}^k\right)$, from $\omega^{k}_{LSL}(\alpha_{inf})$ to $\omega^{k}_{LSL}(\alpha_{sup})$, where $\alpha^k_{inf}$ to $\alpha^k_{sup}$ are the bounds of $\alpha$ (see  Table~\ref{table:range_conventional}) for a given $k$. Fig. \ref{fig:quadrants} shows the reachable area for $LSL$ paths obtained by this rotation. Note that there are different reachable areas for each $k$. Similarly, the reachable region for $RSR$ paths is obtained by rotating (\ref{eq:RSR_reachability}) from $\omega^{k}_{RSR}(\alpha_{inf})$ to $\omega^{k}_{RSR}(\alpha_{sup})$ for both its $k$ values.

\vspace{6pt}
\begin{rem}
Note that for simplicity of notation, we omit the superscript of $\alpha$ whenever it is used in the $\omega$ function, where it assumes the superscript of $\omega$.
\end{rem}

\vspace{6pt}
For further explanation, we introduce the concepts of \textit{Major Reachable Area} (MaRA) and \textit{Minor Reachable Area} (MiRA).

\begin{defn}[\textbf{MaRA}] For an $LSL$ ($RSR$) path type, MaRA is the larger of the reachable areas spanned by $k = 0$ or $1$ ($k = -1$ or $-2$).
\end{defn}
\begin{defn}[\textbf{MiRA}] For an $LSL$ ($RSR$) path type, MiRA is the smaller of the reachable areas spanned by $k = 0$ or $1$ ($k = -1$ or $-2$).
\end{defn}

\vspace{6pt}
\textit{Example}: Fig.~\ref{fig:reachability_example} shows an example of the construction of the reachability graph for $2\pi$-arc $LSL$ and $RSR$ path types. Here, the environment has a current of speed $v_w = 0.5$~m/s and direction $\theta_w = \pi/3$. The goal pose has the heading angle $\theta_f = 7\pi/4$, while its position $(x_f, y_f)$ is varied within $[-10, 10]$.

Figs.~\ref{fig:reachability_example}a and~\ref{fig:reachability_example}b show the MaRA ($k=0$) and MiRA ($k=1$) of the $LSL$ paths, respectively, which are obtained by rotating the  ray (\ref{eq:LSL_reachability}) by varying $\alpha$ from $\alpha_{inf}^k$ to $\alpha_{sup}^k$. The corresponding centers of rotation $(p_{LSL}^0,q_{LSL}^0) = (0.67, 2.67)$ and $(p_{LSL}^1,q_{LSL}^1) = (2.24, 5.39)$  are also shown. Fig.~\ref{fig:reachability_example}c shows the total reachable area of the $LSL$ paths obtained by combining the MaRA and MiRA from  Figs.~\ref{fig:reachability_example}a and \ref{fig:reachability_example}b, respectively. Clearly, the $LSL$ paths do not provide full reachability.

Similarly, Figs.~\ref{fig:reachability_example}d and \ref{fig:reachability_example}e show the MiRA ($k=-1$) and MaRA ($k=-2$)  of the $RSR$ paths, respectively, which are obtained by rotating the  ray (\ref{eq:RSR_reachability}) by varying $\alpha$ from $\alpha_{inf}^k$ to $\alpha_{sup}^k$. The corresponding centers of rotation $(p_{RSR}^{-1},q_{RSR}^{-1})= (0.90, 0.05)$ and $(p_{RSR}^{-2},q_{RSR}^{-2})= (2.47, 2.77)$ are also shown. Again, Fig.~\ref{fig:reachability_example}f shows the total reachable area of the $RSR$ path obtained by combining the MaRA and MiRA from Figs.~\ref{fig:reachability_example}d and \ref{fig:reachability_example}e, respectively. As seen, the $RSR$ paths also do not provide full reachability.

Finally, Fig.~\ref{fig:reachability_example}g shows the complete reachability graph using both $LSL$ and $RSR$ path types, which is obtained by  combining  Figs.~\ref{fig:reachability_example}c and \ref{fig:reachability_example}f. As seen in Fig.~\ref{fig:reachability_example}g, there is still some region that is unreachable, thus both $LSL$ and $RSR$ path types together also do not provide full reachability.

\begin{figure}[t]
    \centering
    \includegraphics[width=0.85\columnwidth]{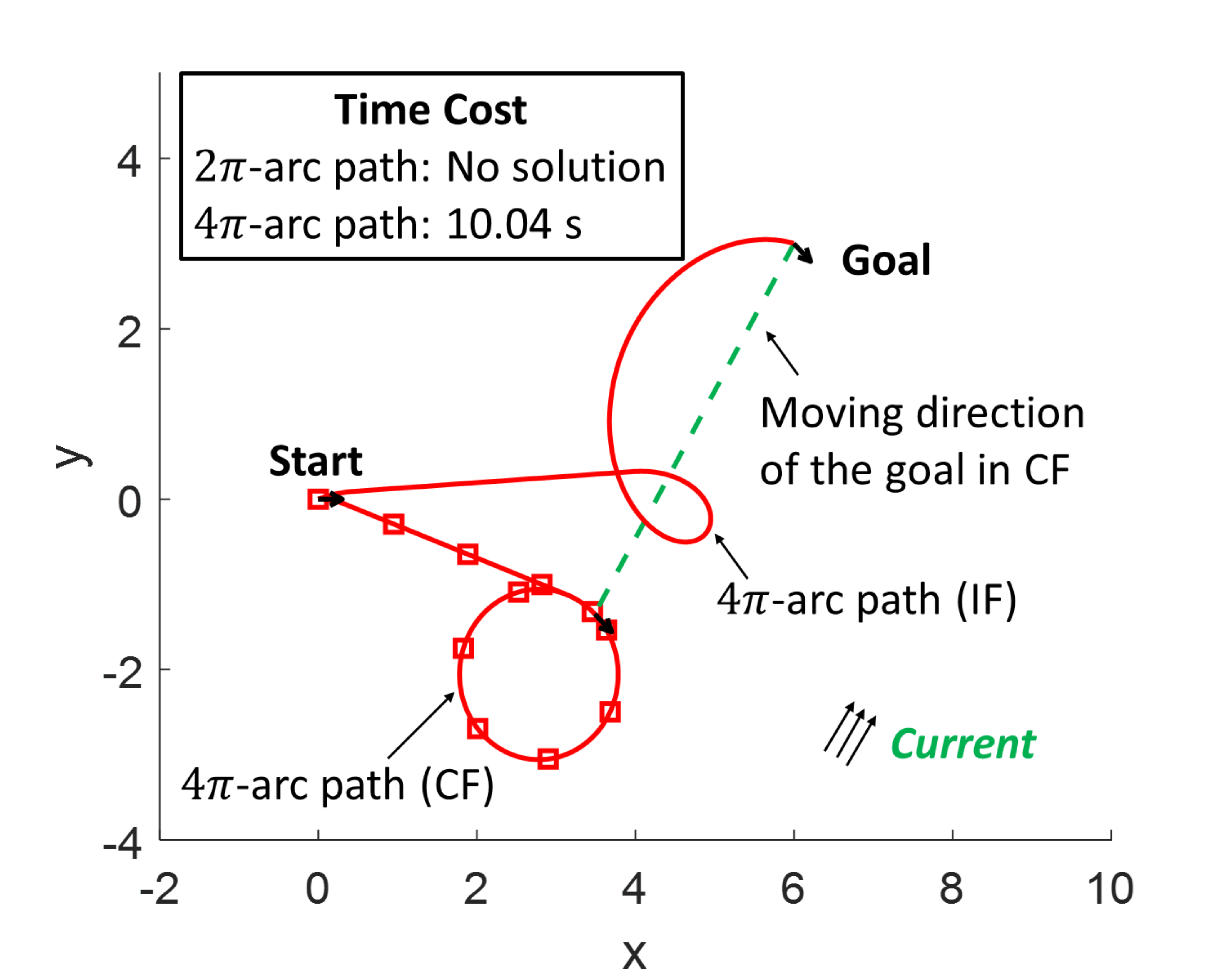}
    \caption{The optimal $4\pi$-arc paths in the IF and CF, while there is no feasible solution for $2\pi$-arc paths. The start pose $(x_0,y_0,\theta_0) = (0,0,0)$ and the goal pose $(x_f,y_f,\theta_f) = (6, 3, 7\pi/4)$. The optimal $4\pi$-arc path parameters are: $\alpha = 0.116\pi$, $\beta = 2.976$, $\gamma = 2.135\pi$.}\label{fig:prop1_example} \vspace{-6pt}
\end{figure}

\vspace{-6pt}
\subsection{Full Reachability Conditions for  the $2\pi$-arc Path Types}
\label{sec:full_reachability_conditions}
After acquiring the analytical expressions for generating the reachability graphs of the $2\pi$-arc $LSL$ and $RSR$ path types,  we now investigate the conditions under which these
paths provide full reachability.

Note that full reachability is achieved if the entire space is covered by atleast one of the following combinations:
\begin{enumerate}
\item Union of MaRA and MiRA of $LSL$, and/or
\item Union of MaRA and MiRA of $RSR$, and/or
\item Union of MaRA of $LSL$ and MiRA of $RSR$, and/or
\item Union of MaRA of $RSR$ and MiRA of $LSL$.
\end{enumerate}

\begin{rem} We show by Lemma \ref{lem:slopeproperty3} in Appendix \ref{app:reachability} that these four cases are sufficient for reachability analysis.
\end{rem}

For continuity of reading, the derivations of the full reachability conditions for the above four cases are presented in Appendix~\ref{app:reachability} and the results are summarized in Table~\ref{table:full_reachability}. If at some goal pose, all of the conditions in Table~\ref{table:full_reachability} are violated, then it is unreachable by $2\pi$-arc paths. Next, we visually verify the unreachable regions using a numerical validation.

\vspace{6pt}
\textbf{Numerical Validation}: The reachabilty conditions for $2\pi$-arc paths are shown in the last column of Table~\ref{table:full_reachability} in Appendix~\ref{app:reachability}. These reachability conditions only depend on parameters $\theta_f$, $\theta_w$ and $v_w$. Thus, we construct a 3D reachability graph by varying $\theta_f \in [0, 2\pi)$ and $\theta_w \in [0, 2\pi)$ in steps of $\pi/100$, and $v_w \in (0,1)$ in steps of $0.1$.
For any 3D parametric point, if at least one of the full reachability conditions is satisfied, then such point is colored, and the color varies with respect to  $v_w$, as shown in Fig.~\ref{fig:proposition1_3D_1}. In contrast, the white area indicates the parametric space where all the reachability conditions are violated, i.e., providing no feasible solutions. This validation illustrates that full reachability is not achieved by $2\pi$-arc $LSL$ and $RSR$ paths.

Figs.~\ref{fig:proposition1_3D_2} and ~\ref{fig:proposition1_3D_4} show the cross sections of Fig.~\ref{fig:proposition1_3D_1} at  $v_w= 0.25$~m/s and $v_w=0.75$~m/s, respectively. It is seen that a higher $v_w$ leads to a smaller reachable space.

Fig.~\ref{fig:prop1_example} shows a specific example where $2\pi$-arc path does not exist, but $4\pi$-arc path does. The start pose $(x_0,y_0,\theta_0) = (0,0,0)$, the goal pose $(x_f,y_f,\theta_f) = (6, 3, 7\pi/4)$, and the current moves at speed $v_w = 0.5$~m/s in the direction of $\theta_w = \pi/3$. It is seen that the turning angle of the second turn in the optimal $4\pi$-arc path has $\gamma = 2.135\pi>2\pi$, which drives the vehicle to circle around at the end so that it can meet with the exact goal heading with the help of external current.

\vspace{0pt}
\section{Theoretical Properties of $4\pi$-arc Paths}\label{sec:main_contents}
The previous section established that $2\pi$-arc $LSL$ and $RSR$ paths do not guarantee full reachability. This section presents the theoretical properties of $4\pi$-arc paths  which highlight their advantages over $2\pi$-arc paths in terms of: 1) full reachability, and 2) lower time costs, while requiring similar computational complexity. First, we present the concept of a dominant path type and show an example to motivate the above properties.

\begin{defn} [\textbf{Dominant Path Type}] For a given goal pose, a path type $LSL$ ($RSR$) is said to be dominant over $RSR$ ($LSL$), if it achieves a lower time cost to reach that goal pose.
\end{defn}

\vspace{6pt}
\textit{Example}: Figs.~\ref{fig:claim1_2pi} and \ref{fig:claim1_4pi} present the reachability plots of $2\pi$-arc and $4\pi$-arc paths, respectively. These are generated for an environment which has a current of speed $v_w = 0.5$~m/s and heading angle $\theta_w = \pi/3$. The coordinates of the goal pose $(x_f,y_f)$ are varied within $[-10,10]$. The two subplots of each figure correspond to two different goal pose directions $\theta_f \in \{5\pi/4, 7\pi/4\}$. A region is color-coded cyan (orange) if an $LSL$ ($RSR$) path exists and dominant over the $RSR$ ($LSL$) path type. The white color indicates that no feasible solution exists for either path type and the region is unreachable.

As seen in Fig.~\ref{fig:claim1_2pi}(2), for $\theta_f = 7\pi/4$, there exists a region which is unreachable for $2\pi$-arc paths. This implies that for any goal pose inside this region, no solutions exist for $\alpha$ and $\gamma$ within their feasible ranges defined in Table~\ref{table:range_conventional}. In contrast, as seen in Fig.~\ref{fig:claim1_4pi}(2), $4\pi$-arc paths achieve full reachability.

\begin{figure}[t]
    \centering
    \subfloat[Reachability graphs of the $2\pi$-arc paths for $\theta_f=5\pi/4$ and  $7\pi/4$.]{
        \includegraphics[width=0.90\columnwidth]{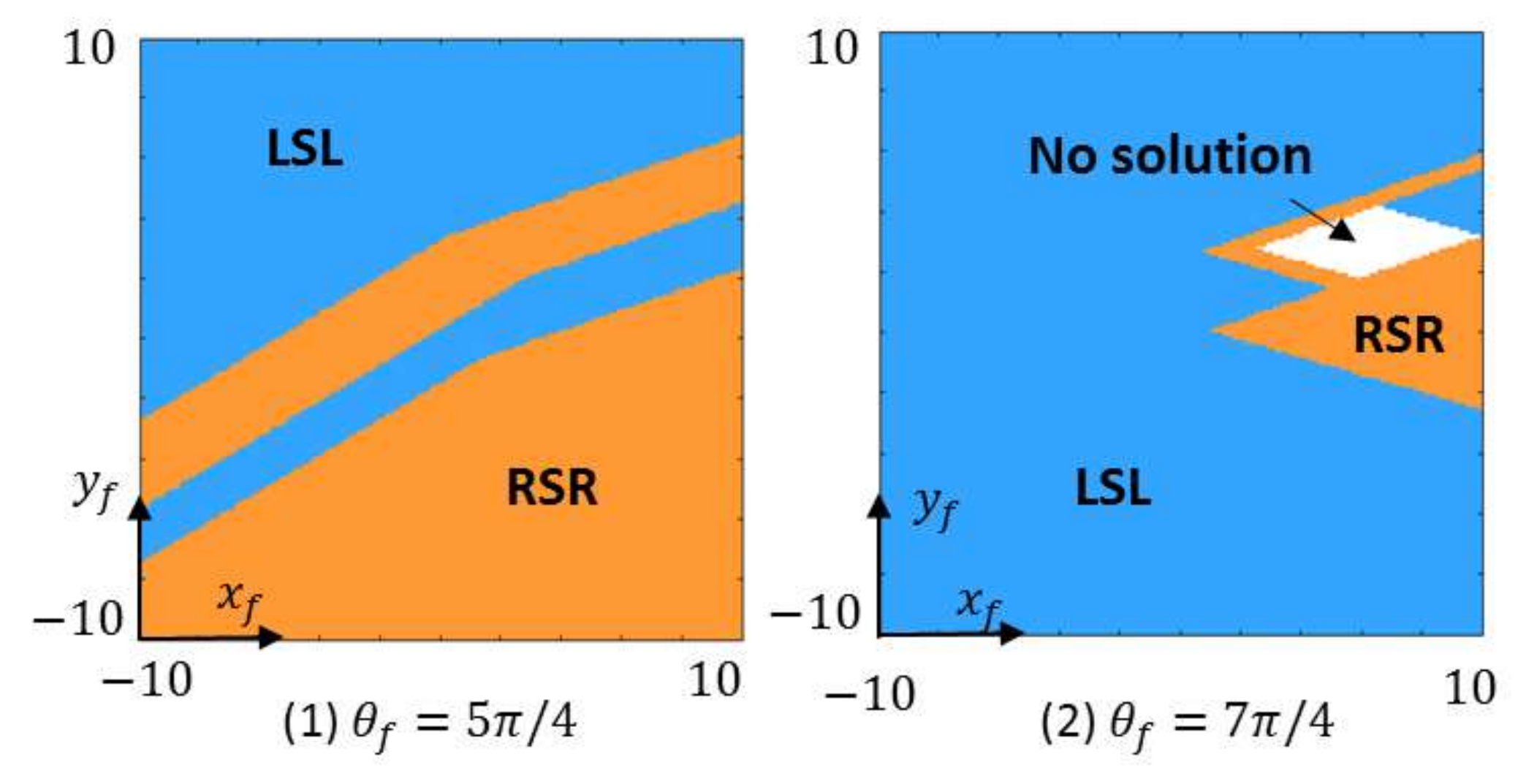}\label{fig:claim1_2pi}} \\ \vspace{-5pt}
    \subfloat[Reachability graphs of the $4\pi$-arc paths for $\theta_f=5\pi/4$ and  $7\pi/4$.]{
         \includegraphics[width=0.90\columnwidth]{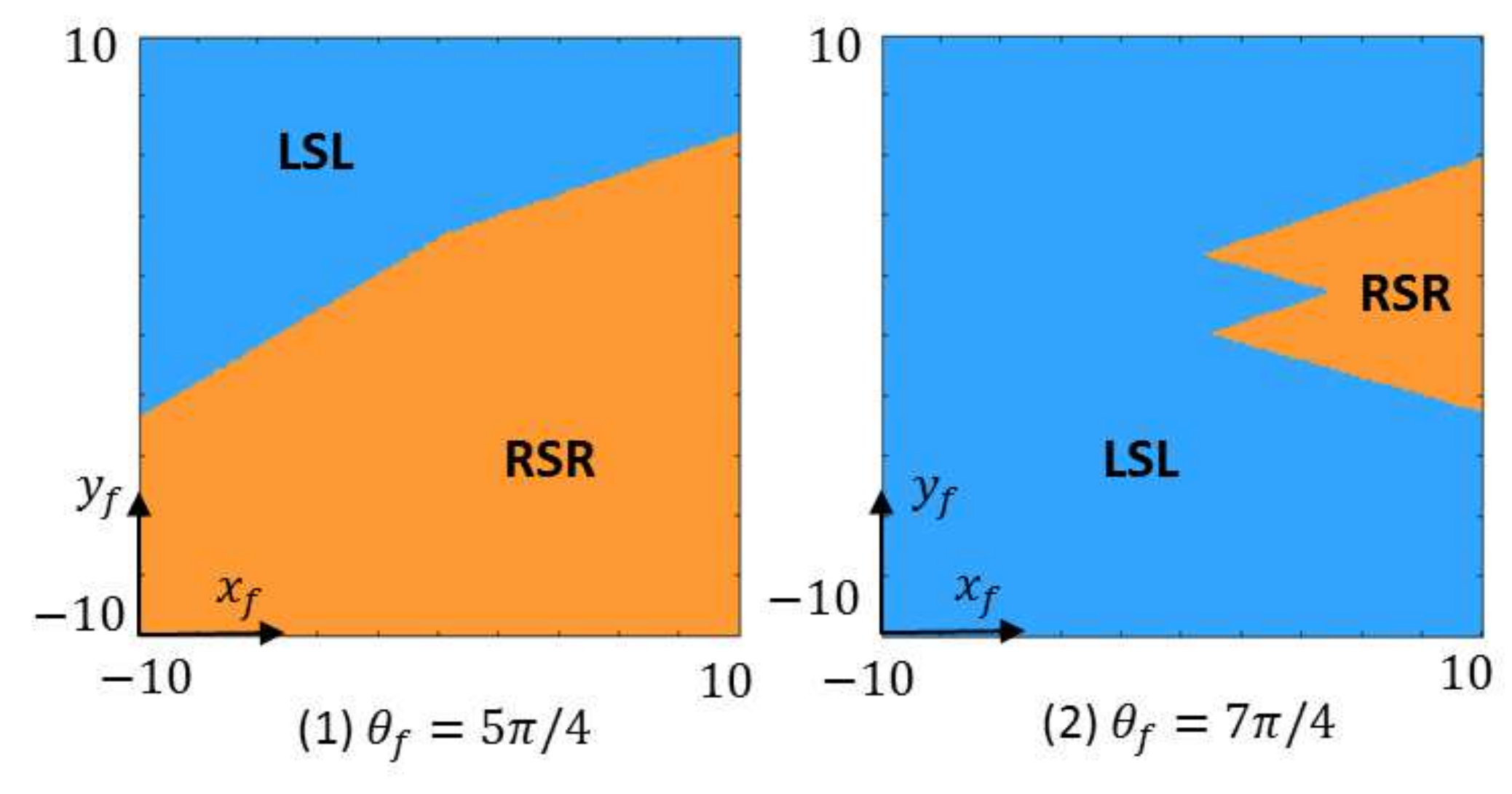}\label{fig:claim1_4pi}}
         \vspace{0pt}
         \caption{An example of reachability graphs for the $2\pi$-arc and $4\pi$-arc paths. The dominant of the $LSL$ (blue color) or $RSR$ (orange color) paths is shown in the corresponding area. White color indicates unreachable area.}
      \label{fig:claim1_both}\vspace{-6pt}
\end{figure}

Furthermore, the dominant path type (i.e., $LSL$ or $RSR$) for the same region could be different when using the $2\pi$-arc paths and $4\pi$-arc paths, as seen in Figs.~\ref{fig:claim1_2pi}(1) and \ref{fig:claim1_4pi}(1) corresponding to $\theta_f = 5\pi/4$. Since $4\pi$-arc solutions already include the $2\pi$-arc solutions, the above observation implies that there exist goal poses for which $4\pi$-arc paths can achieve even lower time costs as compared to the $2\pi$-arc paths.

\vspace{6pt}
\textbf{Roadmap of this Section:} In the following subsections, we present four theorems to highlight the theoretical properties of $4\pi$-arc $LSL$ and $RSR$ paths and compare them with the corresponding $2\pi$-arc paths. First, Theorem~\ref{claim1} proves that both the $LSL$ and $RSR$ $4\pi$-arc paths provide full reachability unlike the $2\pi$-arc paths. Then, Theorem~\ref{claim2} and Corollary~\ref{claim2_cor} show that the computation workload required to get a solution using the $4\pi$-arc paths is the same as that using the $2\pi$-arc paths. Next, Theorem~\ref{claim3} compares the optimality of $4\pi$-arc and $2\pi$-arc path solutions and shows that the optimal trajectory provided by $4\pi$-arc paths is either of shorter time or same as that provided by $2\pi$-arc paths. Finally, Theorem~\ref{rem:over_4pi} proves that $\alpha,\gamma\in[0,4\pi)$ is sufficient for optimality and increasing the range of these arc segments beyond $4\pi$ does not lead to a shorter time path.

\vspace{0pt}
\subsection{Full Reachability of $4\pi$-arc Paths}\label{4pifullreach}

The following theorem relates to the reachability of the $4\pi$-arc solutions for the $LSL$ and $RSR$ path types.

\begin{thm}[\textbf{Full reachability of $4\pi$-arc paths}]\label{claim1}
The $4\pi$-arc $LSL$ and $RSR$ paths individually provide full reachability.
\end{thm}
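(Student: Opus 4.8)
The plan is to show that for \emph{any} goal pose $(x_f, y_f, \theta_f)$ and \emph{any} admissible current ($v_w < 1$), there exists a choice of $k$ and a corresponding $\alpha$ within the feasible range of Table~\ref{table:range_conventional} such that the boundary constraints~(\ref{eq:LSL_conditions}) (resp.~(\ref{eq:RSR_conditions})) are satisfied with $\beta \geq 0$. Equivalently, in the language of Section~\ref{sec:reachability_graphs}, I must show that the union over $k \in \{0,1\}$ of the reachable areas of the $LSL$ ray~(\ref{eq:LSL_reachability}) covers the entire plane (and symmetrically for $RSR$). The key observation that makes the $4\pi$ extension work is that when $\alpha, \gamma \in [0,4\pi)$, the feasible range of $\alpha$ for each $k$ widens (see the $4\pi$-arc rows of Table~\ref{table:range_conventional}): for $LSL$, $k=0$ gives $\alpha \in [0,\theta_f]$ and $k=1$ gives $\alpha \in [0, 2\pi+\theta_f]$, so together the admissible $\alpha$ values sweep a total arc of more than $2\pi$.

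First I would fix $k=1$ for the $LSL$ case, so that $\alpha$ ranges over $[0, 2\pi+\theta_f]$, an interval of length strictly greater than $2\pi$. By Lemma~\ref{lem:swipe}, as $\alpha$ increases over this interval the reachability ray~(\ref{eq:LSL_reachability}) rotates monotonically (anticlockwise) about the fixed center $(p^1_{LSL}, q^1_{LSL})$; its rotation angle is $\omega^1_{LSL}(\alpha) = \atantwo(\sin\alpha + w_y, \cos\alpha + w_x) \Mod{2\pi}$. I would then argue that as $\alpha$ traverses an interval of length $\geq 2\pi$, the quantity $\omega^1_{LSL}(\alpha)$ takes \emph{every} value in $[0,2\pi)$. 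This is the crux: the map $\alpha \mapsto (\cos\alpha + w_x, \sin\alpha + w_y)$ traces the full circle of radius $1$ centered at $(w_x, w_y)$, and since $v_w = \sqrt{w_x^2 + w_y^2} < 1$ the origin lies strictly \emph{inside} this circle, so the argument $\atantwo(\sin\alpha + w_y, \cos\alpha + w_x)$ winds once around all of $[0,2\pi)$ as $\alpha$ goes around once. Hence every ray direction emanating from $(p^1_{LSL}, q^1_{LSL})$ is realized, and since the rays together with their quadrant constraints tile the plane around that center, the union is the whole plane: full reachability for $LSL$. The $RSR$ case is symmetric, using $k=-2$ (giving $\alpha \in [0, 4\pi - \theta_f]$, again length $> 2\pi$) and the rotation $\omega^{-2}_{RSR}(\alpha) = \atantwo(-\sin\alpha + w_y, \cos\alpha + w_x)$, whose argument point traces the circle of radius $1$ centered at $(w_x, -w_y)$, again enclosing the origin because $v_w < 1$.

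The main obstacle I anticipate is the careful bookkeeping at the boundary of the sweep and the handling of the quadrant constraints in~(\ref{eq:LSL_reachability}): a single ray only covers a half-line from the center, so I need that as $\alpha$ varies the ray sweeps through a \emph{full} $2\pi$ of directions (not just $\pi$), which is exactly why an interval of length $\geq 2\pi$ in $\alpha$ — available only in the $4\pi$-arc regime — is essential, and why $v_w < 1$ (origin strictly inside the unit circle, so the winding number is exactly $1$ rather than $0$) cannot be dropped. A secondary technical point is confirming that for the attained direction $\omega$ the correct sign of $\beta$ in~(\ref{eq:LSL_beta}) is the nonnegative one and that the resulting $\gamma = 2k\pi + \theta_f - \alpha$ indeed lands in $[0,4\pi)$ — but this follows directly from the feasible-range analysis already tabulated in Table~\ref{table:range_conventional}. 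With the winding argument in hand, the proof reduces to citing Lemma~\ref{lem:swipe} for monotonic rotation and the geometric picture of Fig.~\ref{fig:quadrants} for the tiling, so the remainder is routine.
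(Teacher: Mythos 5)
Your proposal is correct and follows essentially the same route as the paper's proof: fix $k=1$ for $LSL$ (resp.\ $k=-2$ for $RSR$), invoke Lemma~\ref{lem:swipe} for monotone rotation of the reachability ray about the fixed center, and observe that because the feasible $\alpha$-interval has length at least $2\pi$ the ray's direction completes a full revolution (the paper phrases this as $\omega^1_{LSL}(2\pi)=\omega^1_{LSL}(0)$ with continued swiping, you phrase it as a winding-number-one argument relying on $v_w<1$), so the swept rays cover the entire plane. The only nit is that in the $RSR$ case the argument point $\left(\cos\alpha+w_x,\ w_y-\sin\alpha\right)$ traces the unit circle centered at $(w_x,w_y)$, not $(w_x,-w_y)$, which is immaterial since either center lies at distance $v_w<1$ from the origin.
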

\begin{proof} Full reachability implies the existence of solution for any goal pose. We prove for $LSL$ and $RSR$ paths below.
\begin{itemize}
\item \textit{$4\pi$-arc $LSL$ paths}: Consider $k = 1$. From Table~\ref{table:range_conventional}, $\alpha_{inf} = 0$ and $\alpha_{sup} = 2\pi + \theta_f > 2\pi$.
Using Lemma~\ref{lem:swipe}, we construct the reachable space for $k = 1$ by rotating the  ray  (\ref{eq:LSL_reachability}) around $(p_{LSL}^1, q_{LSL}^1)$ by varying $\alpha$ from $0$ to $2\pi + \theta_f$. In this process, the ray (\ref{eq:LSL_reachability}) swipes in the anticlockwise direction from $\omega_{LSL}^1(0)$ to $\omega_{LSL}^1(2\pi + \theta_f)$. However, when $\alpha$ reaches $2\pi<2\pi + \theta_f$, the rotation of  ray (\ref{eq:LSL_reachability}) becomes $\omega_{LSL}^1(2\pi) = \omega_{LSL}^1(0) = \atantwo(w_y, 1+w_x) \Mod{2\pi}$, which implies that the ray comes back to the start again and continues swiping thereafter. This means that for $k = 1$, the whole space is covered and full reachability is obtained. Now consider $k = 2$. From Table~\ref{table:range_conventional}, $\alpha_{inf} = \theta_f$ and $\alpha_{sup} = 4\pi$. Following the same process as for the $k = 1$ case, one can  see that the swiped area for $k = 2$ also covers the whole area and full reachability is obtained. In summary, $4\pi$-arc $LSL$ paths guarantee full reachability. (Note: for $k = 0$ and $3$, the swiped area does not cover the whole space, hence they do not provide full reachability.)

\vspace{6pt}
\item \textit{$4\pi$-arc $RSR$ paths}: Consider $k = -2$. From Table~\ref{table:range_conventional},  $\alpha_{inf} = 0$ and $\alpha_{sup} = 4\pi - \theta_f > 2\pi$. Using Lemma~\ref{lem:swipe}, as $\alpha$ grows, the ray  (\ref{eq:RSR_reachability}) rotates around $(p_{RSR}^{-2}, q_{RSR}^{-2})$ in the clockwise direction from $\omega_{RSR}^{-2}(0)$ to $\omega_{RSR}^{-2}(4\pi - \theta_f)$. During this process, when $\alpha$ reaches $2\pi<4\pi - \theta_f$, the  rotation of  ray (\ref{eq:RSR_reachability}) becomes $\omega_{RSR}^{-2}(2\pi) = \omega_{RSR}^{-2}(0) = \atantwo(w_y, 1 + w_x) \Mod{2\pi}$, which implies that it comes back to the start again and continues swiping thereafter. This means that for $k = -2$, the whole space is covered and full reachability is obtained. Now consider $k = -3$. From Table~\ref{table:range_conventional}, $\alpha_{inf} = 2\pi-\theta_f$ and $\alpha_{sup} = 4\pi$. Following the same process as for the $k = -2$ case, one can see that the swiped area for $k = -3$ also covers the whole space and full reachability is obtained. In summary, $4\pi$-arc $RSR$ paths guarantee full reachability. (Note: for $k = -1$ and $-4$, the swiped area does not cover the whole space, hence they do not provide full reachability.)
\end{itemize}
Hence proved.
\end{proof}

\vspace{-12pt}

\subsection{Time Costs of $4\pi$-arc $LSL$ and $RSR$ Paths}\label{4pitimecost}
Now, we analyse the time costs of $4\pi$-arc $LSL$ and $RSR$ paths and compare them to the corresponding $2\pi$-arc paths.

Based on (\ref{eq:LSL_conditions}) and substituting $v = 1$, the time cost for an $LSL$ path type is given as

\begin{equation}
T = r(\alpha + \gamma) + \beta = 2k\pi r + r\theta_f + \beta.
\end{equation}

Similarly, based on (\ref{eq:RSR_conditions}), the time cost for an $RSR$ path type is given as

\begin{equation}
T = r(\alpha + \gamma) + \beta = -2k\pi r - r\theta_f + \beta.
\end{equation}

From this point on, let us denote $T_k$ and $\beta_k$ as the values of $T$ and $\beta$ for a given $k$, i.e., $T_k = 2k\pi r + r\theta_f + \beta_k$ for an $LSL$ path and $T_k = -2k\pi r - r\theta_f + \beta_k$ for an $RSR$ path.

\vspace{12pt}
\begin{thm}\label{claim2}
The following are true:
\begin{itemize}
    \item $T_0<T_{1}<T_2<T_{3}$, \ for $4\pi$-arc $LSL$ paths.
    \item $T_{-1}<T_{-2}<T_{-3}<T_{-4}$, \ for $4\pi$-arc $RSR$ paths.
\end{itemize}
\end{thm}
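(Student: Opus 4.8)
The plan is to reduce both chains to a single elementary estimate. Beyond the explicit $\pm 2k\pi r$ offset, the only $k$-dependence of the travel time is through the straight-segment length $\beta_k$, and I will show $\beta_k$ is the Euclidean norm of a point that drifts \emph{linearly} with the travel time at rate $v_w<1$. First I would recall from (\ref{eq:LSL_conditions}) that $T_k = 2k\pi r + r\theta_f + \beta_k$ for $LSL$ (and $T_k = -2k\pi r - r\theta_f + \beta_k$ for $RSR$ from (\ref{eq:RSR_conditions})), where $\beta_k\ge 0$ is the non-negative root, given by (\ref{eq:LSL_beta}) with the $+$ sign, of $\big(A^k-w_x\beta\big)^2+\big(B^k-w_y\beta\big)^2=\beta^2$, with $A^k,B^k$ from (\ref{eq:LSL_AB}) (resp.\ (\ref{eq:RSR_AB})). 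Then I would set $\tilde A_k \equiv A^k - w_x\beta_k$ and $\tilde B_k \equiv B^k - w_y\beta_k$ and observe two things: the defining quadratic is precisely $\tilde A_k^2+\tilde B_k^2=\beta_k^2$, so $\beta_k=\sqrt{\tilde A_k^2+\tilde B_k^2}$; and since $r(2k\pi+\theta_f)+\beta_k=T_k$ up to the sign of the first term, substituting eliminates the explicit $k$, leaving $\tilde A_k = x_f - r\sin\theta_f - w_x T_k$ and $\tilde B_k = y_f - r(1-\cos\theta_f) - w_y T_k$ for $LSL$ (with the signs of the $r\sin\theta_f$ and $r(1-\cos\theta_f)$ terms reversed for $RSR$). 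Geometrically $(\tilde A_k,\tilde B_k)$ is the vector between the centers of the first and last turning circles in the current frame, which is why it translates with the current as the interception time grows.

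Next I would form the consecutive differences, which are pure translations along the current: $(\tilde A_{k+1}-\tilde A_k,\tilde B_{k+1}-\tilde B_k) = -(T_{k+1}-T_k)(w_x,w_y)$ for $LSL$, and $(\tilde A_{k-1}-\tilde A_k,\tilde B_{k-1}-\tilde B_k) = -(T_{k-1}-T_k)(w_x,w_y)$ for $RSR$. Applying the reverse triangle inequality in $\mathbb{R}^2$ to $\beta_k=\sqrt{\tilde A_k^2+\tilde B_k^2}$ then gives the key bound $|\beta_{k+1}-\beta_k|\le v_w|T_{k+1}-T_k|$ (resp.\ $|\beta_{k-1}-\beta_k|\le v_w|T_{k-1}-T_k|$). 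To close it, writing $\Delta\equiv T_{k+1}-T_k$ and using $\Delta = 2\pi r + (\beta_{k+1}-\beta_k)$ from (\ref{eq:LSL_conditions}) turns this into $|\Delta-2\pi r|\le v_w|\Delta|$: if $\Delta\ge 0$ this forces $(1+v_w)\Delta\ge 2\pi r$, while if $\Delta<0$ it forces $2\pi r\le (1-v_w)\Delta<0$, which is impossible since $v_w<1$. Hence $\Delta\ge \frac{2\pi r}{1+v_w}>0$, i.e.\ $T_{k+1}>T_k$; the identical computation with $\Delta\equiv T_{k-1}-T_k$ handles $RSR$. Taking $k=0,1,2$ for $LSL$ and $k=-1,-2,-3$ for $RSR$ yields the two chains. (The endpoint indices $k=0,3$ for $LSL$ and $k=-1,-4$ for $RSR$ need not be feasible per Table~\ref{table:range_conventional}, but (\ref{eq:LSL_beta}) always returns a well-defined $\beta_k\ge 0$ when $v_w<1$, so the ordering holds for the $T_k$ as defined; feasibility of the associated $\alpha,\gamma$ is a separate matter handled by Theorem~\ref{claim1} and Table~\ref{table:range_conventional}.)

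The step I expect to be the main obstacle is the first one: recognizing that $\beta_k$ equals the norm of a point whose displacement in $k$ is exactly the current velocity scaled by the change in travel time, which requires carefully folding the $k$-dependent parts of $A^k,B^k$ into $T_k$. Once that reformulation is in place, the remainder is just the reverse triangle inequality and a one-line sign analysis, and the hypothesis $v_w<1$ enters precisely there, ruling out the $\Delta<0$ branch, so it is indispensable rather than merely convenient.
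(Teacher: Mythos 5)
Your proof is correct and takes essentially the same route as the paper: both arguments reduce the theorem to the key bound $|\beta_{k\pm1}-\beta_k|\le v_w\,|\Delta T_k|$ and then run the identical sign analysis on $\Delta T_k = 2\pi r + \beta_{k\pm1}-\beta_k$ using $v_w<1$. The only difference is presentational — you derive the bound algebraically via the reverse triangle inequality on the vectors $(\tilde A_k,\tilde B_k)$, whose consecutive difference is $-\Delta T_k\,(w_x,w_y)$, whereas the paper obtains the same inequality geometrically from the triangle $O_k,G_k,G_{k+1}$ in the current frame; your $(\tilde A_k,\tilde B_k)$ is exactly the paper's displacement from $O_k$ to $G_k$, so the two derivations coincide.
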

\begin{proof}
Let us denote $\Delta T_k$ as the difference in time cost $T_k$ between two consecutive $k$ values, i.e., for $LSL$ path type,
\begin{equation}\label{eq:LSL_delta_T}
\Delta T_k \triangleq T_{k+1} - T_k = 2\pi r + \beta_{k+1} - \beta_k, \ k = 0, 1, 2,
\end{equation}
and for $RSR$ path type,
\begin{equation}\label{eq:RSR_delta_T}
\Delta T_k \triangleq T_{k-1} - T_k =  2\pi r + \beta_{k-1} - \beta_k, \ k = -1, -2, -3.
\end{equation}

Consider $4\pi$-arc $LSL$ paths. To prove the theorem, we show that $\Delta T_k > 0, \forall k=0,1,2$. Fig.~\ref{fig:claim2_LSL} shows the feasible $4\pi$-arc $LSL$ paths in the CF, corresponding to $k$ (shown in solid blue) and $k+1$ (shown in solid red), to reach the goal pose $(x_f, y_f, \theta_f)$. These paths have the time costs $T_k$ and $T_{k+1}$, respectively. While these two paths share the same start pose, due to different travel times, the corresponding goal poses in the CF become $G_k = (x_f - w_x T_k, y_f - w_y T_k, \theta_f)$ and $G_{k+1} = (x_f - w_x T_{k+1}, y_f - w_y T_{k+1}, \theta_f)$, where $\norm{G_{k+1} - G_k} = \sqrt{w_x^2 \Delta T_k^2 + w_y^2 \Delta T_k^2} = v_w \abs{\Delta T_k}$.

Since an $LSL$ path is comprised of an $\alpha$ arc, a straight line and a $\gamma$ arc, one can equivalently combine the two arcs followed by the straight line to reach the same goal pose, as shown by the dotted line paths in Fig.~\ref{fig:claim2_LSL}, corresponding to $k$ (shown in dotted blue) and $k+1$ (shown in dotted red). According to (\ref{eq:LSL_conditions}), $\alpha + \gamma = 2k \pi + \theta_f$, so if $k$ is increased by $1$, it adds a full $2\pi$ rotation to this combined $\alpha$ and $\gamma$ arc. This implies that after combining these arcs, the blue and red dotted straight lines share the same start point $O_k \in \mathbb{R}^2$. Note that the solid straight lines are parallel to the corresponding dotted straight lines, with lengths $\beta_k$ and $\beta_{k+1}$, respectively.

\begin{figure}[!t]
    \centering
    \subfloat[$LSL$ path type]{
        \includegraphics[width=.48\columnwidth]{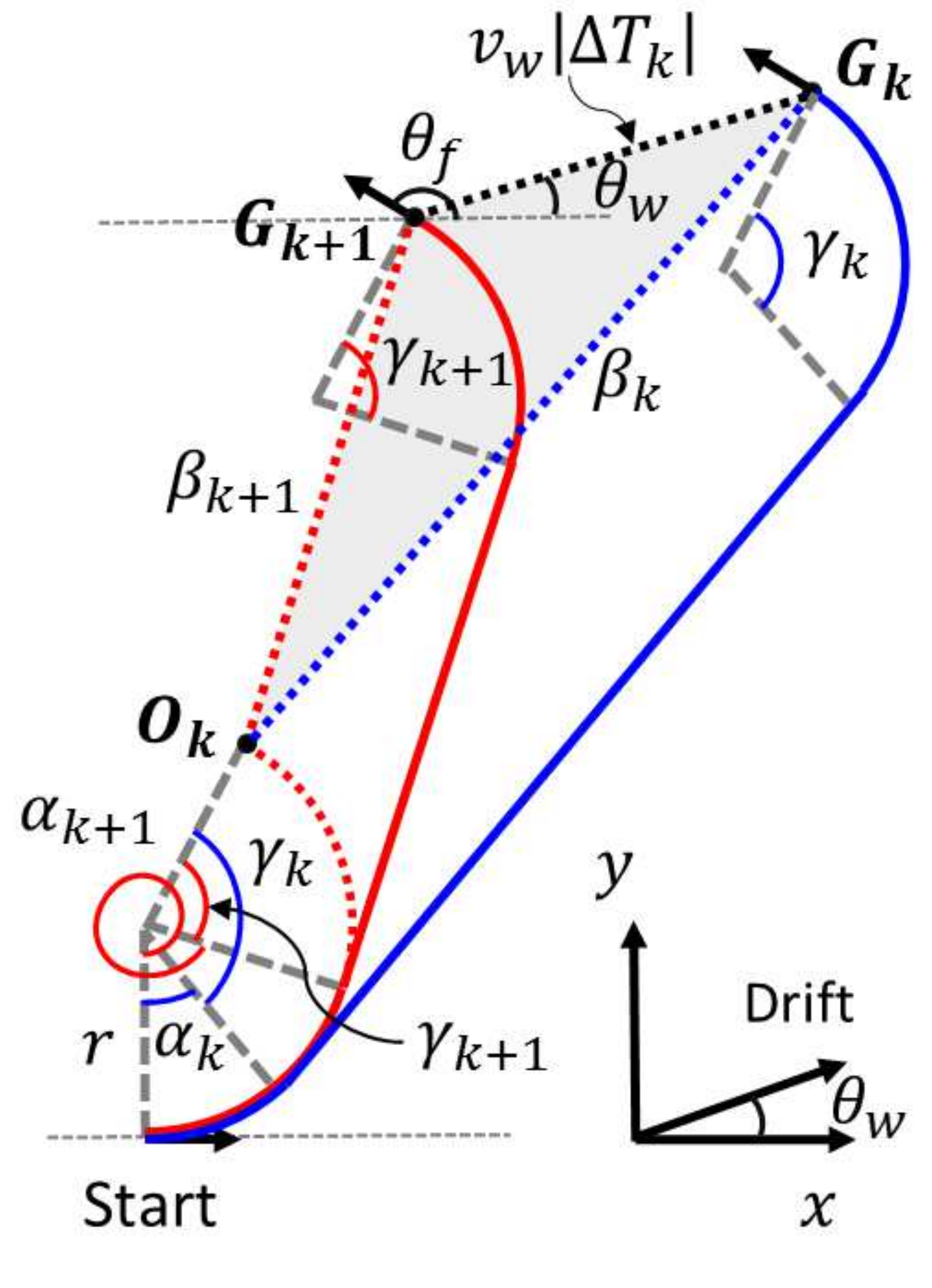}\label{fig:claim2_LSL}}
    \subfloat[$RSR$ path type]{
         \includegraphics[width=.5\columnwidth]{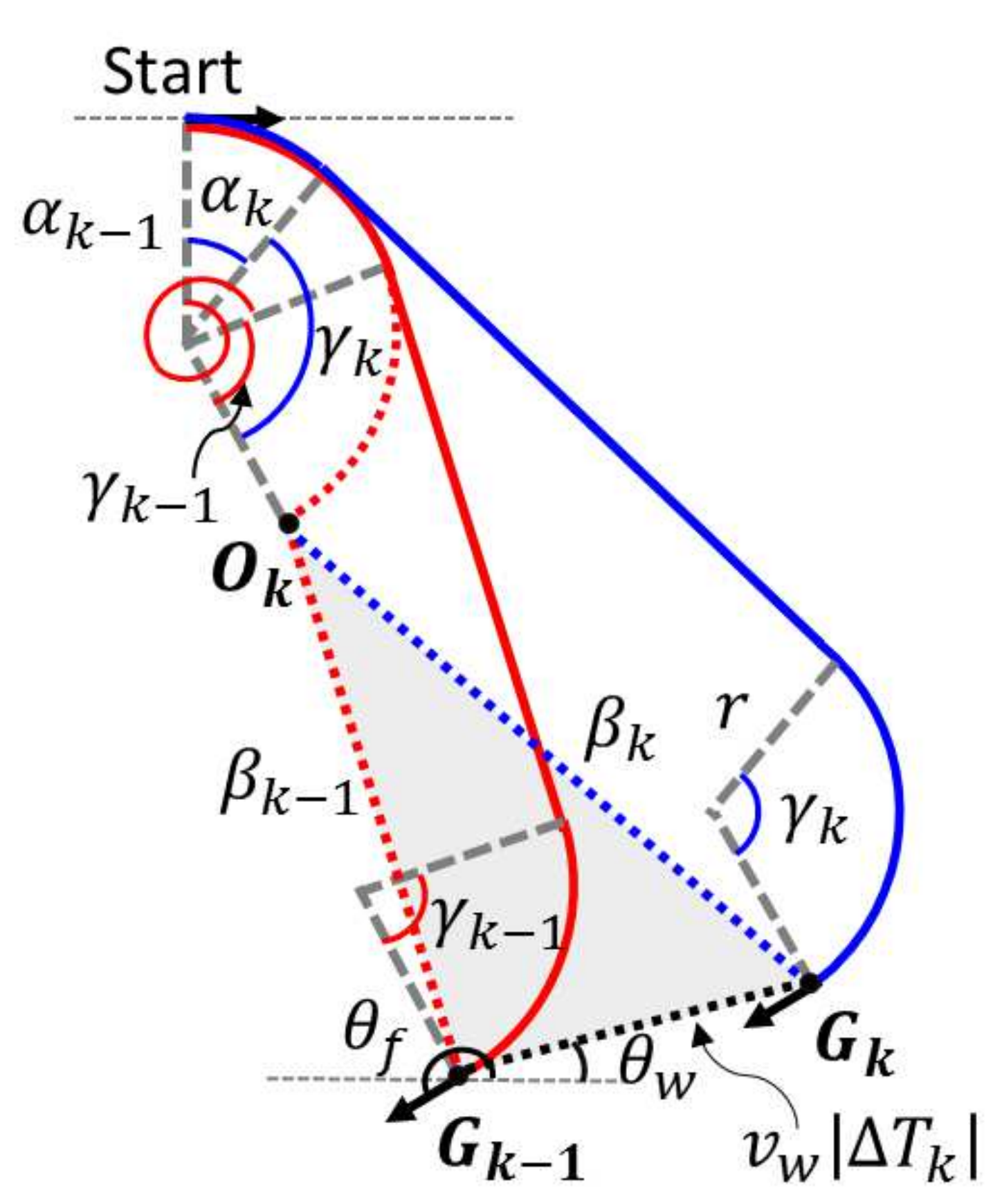}\label{fig:claim2_RSR}
         }\\
     \caption{Illustrative figures to show $\Delta T_k > 0, \forall k$ in Theorem~\ref{claim2}.}\label{fig:claim2}\vspace{-6pt}
\end{figure}

Now consider the triangle formed by $O_k, G_k$ and $G_{k+1}$, shown by the shaded region in Fig.~\ref{fig:claim2_LSL}, where
$\norm{O_k - G_{k}} = \beta_k$ and $\norm{O_k - G_{k+1}} = \beta_{k+1}$. Next, we consider three cases:

\vspace{6pt}
\begin{enumerate}
    \item $\Delta T_k > 0$: In this case, $\norm{G_{k+1} - G_k} = v_w \Delta T_k$. Using the triangle inequalities, we get $\abs{\beta_{k+1} - \beta_k} < v_w \Delta T_k$. By (\ref{eq:LSL_delta_T}), $\beta_{k+1} - \beta_k = \Delta T_k - 2\pi r$. Hence, $\abs{\Delta T_k - 2\pi r} < v_w \Delta T_k$ $\implies$ $\frac{2\pi r}{1+v_w} < \Delta T_k < \frac{2\pi r}{1-v_w}$. Note that if $O_k, G_k$ and $G_{k+1}$ fall on one line, then $\abs{\beta_{k+1} - \beta_k} = v_w \Delta T_k$, then $\Delta T_k = \frac{2\pi r}{1+v_w}$ or $\frac{2\pi r}{1-v_w}$. Therefore, the feasible range of $\Delta T_k$ is

    \vspace{-3pt}
    \begin{equation}\label{eq:Delta_Tk_range}
        \boxed{\Delta T_k \in \bigg[\frac{2\pi r}{1+v_w}, \frac{2\pi r}{1-v_w} \bigg].}
    \end{equation}

\vspace{6pt}
\item $\Delta T_k < 0$: In this case, $\norm{G_{k+1} - G_k} = -v_w \Delta T_k$. Then, based on the triangle inequalities, $\abs{\beta_{k+1} - \beta_k} < -v_w \Delta T_k$. Again substituting $\beta_{k+1} - \beta_k = \Delta T_k - 2\pi r$ from (\ref{eq:LSL_delta_T}), we get $\frac{2\pi r}{1-v_w} < \Delta T_k < \frac{2\pi r}{1+v_w}$. However, since $0 < v_w < 1$, this inequality is invalid. Thus, $\Delta T_k <0$ is impossible.

\vspace{6pt}
\item $\Delta T_k = 0$: In this case, $\norm{G_{k+1} - G_k} = 0$. Then, $\abs{\beta_{k+1} - \beta_k} = 0$ $\implies$ $\Delta T_k - 2\pi r=0$ $\implies$ $\Delta T_k = 2\pi r$, which is a contradiction, hence $\Delta T_k = 0$ is impossible.
\end{enumerate}

\vspace{6pt}
Thus, $\Delta T_k  > 0, \forall k$, and its bounds are given in (\ref{eq:Delta_Tk_range}). Similarly, for $4\pi$-arc $RSR$ paths, the bounds of $\Delta T_k$ can be derived using Fig.~\ref{fig:claim2_RSR}, leading to the same bounds and the derivation is omitted here. Hence proved.
\end{proof}

The following corollary shows that in order to obtain the minimum-time solutions using $4\pi$-arc paths, it is sufficient to use $k=\{0, 1\}$ for $LSL$ path type and $k=\{-1, -2\}$ for $RSR$ path type and the remaining $k$ values are not needed.

\vspace{6pt}
\begin{cor}\label{claim2_cor}
A minimum-time solution for the $4\pi$-arc paths can be obtained by using
\begin{itemize}
\item $k \in \{0,1\}$ for $LSL$ paths and
\item $k \in \{-1,-2\}$ for $RSR$ paths. \end{itemize}
\end{cor}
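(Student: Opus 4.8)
The plan is to derive the corollary directly from Theorem~\ref{claim1} and Theorem~\ref{claim2}, with no new computation. The key observation is that the proof of Theorem~\ref{claim1} proves something slightly stronger than full reachability: for the $LSL$ type, full reachability is already obtained using the single value $k=1$ (and, separately, $k=2$), and for the $RSR$ type it is already obtained using $k=-2$ (and, separately, $k=-3$). Hence, for every goal pose there exists a feasible $4\pi$-arc $LSL$ path with $k=1$ and a feasible $4\pi$-arc $RSR$ path with $k=-2$, with finite travel times $T_1$ and $T_{-2}$ respectively. So the whole point is to argue that the remaining values $k\in\{2,3\}$ (resp. $k\in\{-3,-4\}$) can never produce a strictly faster path than one already available at $k=1$ (resp. $k=-2$).

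First I would fix a goal pose and treat the $LSL$ type. The minimum-time $4\pi$-arc $LSL$ solution is $\min\{T_k : k\in\{0,1,2,3\}\text{ and the }k\text{-path is feasible}\}$, where "feasible" means $\beta_k\ge 0$ and $\alpha,\gamma$ lie in the ranges of Table~\ref{table:range_conventional}. By Theorem~\ref{claim2}, $T_0<T_1<T_2<T_3$, so in particular $T_1<T_2$ and $T_1<T_3$. Since the $k=1$ path is \emph{always} feasible, any admissible $k\in\{2,3\}$ yields a time cost strictly larger than the always-available $T_1$, hence cannot realize the minimum. Therefore the minimum over $k\in\{0,1,2,3\}$ equals the minimum over $k\in\{0,1\}$, which proves the claim for $LSL$. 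The $RSR$ case is identical after replacing $k=1$ by $k=-2$ and using the chain $T_{-1}<T_{-2}<T_{-3}<T_{-4}$ from Theorem~\ref{claim2}: the $k=-2$ path is always feasible and $T_{-2}<T_{-3}$, $T_{-2}<T_{-4}$, so $k\in\{-3,-4\}$ is never optimal and $k\in\{-1,-2\}$ suffices. (As a byproduct this gives the "same computational workload" statement, since the $2\pi$-arc paths already use exactly $k\in\{0,1\}$ and $k\in\{-1,-2\}$.)

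The step I expect to need the most care is justifying that the inequality chain of Theorem~\ref{claim2} may be invoked even when some intermediate $k$ does not yield a feasible path — e.g. when $k=0$ is infeasible but $k=3$ is. This is handled by observing that for every $k$ the number $T_k = 2k\pi r + r\theta_f + \beta_k$ is well defined, because for $v_w<1$ the quadratic in~(\ref{eq:LSL_beta}) has a unique nonnegative root $\beta_k$ (its two roots have product $-(({A^k})^2+({B^k})^2)/(1-v_w^2)\le 0$, so exactly one is nonnegative), and the derivation in Theorem~\ref{claim2} bounds $\Delta T_k = T_{k+1}-T_k$ purely through the triangle inequality applied to the points $O$, $G_k$, $G_{k+1}$, which exist for every $k$. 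Thus $T_0<T_1<T_2<T_3$ (and its $RSR$ analogue) holds as an inequality among real numbers regardless of feasibility; feasibility only restricts \emph{which} indices $k$ are admissible, and it can never let an admissible index $k\ge 2$ beat the always-admissible $k=1$. Finally, I would note that the corollary does not assert that both $k=0$ and $k=1$ are always feasible — only that a minimum-time $4\pi$-arc solution, which exists by the full reachability of Theorem~\ref{claim1}, is always attained within $k\in\{0,1\}$ for $LSL$ and $k\in\{-1,-2\}$ for $RSR$.
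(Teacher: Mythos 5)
Your proposal is correct and follows essentially the same route as the paper: it combines the time-cost ordering $T_0<T_1<T_2<T_3$ (resp.\ $T_{-1}<T_{-2}<T_{-3}<T_{-4}$) from Theorem~\ref{claim2} with the fact, established in the proof of Theorem~\ref{claim1}, that $k=1$ (resp.\ $k=-2$) alone already gives full reachability, so the indices $k\in\{2,3\}$ (resp.\ $k\in\{-3,-4\}$) can never be optimal. Your added remark that $T_k$ is well defined for every $k$ via the unique nonnegative root of~(\ref{eq:LSL_beta}) is a careful refinement of the same argument rather than a different approach.
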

\begin{proof}
Theorem~\ref{claim2} implies that based on time costs, the preferred solutions follow the order $k=0,1,2,3$ for $LSL$ paths and $k=-1,-2,-3,-4$ for $RSR$ paths. Theorem~\ref{claim1} suggests that for $LSL$ paths, $k = 0$  solutions do not provide full reachability; however full reachability can be achieved by $k=1$ solutions. Similarly, for $RSR$ paths, $k = -1$ solutions do not provide full reachability; however full reachability can be achieved by $k=-2$ solutions. Thus, in order to get full reachability and to obtain minimum-time paths, one must solve only for $k \in \{0,1\}$ for $LSL$ paths, and $k \in \{-1,-2\}$ for $RSR$ paths. Hence proved.
\end{proof}

\begin{rem}Corollary~\ref{claim2_cor} implies that the computation workload required to get a solution using the $4\pi$-arc paths is the same as that using the $2\pi$-arc paths.
\end{rem}

\begin{cor}\label{claim2_cor2}
A minimum-time $4\pi$-arc $LSL$ or $RSR$ solution must satisfy $\alpha$ + $\gamma$ < 4$\pi$.
\end{cor}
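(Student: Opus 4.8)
The plan is to obtain this as a short consequence of Corollary~\ref{claim2_cor} together with the closure constraint on $\alpha+\gamma$ that is already built into the boundary equations. The key observation is that $\alpha+\gamma$ is not a free quantity: the last line of~(\ref{eq:LSL_conditions}) pins it to $\alpha+\gamma = 2k\pi+\theta_f$ for an $L^\alpha S^\beta L^\gamma$ path, and the last line of~(\ref{eq:RSR_conditions}) pins it to $\alpha+\gamma = -2k\pi-\theta_f$ for an $R^\alpha S^\beta R^\gamma$ path, so the sum is completely determined once the integer $k$ is fixed. Thus the claim reduces entirely to controlling the admissible range of $k$.

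First I would invoke Corollary~\ref{claim2_cor}: every minimum-time $4\pi$-arc solution can be realized with $k\in\{0,1\}$ for the $LSL$ type and with $k\in\{-1,-2\}$ for the $RSR$ type. Substituting the extremal admissible index into the relations above then gives the bound directly. For an $LSL$ path, $k\le 1$ yields $\alpha+\gamma = 2k\pi+\theta_f \le 2\pi+\theta_f < 4\pi$, using $\theta_f\in[0,2\pi)$. For an $RSR$ path, $k\ge -2$ yields $\alpha+\gamma = -2k\pi-\theta_f \le 4\pi-\theta_f \le 4\pi$, again using $\theta_f\in[0,2\pi)$, which is strict for every $\theta_f>0$. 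So for all but one degenerate heading the one-line substitution already closes the argument.

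The only point requiring care — and the closest thing here to an obstacle — is the degenerate $RSR$ case $\theta_f=0$ with $k=-2$, where the estimate above is attained with equality. I would dispose of it by observing that $\theta_f=0$ makes the goal heading coincide with the start heading, so the candidate actually selected as the minimum-time solution is either the always-feasible $LSL$ branch (Theorem~\ref{claim1}) or, when feasible, the $k=-1$ $RSR$ branch, which by the time-cost ordering of Theorem~\ref{claim2} strictly undercuts the $k=-2$ $RSR$ candidate; hence no minimum-time solution sits at $\alpha+\gamma=4\pi$. (If one wishes to avoid even this, the statement can simply be read as $\alpha+\gamma\le 4\pi$ at $\theta_f=0$.) Apart from checking this boundary subcase, there is no real difficulty: the substantive work has already been done in Theorems~\ref{claim1} and~\ref{claim2} and in Corollary~\ref{claim2_cor}, and this corollary merely records their immediate arithmetic consequence.
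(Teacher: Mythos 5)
Your core argument is exactly the paper's: the proof of Corollary~\ref{claim2_cor2} in the paper is precisely the one-line substitution of $k=1$ into (\ref{eq:LSL_conditions}) and $k=-2$ into (\ref{eq:RSR_conditions}) after invoking Corollary~\ref{claim2_cor} and $\theta_f<2\pi$, so on the main line you and the paper coincide. Where you go further is the boundary subcase $\theta_f=0$, $k=-2$ for $RSR$, where the substitution gives $\alpha+\gamma=4\pi-\theta_f=4\pi$ with equality; the paper silently glosses over this, so flagging it is to your credit, but your dismissal of it is not yet airtight. Saying the selected minimizer is ``either the always-feasible $LSL$ branch or, when feasible, the $k=-1$ $RSR$ branch'' does not close the case: feasibility of the $LSL$ branch does not imply it is the one selected (it may be slower than the $k=-2$ $RSR$ candidate), and you have not shown that the $k=-1$ $RSR$ branch is feasible at $\theta_f=0$. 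The clean fix is available from the paper's own machinery: when $\theta_f=0$ the feasible range of $\alpha$ for $RSR$ with $k=-1$ is $[0,2\pi]$ (Table~\ref{table:range_conventional}), and since $\omega_{RSR}^{-1}(0)=\omega_{RSR}^{-1}(2\pi)$ the reachability ray of (\ref{eq:RSR_reachability}) completes a full clockwise sweep (Lemma~\ref{lem:swipe}), so the $k=-1$ $RSR$ branch is feasible for \emph{every} goal at $\theta_f=0$; Theorem~\ref{claim2} ($T_{-1}<T_{-2}$) then rules out $k=-2$ as the minimum-time $RSR$ solution, giving the strict inequality without any appeal to the $LSL$ type.
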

\begin{proof}
Using Corollary~\ref{claim2_cor} and that $\theta_f < 2\pi$, substitute $k=1$ into (\ref{eq:LSL_conditions}) and $k=-2$ into (\ref{eq:RSR_conditions}), one can easily get the result. Hence proved.
\end{proof}

\begin{rem}\label{rem:parameters}
As seen from Table \ref{table:range_conventional}, the feasible ranges of parameters $\alpha$ and $\gamma$ for the $4\pi$-arc $LSL$ ($RSR$) paths for $k = 0$ ($k = -1$) are the same as those of the corresponding $2\pi$-arc paths. However, for $k = 1$ ($k = -2$), the parameter ranges for $4\pi$-arc $LSL$ ($RSR$) paths form supersets of the corresponding ranges of the $2\pi$-arc paths.
\end{rem}

\vspace{6pt}
\begin{thm}\label{claim3}
The time costs of $4\pi$-arc path solutions are lower than or same as those of the $2\pi$-arc path solutions.
\end{thm}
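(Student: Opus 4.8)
The plan is to show that the feasible solution set for $4\pi$-arc paths contains the feasible solution set for $2\pi$-arc paths, and then invoke the fact that the minimum over a larger set cannot exceed the minimum over a smaller subset. First I would fix an arbitrary goal pose $(x_f, y_f, \theta_f)$ for which a $2\pi$-arc solution exists (if none exists, the $2\pi$-arc time cost is $+\infty$ by convention and the inequality is trivial, while Theorem~\ref{claim1} guarantees a $4\pi$-arc solution still exists). I would then argue that any $2\pi$-arc $LSL$ (resp. $RSR$) solution is, parameter for parameter, also a $4\pi$-arc solution: for $LSL$ the $2\pi$-arc solutions use $k \in \{0,1\}$ with $\alpha,\gamma \in [0,2\pi)$, and by Remark~\ref{rem:parameters} these parameter ranges are subsets of the $4\pi$-arc ranges for the same $k$ values (Table~\ref{table:range_conventional}); the boundary equations~(\ref{eq:LSL_conditions}) and the resulting analytical expressions~(\ref{eq:LSL_beta})--(\ref{eq:LSL_alpha}) are identical in both cases, the only difference being the modulus $\kappa$, which does not alter the admissible triples $\{\alpha,\beta,\gamma\}$ when $\alpha,\gamma$ already lie below $2\pi$. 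The analogous statement for $RSR$ follows from~(\ref{eq:RSR_conditions}) and~(\ref{eq:RSR_alpha}) with $k \in \{-1,-2\}$.

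Next I would make explicit that the time cost functional is the same in both regimes: by the displayed formulas $T = 2k\pi r + r\theta_f + \beta$ for $LSL$ and $T = -2k\pi r - r\theta_f + \beta$ for $RSR$, the cost of a given admissible triple depends only on $k$ and $\beta$, not on whether we label the path ``$2\pi$-arc'' or ``$4\pi$-arc.'' Hence every $2\pi$-arc solution is a $4\pi$-arc solution with the identical time cost, so
\begin{equation*}
T_{4\pi} = \min_{4\pi\text{-arc solutions}} T \ \leq\ \min_{2\pi\text{-arc solutions}} T = T_{2\pi},
\end{equation*}
where both minima range over the two path types $LSL$ and $RSR$ and their admissible $k$ values. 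By Corollary~\ref{claim2_cor} the $4\pi$-arc minimum need only be taken over $k \in \{0,1\}$ (resp. $\{-1,-2\}$), which keeps the comparison between sets of equal cardinality; the strict improvement occurs precisely when the extra $k=1$ (resp. $k=-2$) solutions admitted by the enlarged parameter range $[0,2\pi+\theta_f]$ (resp. $[0,4\pi-\theta_f]$) beyond $[0,2\pi)$ yield a smaller $\beta$, as illustrated by the example of Figs.~\ref{fig:claim1_2pi}--\ref{fig:claim1_4pi} and Fig.~\ref{fig:frontpage_fig2}.

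I expect the main obstacle to be the careful bookkeeping around the modulus $\kappa$ and the $k$-indexing: one must verify that reducing $\alpha$ modulo $4\pi$ rather than $2\pi$ does not spuriously create or destroy solutions, and that the ``$2\pi$-arc with $k=1$'' family genuinely embeds into the ``$4\pi$-arc with $k=1$'' family rather than mapping to a different $k$. This is handled by observing that the constraint $\alpha + \gamma = 2k\pi + \theta_f$ ties $k$ to the total turning, so a $2\pi$-arc solution with a particular $k$ keeps that same $k$ when viewed as a $4\pi$-arc solution, and the only widening is in the individual ranges of $\alpha$ and $\gamma$ (Remark~\ref{rem:parameters}), under which the $2\pi$-arc admissible region is a subset. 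Once this embedding is pinned down, the inequality is immediate from the monotonicity of the minimum under set inclusion together with the identity of the cost functionals.
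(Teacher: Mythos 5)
Your proposal is correct and takes essentially the same route as the paper: the paper's proof also rests on the observation (Remark~\ref{rem:parameters}) that every valid $2\pi$-arc solution is a valid $4\pi$-arc solution with identical time cost, together with Theorem~\ref{claim1} to cover goal poses where $2\pi$-arc solutions are missing. The only difference is presentational --- the paper organizes the inclusion argument into three explicit cases (both, neither, or exactly one of the $2\pi$-arc $LSL$/$RSR$ solutions exists), whereas you phrase it once as monotonicity of the minimum under set inclusion.
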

\begin{proof}
First, consider the case when both $2\pi$-arc $LSL$ and $RSR$ solutions exist for a given goal pose.  Remark~\ref{rem:parameters} indicates that any valid $2\pi$-arc path solution is also a valid $4\pi$-arc path solution. Hence, in this case the time cost of  $4\pi$-arc path solution is the same as that of the $2\pi$-arc path solution.

Second, consider the case when neither of the $2\pi$-arc $LSL$ and $RSR$ solutions exist for a given goal pose. In this case, Theorem~\ref{claim1} guarantees that $4\pi$-arc $LSL$ and $RSR$ solutions exist for that goal pose.

Third, consider the case when only one of the $2\pi$-arc $LSL$ or $RSR$ path solution exists for a given goal pose, i.e., the other path type does not provide a solution. Thus, the dominant solution is the only existing path type. However, from Theorem~\ref{claim1}, for $4\pi$-arc paths both $LSL$ and $RSR$ paths exist and the dominant solution is selected from these two path types with the minimum time cost. Thus, due to the existence of an extra solution provided by the $4\pi$-arc paths, the time cost of the dominant path could be better than or same as that of the single solution provided by the $2\pi$-arc paths. The examples below validate this case. Hence proved.
\end{proof}

\begin{figure}[t]
    \centering
    \subfloat[Cost map of $2\pi$-arc paths.]{
        \includegraphics[width=0.215\textwidth]{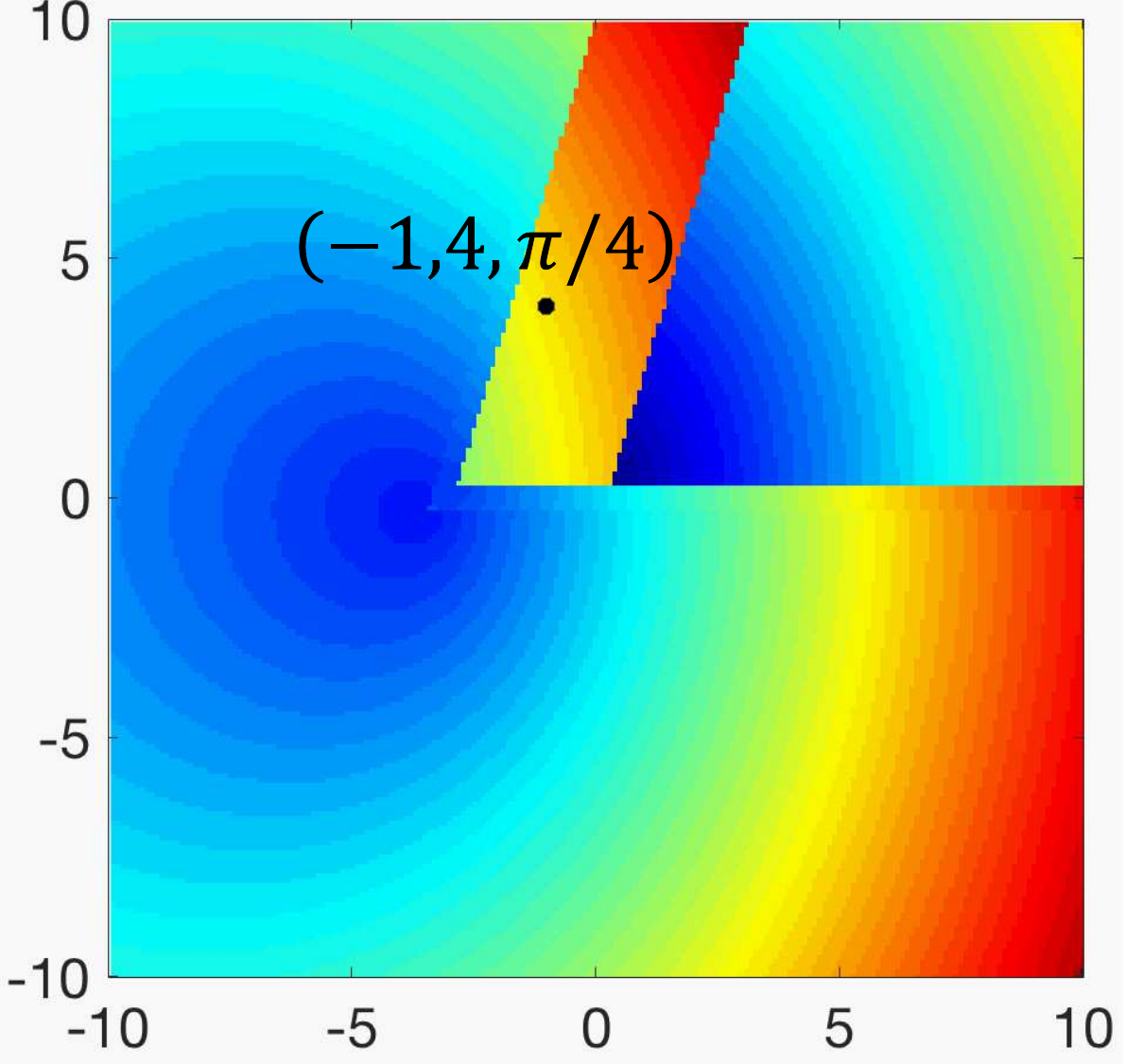}\label{fig:example_2pi}}
    \subfloat[Cost map of $4\pi$-arc paths.]{
        \includegraphics[width=0.245\textwidth]{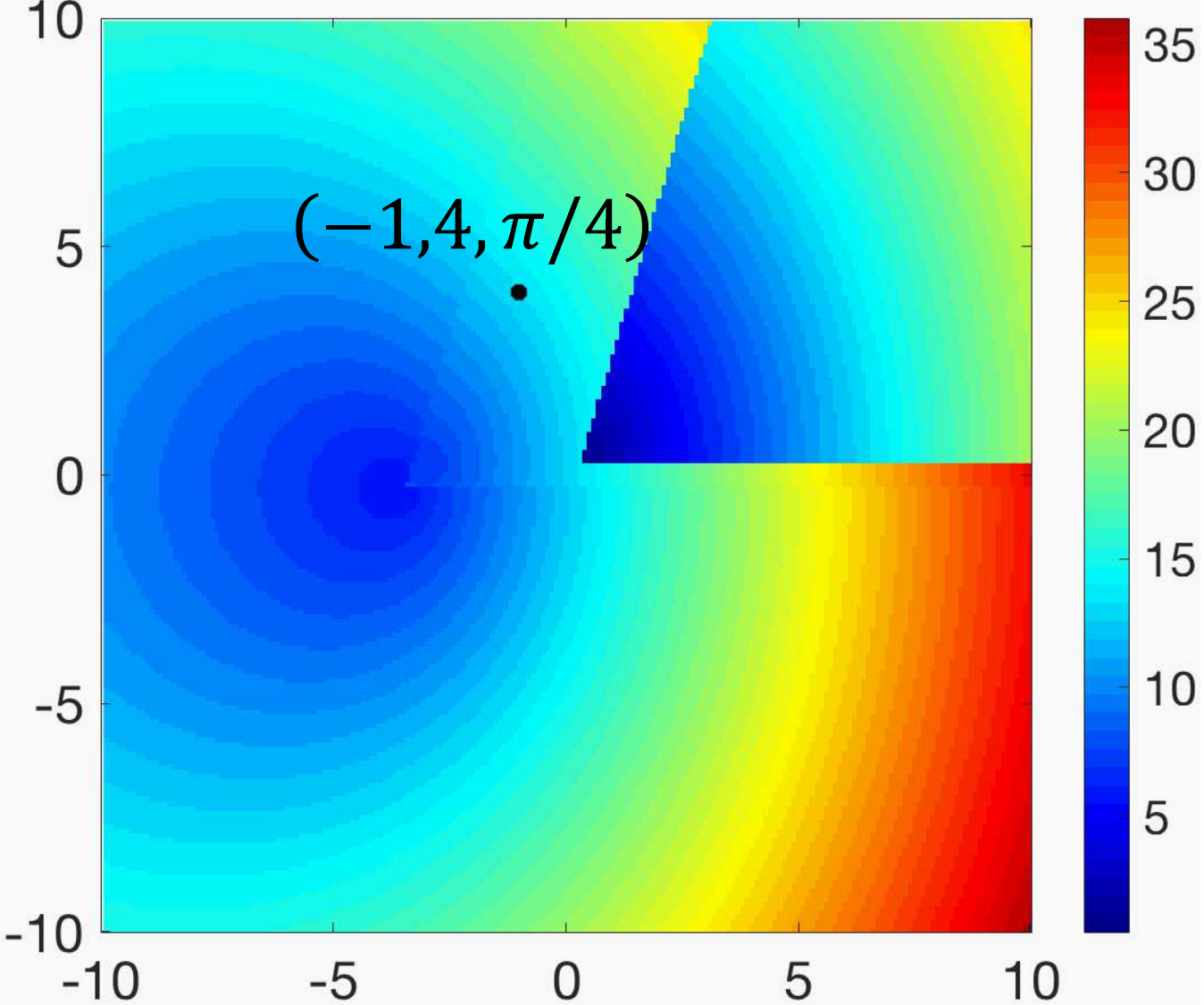}\label{fig:example_4pi}} \\
    \subfloat[The $2\pi$-arc and $4\pi$-arc path solutions in the IF and  CF. The start pose $(x_0, y_0,\theta_0)=(0,0,0)$ and the goal pose $(x_f, y_f, \theta_f)=(-1, 4, \pi/4)$. The optimal $2\pi$-arc path has: $\alpha=1.890\pi, \beta=12.691$ and $\gamma=1.860\pi$; and the optimal $4\pi$-arc path has: $\alpha=0.206\pi, \beta=6.143$ and $\gamma=2.044\pi$.]{
        \includegraphics[width=0.48\textwidth]{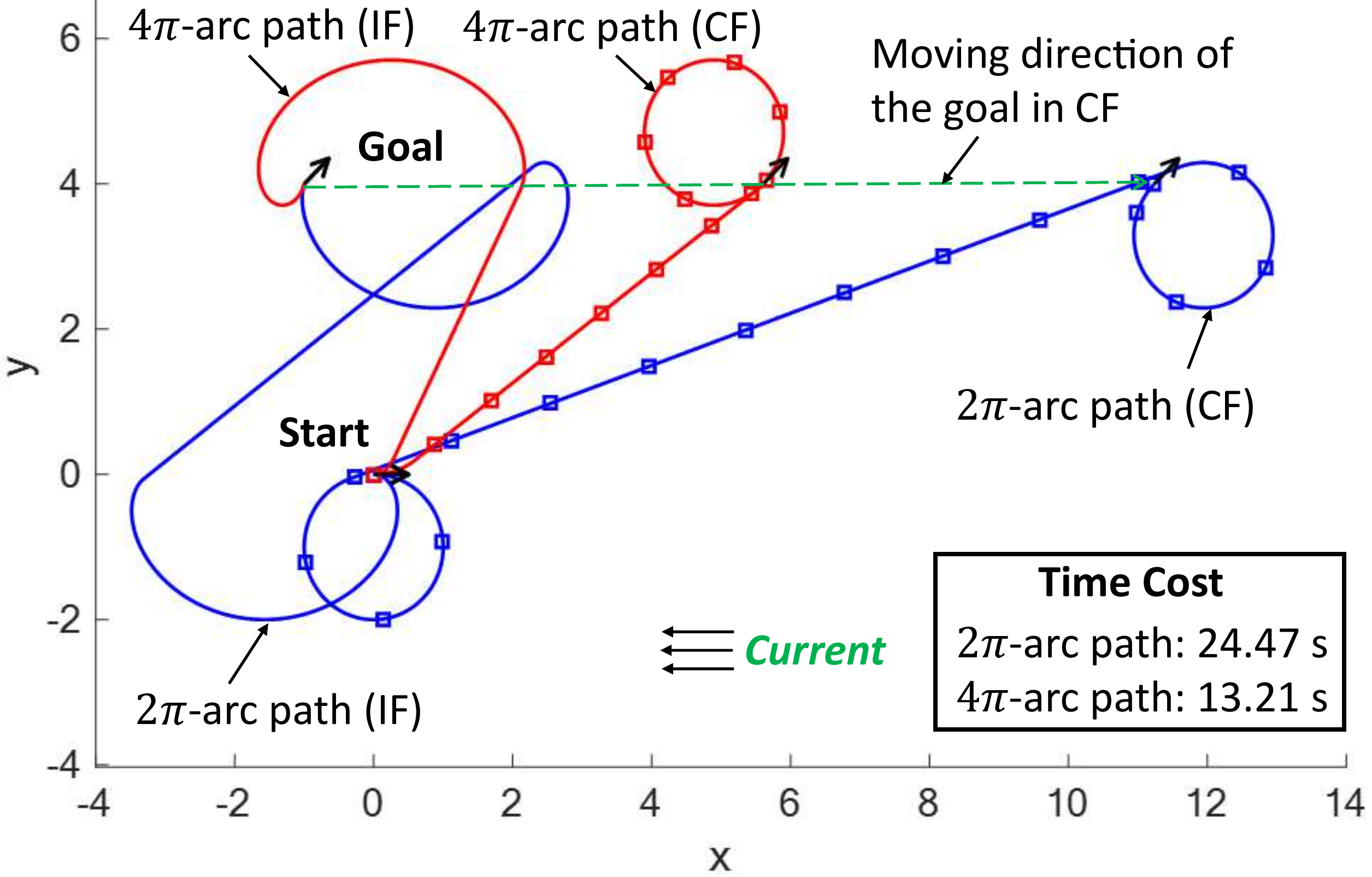}\label{fig:example_paths}}
    \caption{An example to illustrate the result of Theorem~\ref{claim3} that the $4\pi$-arc paths provide faster solutions than the $2\pi$-arc paths.} \vspace{-6pt}
     \label{fig:claim3_example1}
\end{figure}

\vspace{0pt}
\textit{Example}:
We show an example where the $4\pi$-arc paths provide faster (i.e., lower time cost) solutions as compared to the $2\pi$-arc paths. We first construct the time cost map for a fixed set of $\theta_f$, $v_w$ and $\theta_w$, where each $(x_f, y_f)$ is assigned the time cost of the dominant path between $LSL$ and $RSR$ paths.

Fig.~\ref{fig:claim3_example1} shows the example generated for an environment with current of $v_w = 0.5$~m/s and $\theta_w = \pi$. For constructing the time cost map, the goal poses are varied within $x_f, y_f \in [-10,10]$~m with a fixed heading angle $\theta_f = \pi/4$. Figs.~\ref{fig:example_2pi} and \ref{fig:example_4pi} show the time cost maps for $2\pi$-arc paths and $4\pi$-arc paths, respectively. The color code indicates the value of the time cost. Clearly, there exist many goal poses where $4\pi$-arc paths provide significantly lower time costs.

Next, we pick a goal pose where $4\pi$-arc paths provide a lower time cost, say $(x_f,y_f,\theta_f) = (-1, 4, \pi/4)$. Then, we draw the optimal $2\pi$-arc and $4\pi$-arc paths in the IF and the CF, as shown in Fig.~\ref{fig:example_paths}. The $2\pi$-arc path follows the $RSR$ path type, and requires a total time cost of $24.47$~s. In comparison, the $4\pi$-arc path follows the $LSL$ path type and the total time cost is reduced to $13.21$~s. This is because on the $2\pi$-arc path, the vehicle has to travel a longer straight-line segment that is almost in an opposite direction to the current, hence its actual speed in the inertial frame becomes slower. On the other hand, the $4\pi$-arc path first makes a small left turn, followed by a much shorter straight-line segment; then, it starts circling for over $2\pi$ while letting the current help it to reach the goal.

\vspace{6pt}
\begin{thm}\label{rem:over_4pi}
The time cost $T$ cannot be reduced further by extending the ranges of arc segments ($\alpha$ and $\gamma$) over $4\pi$.
\end{thm}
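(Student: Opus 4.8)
The plan is to show that enlarging the admissible ranges of $\alpha$ and $\gamma$ beyond $4\pi$ only appends candidate solutions indexed by larger values of $\abs{k}$, each of which is strictly slower than a candidate that is already available within $[0,4\pi)$; hence the optimal time cost is unaffected. I would carry out the argument for the $LSL$ path type and note that $RSR$ is entirely symmetric.

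The first step is to observe that the reasoning in the proof of Theorem~\ref{claim2} is uniform in $k$. For an $LSL$ path, $T_k = 2k\pi r + r\theta_f + \beta_k$, and for every $k\ge 0$ one can form exactly the triangle $O_k G_k G_{k+1}$ used there, with $\norm{O_k-G_k}=\beta_k$, $\norm{O_k-G_{k+1}}=\beta_{k+1}$ and $\norm{G_{k+1}-G_k}=v_w\abs{\Delta T_k}$; increasing $k$ by one adds precisely a single full $2\pi$ turn to the combined arc, so the shared base point $O_k$ and the parallelism of the straight segments are preserved no matter how many times the combined arc already wraps around. Re-running the three-case analysis of Theorem~\ref{claim2} verbatim then yields $\Delta T_k = T_{k+1}-T_k \in \big[\tfrac{2\pi r}{1+v_w},\,\tfrac{2\pi r}{1-v_w}\big]$ for all $k\ge 0$, i.e. $T_0<T_1<T_2<\cdots$; the analogous strictly increasing chain in $\abs{k}$ holds for $RSR$ paths. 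Therefore, over all feasible index values, the minimum time cost of each path type is attained at the smallest feasible $\abs{k}$.

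The second step is to invoke Theorem~\ref{claim1} together with Corollary~\ref{claim2_cor}: for every goal pose there is a feasible $LSL$ solution with $k\in\{0,1\}$ and a feasible $RSR$ solution with $k\in\{-1,-2\}$, and by Table~\ref{table:range_conventional} all of these already satisfy $\alpha,\gamma\in[0,2\pi+\theta_f]\subset[0,4\pi)$ since $\theta_f<2\pi$. Hence the $4\pi$-arc family already contains, for every goal pose, the globally minimum-time $LSL$ and $RSR$ candidates among all admissible $k$. Any extension of the ranges beyond $4\pi$ can only append candidates with $\abs{k}\ge 2$ (in fact larger still), and by the monotonicity from the first step every such candidate is strictly slower than the $k\in\{0,1\}$ (resp.\ $k\in\{-1,-2\}$) candidate already available; consequently $T$ cannot be reduced, which proves the theorem. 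I expect the only delicate point to be the uniform-in-$k$ claim of the first step — verifying that the arc-combining construction of Theorem~\ref{claim2} still produces a genuine triangle when the combined arc winds around more than once — and this is immediate once one notes that passing from $k$ to $k+1$ always contributes exactly one extra full turn; everything else is bookkeeping with the index ranges in Table~\ref{table:range_conventional}.
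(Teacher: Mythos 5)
Your proposal is correct and follows essentially the same route as the paper's proof: it combines the monotonicity $\Delta T_k>0$ from Theorem~\ref{claim2} with the full reachability of $k\in\{0,1\}$ ($LSL$) and $k\in\{-1,-2\}$ ($RSR$) from Theorem~\ref{claim1} and Corollary~\ref{claim2_cor}, and the observation (via Table~\ref{table:range_conventional}) that these candidates' parameter ranges lie within, and are unchanged by, any extension beyond $[0,4\pi)$. The only difference is presentational: you make explicit the uniform-in-$k$ validity of the triangle argument behind $\Delta T_k>0$, which the paper invokes implicitly when it writes ``by Theorem~\ref{claim2}, $\Delta T_k>0$, $\forall k$.''
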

\begin{proof}
Suppose the ranges of $\alpha$ and $\gamma$ are defined over $[0, 2n\pi)$, where $n > 2$ and $n \in \mathbb{N}^+$. Then, using the same procedure as described in Section~\ref{sec:method}, we get a larger set of feasible values of $k$, s.t. for $LSL$ paths, $k \in \{0,1,\ldots,2n-1\}$, and for $RSR$ paths, $k \in \{-1, -2,\ldots, -2n\}$.

Then, one can derive the feasible ranges for $\alpha$ and $\gamma$. Consider a $2n\pi$-arc $LSL$ path, where $\alpha \in [0, 2n\pi)$ and $\gamma \in [0, 2n\pi)$. We examine only $k=0,1$ cases as necessary.

\begin{itemize}
\item $k = 0$ (i.e., $\alpha + \gamma = \theta_f < 2\pi$): Now, $\gamma \geq 0$ $\implies$ $\alpha \leq \theta_f$.
Similarly, $\alpha \geq 0$ $\implies$ $\gamma \leq \theta_f$. Thus, the feasible range for both $\alpha$ and $\gamma$ is $[0,\theta_f]$.

\item $k = 1$ (i.e., $\alpha + \gamma = 2\pi + \theta_f< 4\pi$): Again, $\gamma \geq 0$ $\implies$ $\alpha \leq 2\pi + \theta_f$. Similarly, $\alpha \geq 0$ $\implies$ $\gamma \leq  2\pi + \theta_f$. Thus, the feasible range for both $\alpha$ and $\gamma$ is $[0, 2\pi + \theta_f]$.
\end{itemize}

The above analysis indicates that for  $2n\pi$-arc $LSL$ paths, if $n > 2$, the feasible ranges of $\alpha$ and $\gamma$ for $k = 0, 1$ are the same to the corresponding ones for $4\pi$-arc $LSL$ paths, as presented in Table~\ref{table:range_conventional}. Similarly, one can verify that for $2n\pi$-arc $RSR$ paths, if $n > 2$, the feasible ranges of $\alpha$ and $\gamma$ for $k = -1, -2$ are also the same to the corresponding ones for $4\pi$-arc $RSR$ paths.

Since the feasible ranges of $\alpha$ and $\gamma$ for $2n\pi$-arcs are the same as those for $4\pi$-arc paths, by Theorem~\ref{claim1} full reachability is achieved using $k = 0, 1$ for $LSL$ paths and $k = -1, -2$ for $RSR$ paths. Further, by Theorem~\ref{claim2}, $\Delta T_k > 0, \forall k$. Therefore, for $n > 2$, we only need to search over $k = 0,1$ for $LSL$ paths and $k = -1,-2$ for $RSR$ paths to get the minimum-time path. This implies that the time cost $T$ is not reduced by extending the feasible ranges of $\alpha$ and $\gamma$ over $4\pi$. Hence proved.
\end{proof}

\vspace{0pt}
\section{Results and Discussion}\label{sec:results}

This section presents the results of the proposed approach, which uses the $4\pi$-arc $LSL$ and $RSR$ paths, in comparison to the Dubins approach, which uses the six $2\pi$-arc paths. We discuss the performance of these two approaches first in an environment with static current and then in an environment with dynamically changing current. We conduct Monte Carlo simulations as needed for statistical performance evaluation. The simulations were done on a computer with $2.4$~GHz and $8$~GB RAM. In order to obtain a solution using the Dubins approach, the transcendental functions are solved using the function \textit{fsolve} in MATLAB.  On average, the Dubins approach took $\sim8.72$~s to get a solution with $100$ initial guesses, while the $4\pi$-arc paths approach took only $\sim0.64$~ms which is orders of magnitude faster than that of the Dubins computation.

\subsection{Comparison of $4\pi$-arc $LSL$ and $RSR$ solutions with Dubins solutions in a static current environment}
\label{app:dubins_comparison}
\vspace{6pt}
First, we considered an environment with a static current where the planning is done offline. This comparative study is presented using two metrics: a) the solution quality (i.e., the travel time cost) and b) the total time cost (i.e., the offline computation time cost plus the travel time cost).

\textit{Simulation Setup}: The start pose is fixed at $(x_0, y_0, \theta_0)=(0,0,0)$. Then, $80$ different goal positions are distributed uniformly on the boundaries of concentric squares at a distance of $R=\{5, 10, 50, 100, 200\}$~m around the origin. For each goal position, $6$ different heading angles $\theta_f \in \{ \frac{m\pi}{3}, m = 0,\ldots 5\}$ are considered. This leads to a total of $480$ goal poses. The vehicle and  current speeds are taken to be $v=1$~m/s and $v_w=0.5$~m/s, respectively, where $6$ different current heading angles $\theta_w \in \{ \frac{m\pi}{3}, m = 0,\ldots 5\}$ are considered, thus leading to a total number of $2880$ runs.

For each run, the travel time cost and computation time cost are obtained for the two approaches. Fig~\ref{fig:comparison_Dubins} shows the savings obtained with the proposed $4\pi$-arc path solutions as compared to the Dubins solutions. Fig~\ref{fig:comparison_Dubins_travel} shows the savings in travel time, computed as $T_{Dubins}$ - $T_{4\pi}$, where $T_{Dubins}$ and $T_{4\pi}$ refer to the travel time costs of Dubins paths and $4\pi$-arc paths, respectively. As seen in the figure, in more than $50\%$ of the cases, the travel time costs of $4\pi$-arc path solutions match those of the Dubins solutions. Although the performance of Dubins paths is better than the $4\pi$-arc paths for the remaining cases, the travel time cost difference is not that significant.

Fig.~\ref{fig:comparison_Dubins_total} shows the total time cost obtained by adding the computation time costs taken by the two approaches to their respective travel time costs. It is seen that in more than $90\%$ of the cases the total time of the $4\pi$-arc solutions is lower than that of the Dubins solutions; thus, $4\pi$-arc solutions yield a superior performance upon considering the computation times.

Based on these trends, it is observed that although Dubins solutions are suitable for applications requiring offline planning, they do not provide significant advantage over the $4\pi$-arc $LSL$ and $RSR$ solutions in terms of travel time costs. Furthermore, when computation times are added then Dubins solutions provide worse total time costs in a significant majority of cases. Moreover, as discussed in Section \ref{changingcurrents}, for applications requiring online planning in dynamic current environments, the high computation times of Dubins solutions cause significant vehicle drifts, thus, resulting in longer sub-optimal trajectories which sometimes do not even converge to the goal pose. In such situations, $4\pi$-arc paths lead to faster and reliable solutions with negligible drifts allowing the vehicle to reach the goal pose precisely in shorter times.

\begin{figure}[t]
    \centering
    \subfloat[Savings in travel time: $T_{Dubins}$ - $T_{4\pi}$.]{
        \includegraphics[width=0.50\columnwidth]{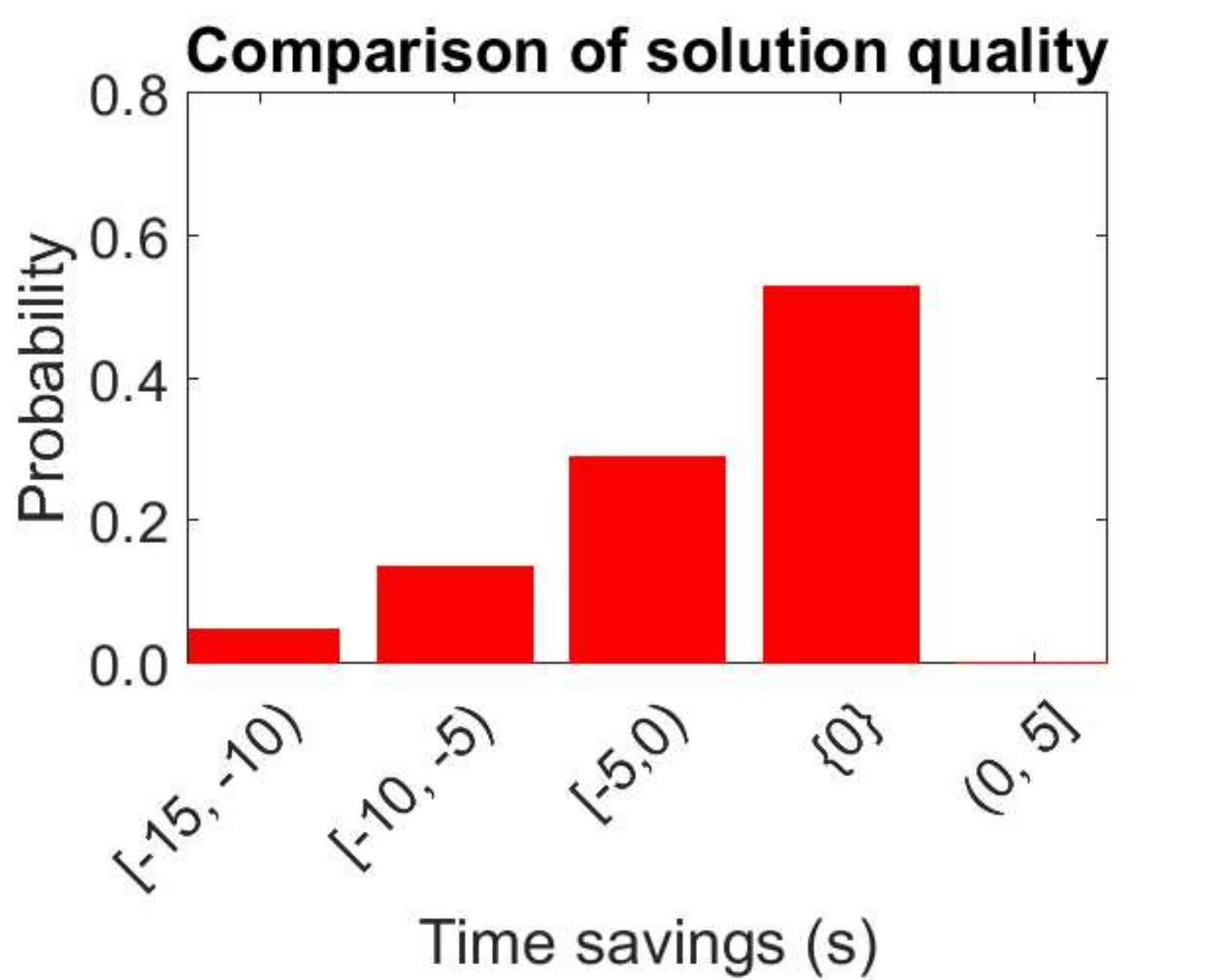}\label{fig:comparison_Dubins_travel}}
    \subfloat[Savings in total time after including computation time.]{
        \includegraphics[width=0.50\columnwidth]{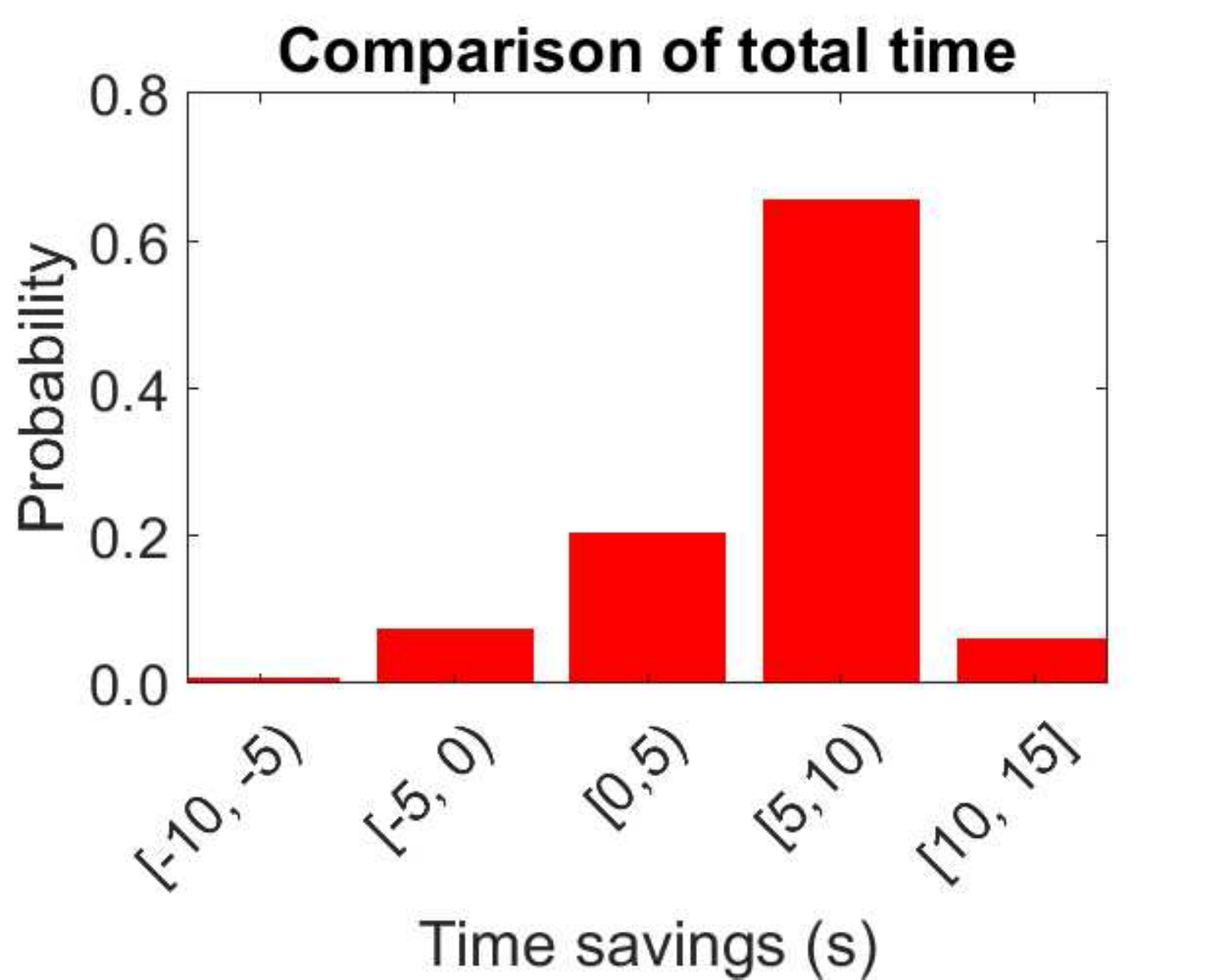}\label{fig:comparison_Dubins_total}}
        \caption{Time savings of the $4\pi$-arc solutions w.r.t. the Dubins solutions over $2880$ different simulation runs in a static current environment.}
     \label{fig:comparison_Dubins}
\end{figure}

\begin{figure*}[!t]
    \centering
    \subfloat[An example of path replanning under changing current. Start pose $(x_0,y_0,\theta_0)=(0,0,0)$ and goal pose $(x_f,y_f,\theta_f)=(5, 8.5, 3\pi/4)$. Initially, the current has $v_w = 0.5$~m/s and $\theta_w = \pi$, which changed at time $3.2$~s to a new current with $v_w = 0.75$~m/s and $\theta_w = 3\pi/2$. The radius of precision circle is $1$~m.]{
        \includegraphics[width=1\textwidth]{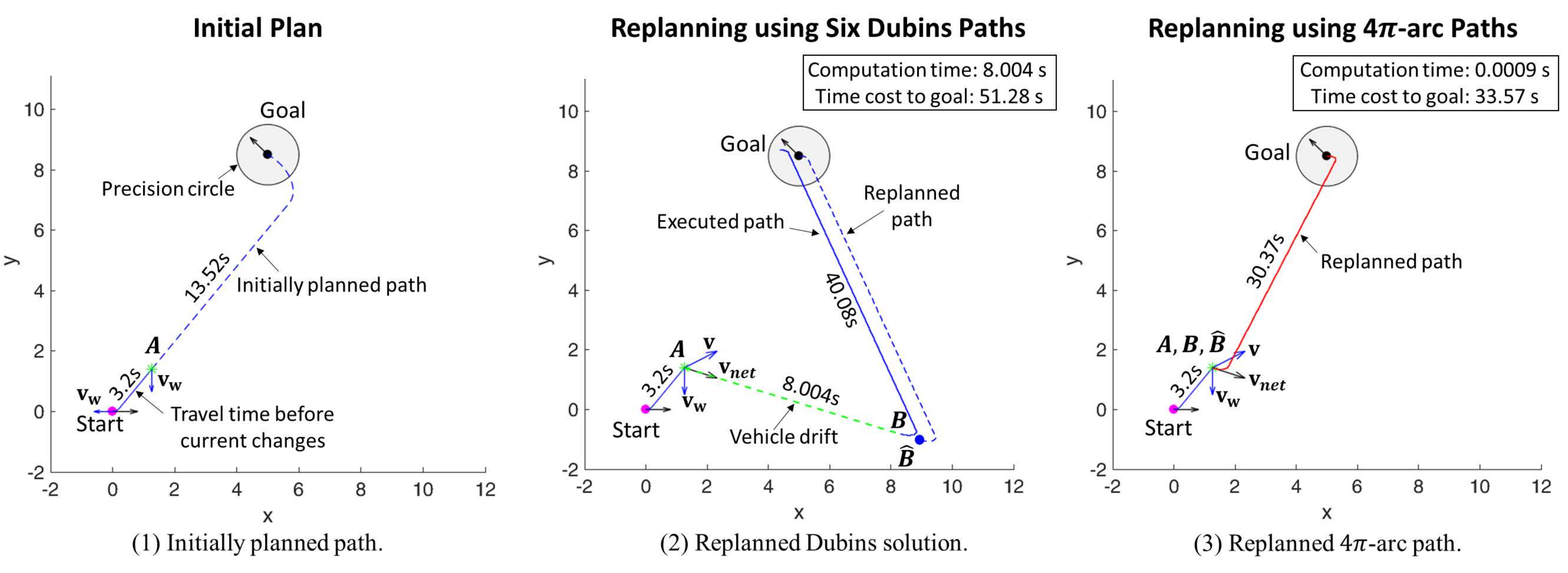}\label{fig:replan_results}}\\
    \subfloat[An example to show the effect of the net velocity of the vehicle drift. Start pose $(x_0,y_0,\theta_0)=(0,0,0)$ and goal pose $(x_f,y_f,\theta_f)=(5, 8.5, 3\pi/4)$. Initially, the current has $v_w = 0.5$~m/s and $\theta_w = 3\pi/2$, which changed at time $3.72$~s to a new  current.]{
        \includegraphics[width=1\textwidth]{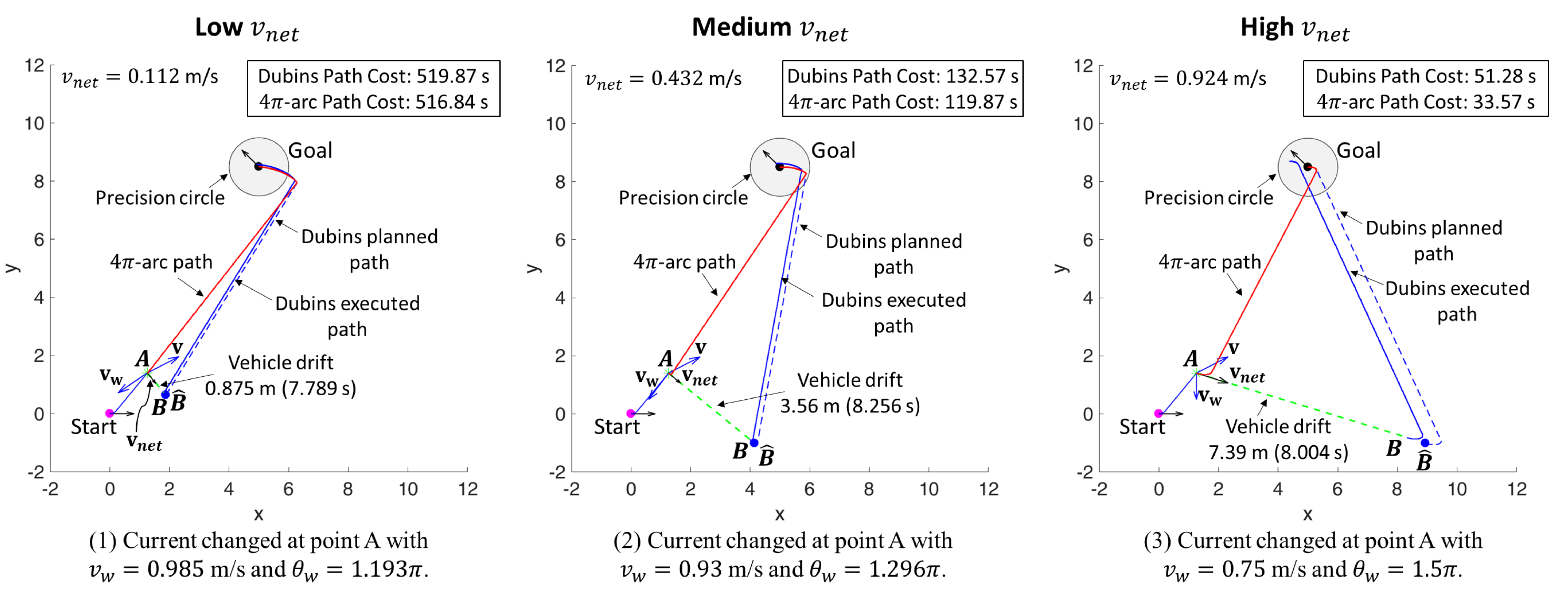}\label{fig:drift_results}}\\
        \vspace{3pt}
        \includegraphics[width=1\textwidth]{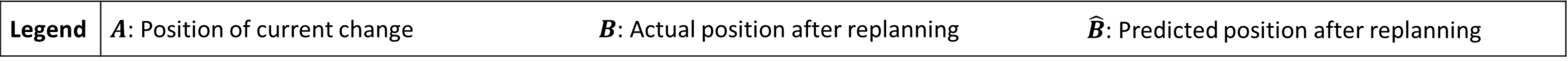}\label{fig:legend_results}
        \caption{Illustrative examples of replanning under changing current, and the effect of $v_{net}$ on the vehicle drift.}
     \label{fig:traj_results}
\end{figure*}

\subsection{Effect of a Change in Current}\label{sec:res_realtime}
During path execution, a change in the current's speed or heading could deviate the vehicle from its original path if left unattended. Hence, it is necessary to replan online upon detection of a change in current. However, as explained in Section~\ref{sec:intro}, using Dubins solution to regenerate the path to reach the goal pose requires considerable amount of computation time to solve the transcendental functions, during which the vehicle can drift noticeably. In particular, the vehicle drift would be along the direction of the net velocity of the vehicle and the current at that moment. To account for such drifts, the replanning is done by using a predicted position of the vehicle after the drift as the new start pose. This predicted position is computed by adding a translation (i.e., the product of the average computation time of $\sim8.72$~s and the net velocity) to the vehicle pose. Note that the predicted position is needed only for the Dubins solution, while it is unnecessary for the $4\pi$-arc path solution due to its negligible computation time.

\begin{figure*}[t]
    \centering
    \includegraphics[width=1\textwidth]{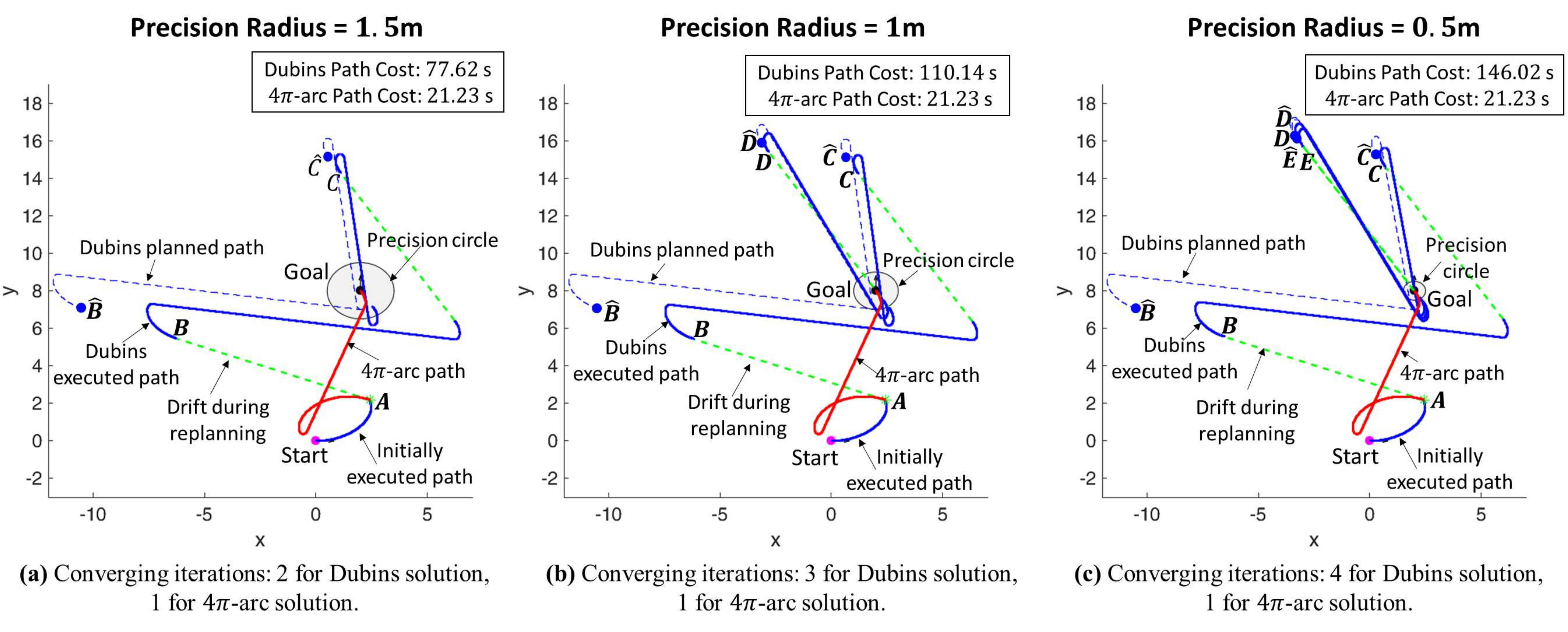}\\
    \vspace{3pt}
    \includegraphics[width=1\textwidth]{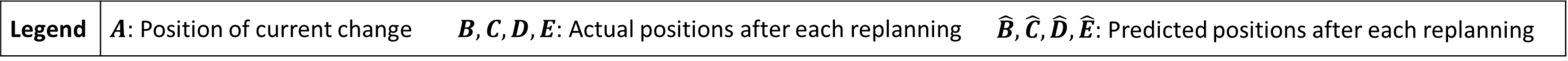}
    \caption{An example to show the effect of precision on planning time. Start pose $(x_0,y_0,\theta_0)=(0,0,0)$ and goal pose $(x_f,y_f,\theta_f)=(2, 8, \pi/2)$. Initially, the  current has $v_w = 0.75$~m/s and $\theta_w = 0$, which changed at time $3.72$~s to a new current with $v_w = 0.65$~m/s and $\theta_w = \pi$.}
    \label{fig:prec_results}
\end{figure*}

The vehicle is considered to be successful in reaching the goal if it: 1) arrives within a precision circle of radius $1$~m centered at the goal, and 2) achieves a heading within $\theta_f\pm5^{\circ}$.

Fig.~\ref{fig:replan_results} shows an illustrative example of the effect of current on replanning and the resulting total travel times using both approaches. Fig.~\ref{fig:replan_results}(1) shows the initially planned path using the Dubins approach from the start pose $(x_0, y_0, \theta_0) = (0,0,0)$ to the goal pose $(x_f, y_f, \theta_f) = (5,8.5,3\pi/4)$. The environment was considered to have an initial current of speed $v_w=0.5$~m/s and direction $\theta_w=\pi$. After the vehicle traveled for $3.2$~s and reached a point $A$, the current speed changed to $v_w=0.75$~m/s and its direction changed to $\theta_w=3\pi/2$, which forced the vehicle to replan a new path \textit{in situ}.

Fig.~\ref{fig:replan_results}(2) shows the replanning process using the Dubins approach. During replanning, the vehicle is drifted along the net velocity $\mathbf{v}_{net} = \mathbf{v} + \mathbf{v}_w$, where $\mathbf{v} = (v\cos{\theta}, v\sin{\theta})$ and $\mathbf{v}_w = (v_w \cos{\theta_w}, v_w \sin{\theta_w})$. The vehicle drift is shown by the green dashed line in the figure. The points $B$ and $\hat{B}$ denote the actual and the predicted position of the vehicle after replanning is over, respectively. Due to the difference between the predicted and the actual position, instead of executing the replanned path from the predicted position $\hat{B}$, marked by the blue dotted line, the vehicle actually traveled from point $B$, marked by the solid blue line. The vehicle then converged to the goal with its end-point lying inside the precision circle with an acceptable heading error. The total time taken by the vehicle to reach the goal is obtained by adding the initial execution time of $\sim3.2$~s before the change of  current, the replanning time of $\sim8$~s, and the execution time of $\sim40.08$~s along the replanned path, which leads to the total travel time of $\sim51.28$~s.

In comparison, Fig.~\ref{fig:replan_results}(3) shows the replanning process using the $4\pi$-arc $LSL$ and $RSR$ paths approach. Due to the negligible computation time, the points $A$, $B$ and $\hat{B}$ coincided, thus resulting in a much faster total travel time of $\sim33.57$~s. Also, the goal pose was achieved more accurately as compared to the Dubins solution. This example clearly highlights the benefits of the proposed rapid solution using the $4\pi$-arc paths over the Dubins approach.

\vspace{-6pt}
\subsection{Effect of $\mathbf{v}_{net}$}

During replanning, the vehicle is drifted along the direction of $\mathbf{v}_{net}$, with a magnitude of $v_{net}\in\mathbb{R}^+$ times the computation time. To examine the effect of $\mathbf{v}_{net}$ over the vehicle drift, we tested three scenarios over a range of $v_{net}$ and the results are shown in Fig.~\ref{fig:drift_results}(1)$-$(3). The start pose, the goal pose and the initial environmental current are set to be the same as those in Section~\ref{sec:res_realtime}; and the replanning occurs due to a change of current after $3.2$~s, when the vehicle has reached point $A$.

As seen in Fig.~\ref{fig:drift_results}(1)$-$(3), the $4\pi$-arc path solution generates trajectories with negligible drifts, while the Dubins solution results in significant vehicle drifts of lengths $0.875$~m for low $v_{net}=0.112$~m/s, $3.56$~m for medium $v_{net}=0.432$~m/s and $7.39$~m for high $v_{net}=0.924$~m/s. In all cases, since the Dubins solution incurs high computation time, it leads to a higher overall execution time. In particular, even for the scenario with low $v_{net}$ as shown in Fig.~\ref{fig:drift_results}(1), where the drift is very close to the vehicle's initial state and within its turning radius, $4\pi$-arc paths provide a faster solution than the Dubins solution because of the high computation time of the latter.

\begin{figure*}[!t]
    \centering
    \subfloat[Savings for naval application]{
        \includegraphics[width=0.40\textwidth]{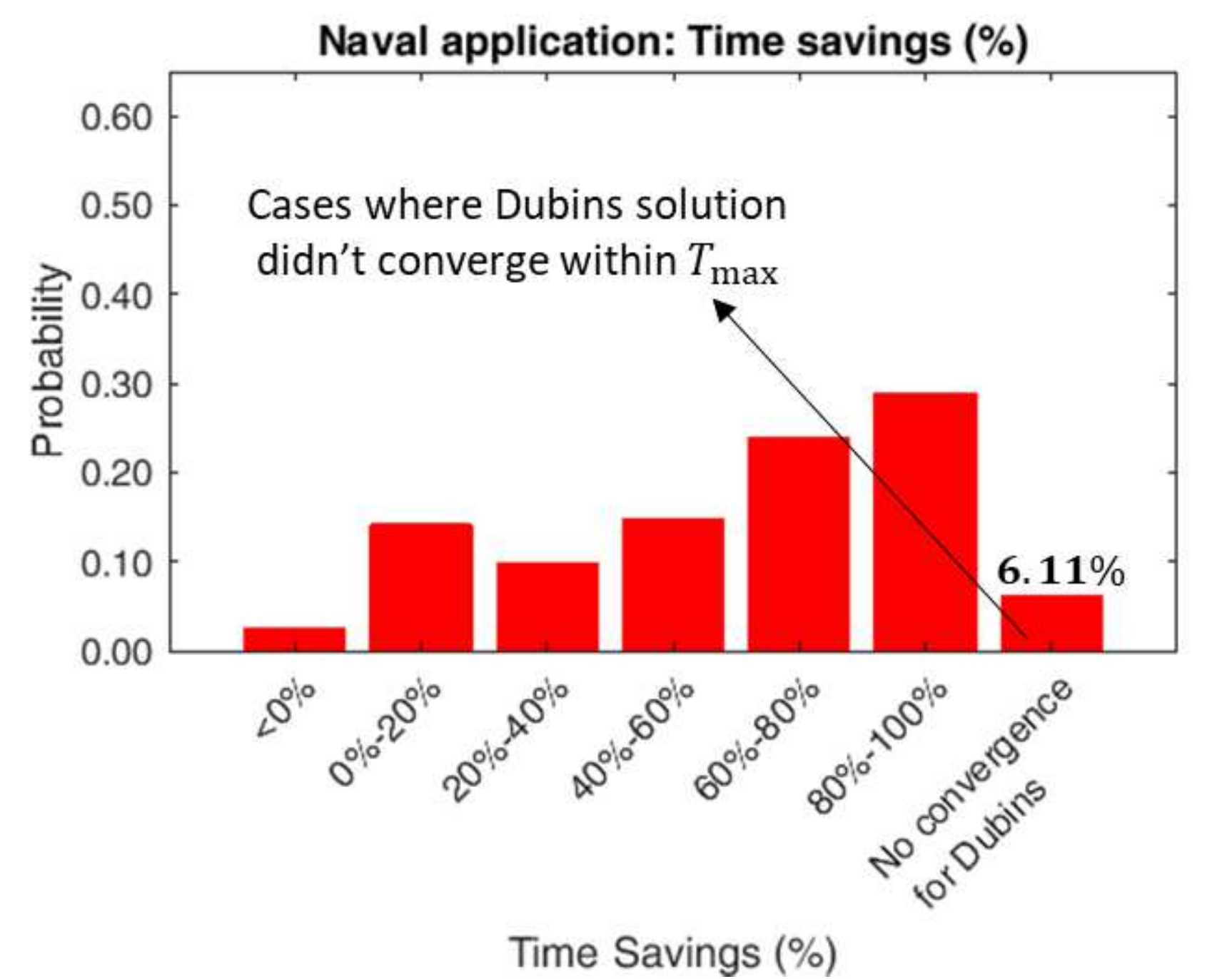}\label{fig:naval_results}}
    \subfloat[Savings for aerial application]{
        \includegraphics[width=0.40\textwidth]{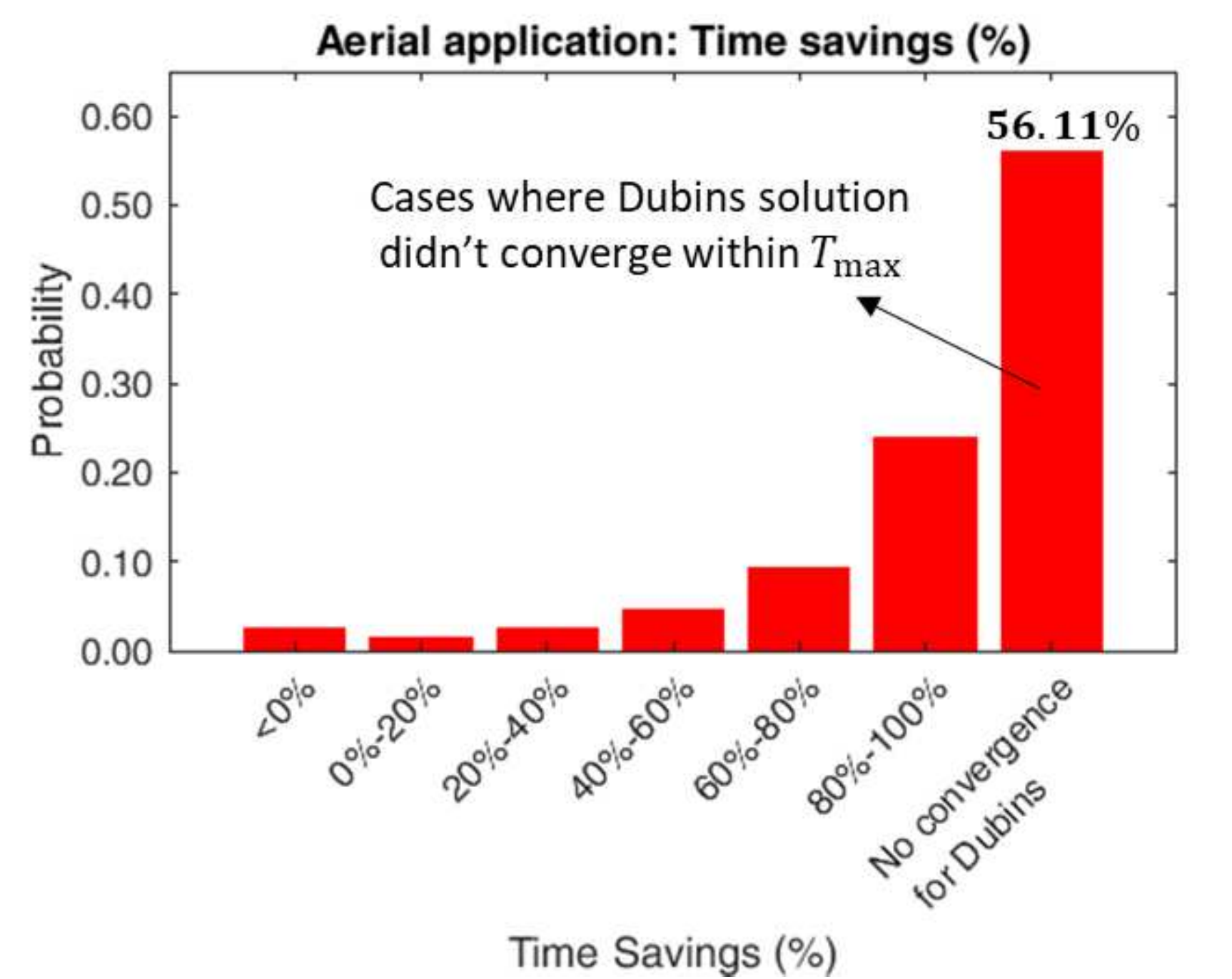}\label{fig:aerial_results}}

        \caption{Monte Carlo simulation results: Time savings of the $4\pi$-arc solutions w.r.t. the Dubins solutions.}
     \label{fig:mc_results}
     \vspace{-6pt}
\end{figure*}

\subsection{Effect of the Size of Precision Circle} Next, we study the effect of the size of precision circle, centered at the goal, on the total travel time using the two approaches. The vehicle is assumed to keep replanning until it converges inside the precision circle with an acceptable heading error. Fig.~\ref{fig:prec_results} shows the results obtained by varying the radii of the precision circle as: $1.5$~m, $1$~m and $0.5$~m. The start pose is $(x_0, y_0, \theta_0) = (0,0,0)$ and the goal pose is $(x_f, y_f, \theta_f) = (2,8,\pi/2)$. The environment was considered to have an initial current of speed $v_w=0.75$~m/s and direction $\theta_w=0$, which changed to $v_w=0.65$~m/s and $\theta_w=\pi$ at time $3.72$~s. As seen in Fig.~\ref{fig:prec_results}, after the change of current, the Dubins approach faces serious difficulty in convergence to the goal requiring several replannings as the precision radius decreases, while the $4\pi$-arc approach converged easily every time in a single replanning. Specifically, for precision radius of $1.5$~m, $1$~m and $0.5$~m, the Dubins approach required $2$, $3$ and $4$ replannings before convergence to the goal; accordingly, the total travel times to reach the goal were $77.62$~s, $110.14$~s and 146.02~s, respectively. As expected, the total travel time of $4\pi$-arc solution was $21.23$~s which is much smaller than the Dubins solution and was unaffected by the shrinking precision radius. This is due to the significantly less replanning time of the $4\pi$-arc paths, which allows them to reach the goal with high accuracy in shorter times.

\vspace{-6pt}
\subsection{Comparison of $4\pi$-arc $LSL$ and $RSR$ solutions with Dubins solutions in a dynamic current environment} \label{changingcurrents}
Now, we present a comparative evaluation of the $4\pi$-arc $LSL$ and $RSR$ solutions with Dubins solutions in a dynamic current environment. The performance of the two approaches is evaluated statistically using Monte Carlo simulations which cover a wide range of environmental conditions, considering realistic vehicle properties and sensing capabilities. The simulation setup is described as follows.

\vspace{6pt}
\textit{Sampled Goal Poses}:  The start pose is fixed at $(x_0, y_0, \theta_0)=(0,0,0)$. Then, six different goal positions are chosen located at a distance of $R=100$~m from the origin. For each goal position, six different heading angles $\theta_f \in \{ \frac{m\pi}{3}, m = 0,\ldots 5\}$ are considered, which leads to a total number of $36$ start and goal pose pairs. Due to noise (discussed later), $10$ Monte Carlo simulation runs were conducted for each goal pose, thus leading to a total number of $360$ runs.

\vspace{6pt}
\textit{Changing Environment}: To validate the effectiveness of the proposed method, the current with speed $v_w$ is set to change its direction with a random heading angle $\theta_w \in \{\frac{m\pi}{6}, m = 0,\ldots, 11\}$. This change happens after a random time interval $T_0 \in \{30, 45, 60\}$~s. Specifically, for each simulation run, the current heading $\theta_w$ and its time period $T_0$ are randomly generated from their corresponding sets. Then, after $T_0$, the updated current heading $\theta_w$ and its time period $T_0$ are randomly chosen again and the process is repeated. Thus, the vehicle has to replan its path based on the updated $\theta_w$ every time the current changes. Since the measurements of $\theta_w$ include noise (discussed later), the vehicle estimates its value using a Maximum Likelihood Estimator (MLE)~\cite{BLK04}, which utilizes measurements of $\theta_w$ within a period of $T_1=12$~s.

\vspace{6pt}
\textit{Termination Conditions}:  The vehicle is assumed to successfully reach the goal pose if: (1) it arrives within a precision circle of radius $1.5$~m centered at the goal, and (2) its heading falls between $\theta_f \pm 5^o$. However, if the vehicle cannot
converge to the goal pose in $T_{\max}=1000$~s, then the solution is considered to be not convergent.

\vspace{6pt}
\textit{Performance Metric:} The performance of the proposed $4\pi$-arc solution is evaluated in comparison to the Dubins solution based on the percentage of savings in the total travel time:
\begin{equation}
    Savings (\%)= \frac{T_{Dubins} - T_{4\pi}}{T_{Dubins}}\cdot 100,
\end{equation}
where $T_{Dubins}$ and $T_{4\pi}$ denote the total time cost using Dubins solution and the proposed $4\pi$-arc solution, respectively.

\vspace{6pt}
\textit{Applications:} Since sensing capabilities can vary significantly for different vehicles and in different operation environments, we evaluated the performance for two different applications: 1) naval (unmanned underwater vehicles (UUVs)) and 2) aerial (unmanned aerial vehicles (UAVs)).

\vspace{6pt}
\subsubsection{Naval Application}
Consider a typical UUV that travels at a speed of $v = 2.5$~m/s. The ocean environment is assumed to have currents that move at a speed of $v_w = 2$~m/s with an initial heading of $\theta_w = 0$. Regarding the sensing systems, the ocean current speed and heading are usually measured using an Acoustic Doppler Current Profiler (ADCP)~\cite{ADCP} with a sampling rate of $1$~Hz. On the other hand, the location and heading of UUV can be measured using Long Baseline (LBL) localization system~\cite{PSSL14} and compass, respectively. The sensor uncertainties are modeled using Additive White Gaussian Noise (AWGN) with parameters listed in Table~\ref{table:noise}.

\begin{table}[b!]
\centering
\caption{The specifics in Monte Carlo simulations}\label{table:noise}
\begin{tabular}{lll}
  \toprule
  {Application} & {Naval} & {Aerial}\\
  \midrule
  \eqparbox{Col}{Vehicle speed} & \eqparbox{Col}{$v = 2.5$~m/s} & \eqparbox{Col}{$v = 10$~m/s}\\
  \addlinespace[0.1cm]
  \eqparbox{Col}{External current} & \eqparbox{Col}{Ocean currents \\ $v_w = 2$~m/s} & \eqparbox{Col}{Wind \\ $v_w = 8$~m/s}\\
  \addlinespace[0.2cm]
  \eqparbox{Col}{Noise in vehicle\\state measurement} & \eqparbox{Col}{$\sigma_{GPS} = 0.3$~m \\ $\sigma_{compass} = 0.5^o$} & \eqparbox{Col}{$\sigma_{GPS} = 0.01$~m\\ $\sigma_{compass} = 0.5^o$}\\
  \addlinespace[0.2cm]
  \eqparbox{Col}{Noise in current state\\ measurement} & \eqparbox{Col}{$\sigma_{v_w} = 0.75\%\cdot{v_w}$ \\ $\sigma_{\theta_w} = 0.67^o$} & \eqparbox{Col}{$\sigma_{v_w} = 1.25\%\cdot{v_w}$ \\ $\sigma_{\theta_w} = 4^o$}\\
  \addlinespace[0.1cm]
  \bottomrule
\end{tabular}
\end{table}

Fig.~\ref{fig:naval_results} shows the distribution of percentage savings in time for the $4\pi$-arc path solutions in comparison to the corresponding Dubins solutions over all Monte Carlo runs. While $4\pi$-arc path solutions always converged, Dubins solutions could not converge within the precision circle in $T_{\max}$ time for $6.11\%$ of the runs. As explained in Section~\ref{sec:res_realtime}, this happens mainly due to their significantly high computation times during replanning which makes them keep replanning due to errors caused by the vehicle drift. For the remaining runs where both methods converged, the proposed $4\pi$-arc path solutions achieved an average of $57.62\%$ time savings, thus showing their superiority over Dubins solutions in a dynamic naval environment. This implies that the $4\pi$-arc path solutions can guide the UUV to successfully reach the goal pose in significantly less time cost as compared to the Dubins solutions. Furthermore, we note that only a very small fraction of all test cases result in negative time savings, which could be perhaps when the vehicle drift directly took the vehicle to the goal.

\vspace{6pt}
\subsubsection{Aerial Application}
Consider a typical UAV that travels at a speed of $v = 10$~m/s. The environment is assumed to have wind that moves at a speed of $v_w = 8$~m/s with an initial heading $\theta_w = 0$. As for the sensing systems, the wind profile can be measured using the Acoustic Resonance Wind Sensor system of FT 205~\cite{FT205}, which has a sampling rate of $10$~Hz. For localization of the UAV, a Real-Time Kinematic (RTK) GPS is used~\cite{RTK}. The sensor uncertainties are modeled using AWGN, with parameters listed in Table~\ref{table:noise}.

Fig.~\ref{fig:aerial_results} shows the distribution of percentage savings in time for the $4\pi$-arc path solutions in comparison to the corresponding Dubins solutions over all Monte Carlo runs. While $4\pi$-arc path solutions always converged, Dubins solutions could not converge within the precision circle in $T_{\max}$ time for $56.11\%$ of the runs. This number is higher than that of the naval applications due to the much higher uncertainties in current state measurements using wind sensors. The significantly increased number of non-converging runs shows the poor performance of Dubins approach in severe environments, thus highlighting the benefits of $4\pi$-arc path solutions. For the remaining runs where both methods converged, the proposed $4\pi$-arc path solutions achieved an average of $68.47\%$ time savings, thus showing their superiority over the Dubins solutions in a dynamic aerial environment. Furthermore, we note that only a very small fraction of all test runs result in negative time savings, while a significant majority have faster $4\pi$-arc path solutions.

\vspace{0pt}
\section{{Summary and Future Work}}\label{sec:conclusion}

\vspace{0pt}
\subsection{Summary}
The paper presents a rapid (real-time) solution to the minimum-time path planning problem for Dubins vehicles in the presence of environmental currents. The standard Dubins solution is obtained by solving for six path types ($LSL, RSR, LSR, RSL, LRL, RLR$); however, due to the presence of currents, four of these path types require solving of the root-finding problem involving transcendental functions. Thus, the existing Dubins solution results in high computation times which are not suitable for real-time applications.

Therefore, to obtain a real-time solution, this paper proposed a novel approach which utilizes only the $LSL$ and $RSR$ path types from the Dubins solution set which have direct analytical solutions; however they lack full reachability.

In this regard, the paper established the following properties for $LSL$ and $RSR$ paths:

\begin{enumerate}
\item Full reachability is guaranteed by extending their arc ranges from $2\pi$ to $4\pi$;

\item $4\pi$-arc paths yield superior or same performance in terms of time costs as compared to the corresponding $2\pi$-arc paths;
\item $4\pi$-arc paths require the same computational load to obtain a solution as needed for $2\pi$-arc paths.
\end{enumerate}

Based on the above, it is established that for real-time applications, the planner should consider the $4\pi$-arc $LSL$ and $RSR$ path solutions, while $2\pi$-arc solutions are not needed.

Furthermore, the performance of the proposed approach was evaluated against the Dubins solution with all six path types. For this purpose, two applications were considered: i) naval and ii) aerial, where extensive Monte Carlo simulations were conducted for statistical analysis under stochastic uncertainties in dynamically changing environments. The results showed that the $4\pi$-arc solutions converged to the goal pose in all runs as opposed to the Dubins solutions which failed to converge in a significant portion of runs. For the cases where Dubins solutions converged, the $4\pi$-arc solutions yielded superior performance and achieved significantly lower time costs to reach the goal poses with high precision.

\subsection{Future Work}
Future research will consider the following challenging problems for Dubins vehicles: 1) minimum-time path planning under spatio-temporally varying currents, 2) complete coverage in unknown environments~\cite{SG18}~\cite{SG19}, and 3) Dubins orienteering problem in dynamic environments~\cite{PFVS17}.

\vspace{0pt}
\appendix

\vspace{0pt}
\subsection{Derivation of conditions under which $2\pi$-arc $LSL$ and $RSR$ path types provide full reachability}
\label{app:reachability}
\vspace{6pt}

From (\ref{eq:LSL_slope}) and  (\ref{eq:RSR_slope}), we note that the boundaries of the reachable areas have the following rotations:

\begin{itemize}
\item $\omega_{LSL}^k(\alpha_{inf})$ and $\omega_{LSL}^k(\alpha_{sup})$,  for $k=0,1$,
\item $\omega_{RSR}^k(\alpha_{inf})$ and $\omega_{RSR}^k(\alpha_{sup})$, for $k=-1,-2$.
\end{itemize}

\vspace{6pt}
Now, we present a lemma related to these boundary rotations, which helps us in deriving the reachability conditions.
\vspace{0pt}

\begin{lem}\label{lem:parallelslope}
The following are true:
\begin{itemize}
    \item $\omega^0_{LSL}(\alpha_{inf})=\omega^1_{LSL}(\alpha_{sup})= \omega_{RSR}^{-1}(\alpha_{inf}) = \omega_{RSR}^{-2}(\alpha_{sup})$
    \item $\omega^0_{LSL}(\alpha_{sup})=\omega^1_{LSL}(\alpha_{inf})= \omega_{RSR}^{-1}(\alpha_{sup}) = \omega_{RSR}^{-2}(\alpha_{inf})$
\end{itemize}
\end{lem}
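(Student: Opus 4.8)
The plan is to compute each of the eight boundary rotations explicitly from the definitions of $\omega^k_{LSL}$ and $\omega^k_{RSR}$ in~(\ref{eq:LSL_slope}) and~(\ref{eq:RSR_slope}), using the feasible parameter ranges in Table~\ref{table:range_conventional}, and then observe that the four quantities in each bullet coincide. The key observation is that all of these rotations are of the form $\atantwo(\pm\sin\alpha + w_y, \cos\alpha + w_x) \Mod{2\pi}$ (for $LSL$) or $\atantwo(-\sin\alpha + w_y, \cos\alpha + w_x) \Mod{2\pi}$ (for $RSR$), and these depend on $\alpha$ only through $\sin\alpha$ and $\cos\alpha$, i.e., only on $\alpha \Mod{2\pi}$. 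So the whole lemma reduces to checking that the relevant $\alpha$-endpoints agree modulo $2\pi$ and that the $\pm$ signs in front of $\sin\alpha$ work out consistently between the $LSL$ and $RSR$ expressions.

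First I would read off the endpoints from Table~\ref{table:range_conventional}: for $2\pi$-arc $LSL$, $k=0$ gives $\alpha\in[0,\theta_f]$ so $\alpha_{inf}=0$, $\alpha_{sup}=\theta_f$; $k=1$ gives $\alpha\in(\theta_f,2\pi)$ so $\alpha_{inf}=\theta_f$, $\alpha_{sup}=2\pi$. For $2\pi$-arc $RSR$, $k=-1$ gives $\alpha\in[0,2\pi-\theta_f]$ so $\alpha_{inf}=0$, $\alpha_{sup}=2\pi-\theta_f$; $k=-2$ gives $\alpha\in(2\pi-\theta_f,2\pi)$ so $\alpha_{inf}=2\pi-\theta_f$, $\alpha_{sup}=2\pi$. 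Then for the first bullet: $\omega^0_{LSL}(0)=\atantwo(w_y, 1+w_x)$; $\omega^1_{LSL}(2\pi)=\atantwo(\sin 2\pi + w_y, \cos 2\pi + w_x)=\atantwo(w_y, 1+w_x)$; $\omega^{-1}_{RSR}(0)=\atantwo(-\sin 0 + w_y, \cos 0 + w_x)=\atantwo(w_y, 1+w_x)$; $\omega^{-2}_{RSR}(2\pi)=\atantwo(-\sin 2\pi + w_y, \cos 2\pi + w_x)=\atantwo(w_y,1+w_x)$. All four equal $\atantwo(w_y,1+w_x)\Mod{2\pi}$, proving the first bullet. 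For the second bullet I would similarly evaluate at the remaining endpoints: $\omega^0_{LSL}(\theta_f)=\atantwo(\sin\theta_f+w_y,\cos\theta_f+w_x)$ and $\omega^1_{LSL}(\theta_f)$ is the same expression; $\omega^{-1}_{RSR}(2\pi-\theta_f)=\atantwo(-\sin(2\pi-\theta_f)+w_y,\cos(2\pi-\theta_f)+w_x)=\atantwo(\sin\theta_f+w_y,\cos\theta_f+w_x)$ using $\sin(2\pi-\theta_f)=-\sin\theta_f$ and $\cos(2\pi-\theta_f)=\cos\theta_f$; and likewise $\omega^{-2}_{RSR}(2\pi-\theta_f)$ gives the same. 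Hence all four equal $\atantwo(\sin\theta_f+w_y,\cos\theta_f+w_x)\Mod{2\pi}$.

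The only genuinely delicate point — and the one I would be careful to state explicitly — is the sign bookkeeping: the $RSR$ rotation function carries $-\sin\alpha$ rather than $+\sin\alpha$, so the agreement with the $LSL$ values relies precisely on evaluating the $RSR$ expressions at the "reflected" endpoints $0$ and $2\pi-\theta_f$ (rather than at $\theta_f$), which is exactly what Table~\ref{table:range_conventional} supplies. Everything else is a direct substitution using $\sin(2\pi-x)=-\sin x$, $\cos(2\pi-x)=\cos x$, and the $2\pi$-periodicity of $\sin,\cos$. I do not anticipate any real obstacle; the proof is a short verification, and the main thing to get right is pairing each $\omega$ with the correct endpoint from the table before substituting.
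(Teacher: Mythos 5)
Your proposal is correct and follows essentially the same route as the paper's proof in Appendix~\ref{proof:parallelslope}: read the endpoints $\alpha_{inf},\alpha_{sup}$ for each $k$ from Table~\ref{table:range_conventional}, substitute into~(\ref{eq:LSL_slope}) and~(\ref{eq:RSR_slope}), and observe that all four rotations in the first bullet reduce to $\atantwo(w_y,1+w_x)$ and all four in the second to $\atantwo(\sin\theta_f+w_y,\cos\theta_f+w_x)$, the $RSR$ cases using $\sin(2\pi-\theta_f)=-\sin\theta_f$ and $\cos(2\pi-\theta_f)=\cos\theta_f$. Your explicit remark about the sign bookkeeping for the $-\sin\alpha$ term in the $RSR$ expression is a nice clarification but does not change the argument.
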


\begin{proof}
See Appendix~\ref{proof:parallelslope}
\end{proof}

By Lemma~\ref{lem:parallelslope}, the boundary lines of certain reachability regions of $LSL$ and $RSR$ path types are parallel to each other. This fact is explored to derive the full reachability conditions.

\begin{figure*}[t]
    \centering
    \subfloat[Case 1: Union of MaRA and MiRA of $LSL$ paths. The two plots show the two conditions for the subcase, where $k = 0$ forms MaRA and $k = 1$ forms MiRA. The titles show the corresponding conditions.]{
        \includegraphics[width=.48\textwidth]{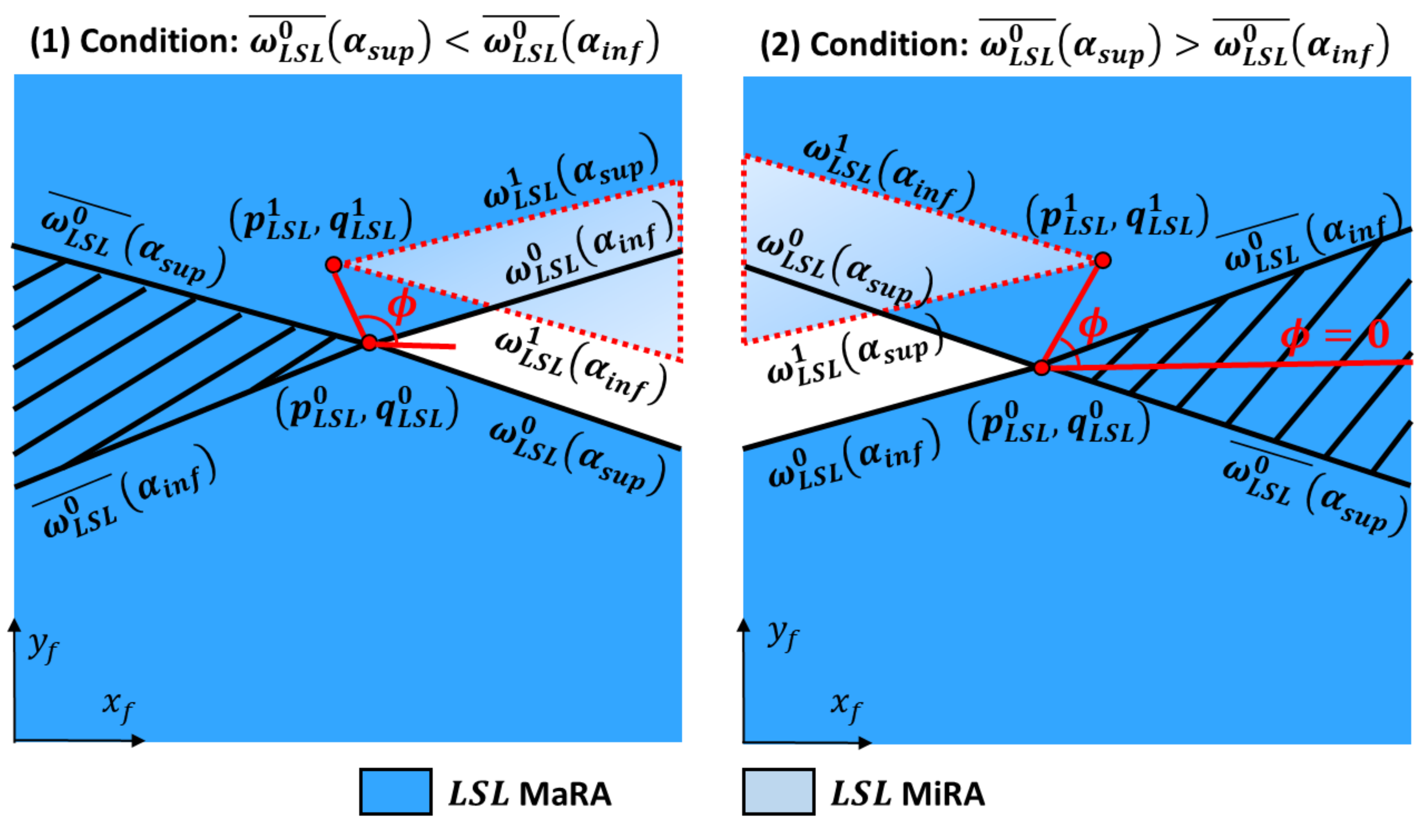}\label{fig:same_LSL}}
    \subfloat[Case 2: Union of MaRA and MiRA of $RSR$ paths. The two plots show the two conditions for the subcase, where $k=-1$ forms MaRA and $k=-2$ forms MiRA. The  titles show the corresponding conditions.]{
        \includegraphics[width=.48\textwidth]{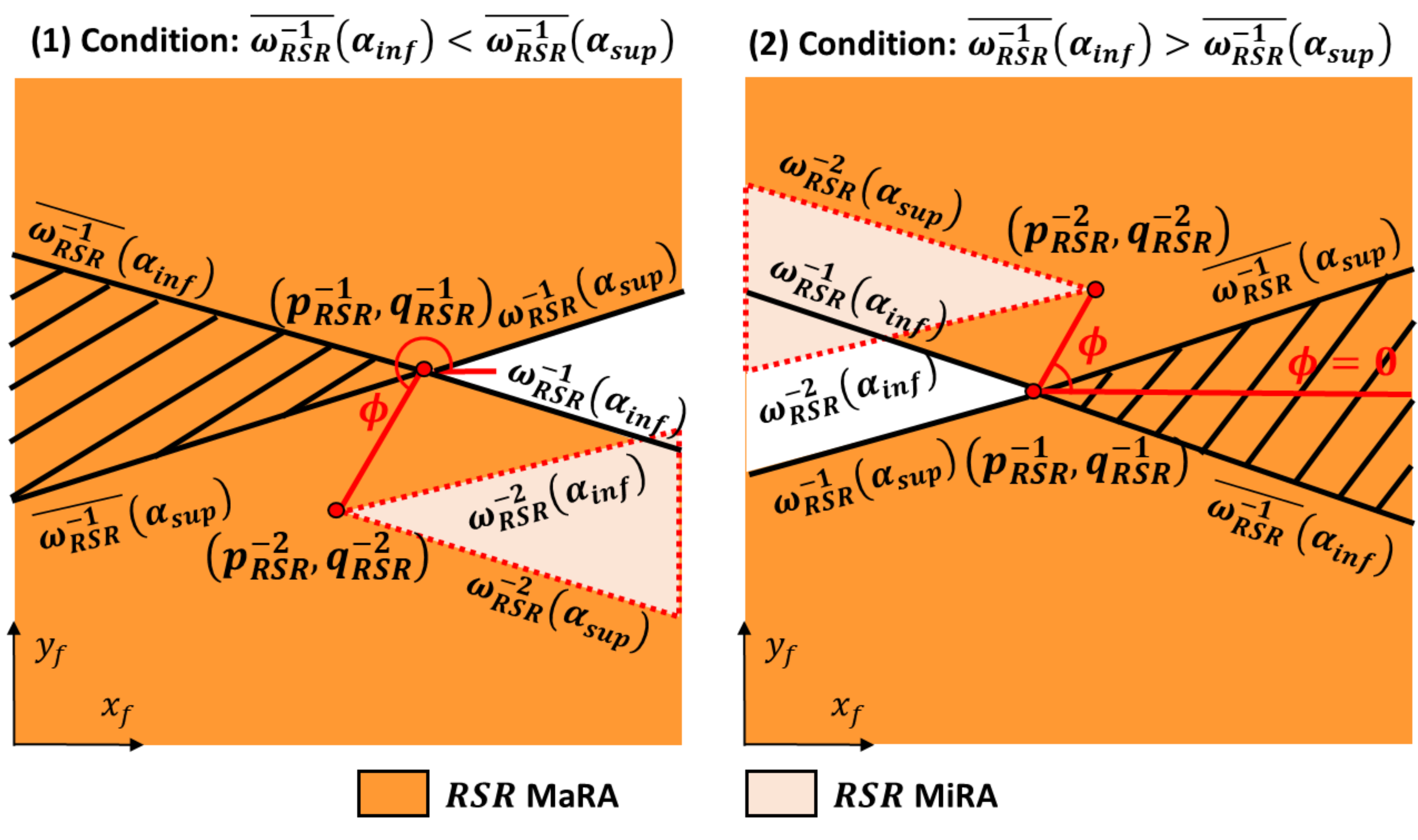}\label{fig:same_RSR}}\\
    \subfloat[Case 3: Union of MaRA of $LSL$ and MiRA of $RSR$ paths. The two plots show the two conditions for the subcase, where $k=0$ forms MaRA and $k=-1$ forms MiRA. The  titles show the corresponding conditions.]{
        \includegraphics[width=.48\textwidth]{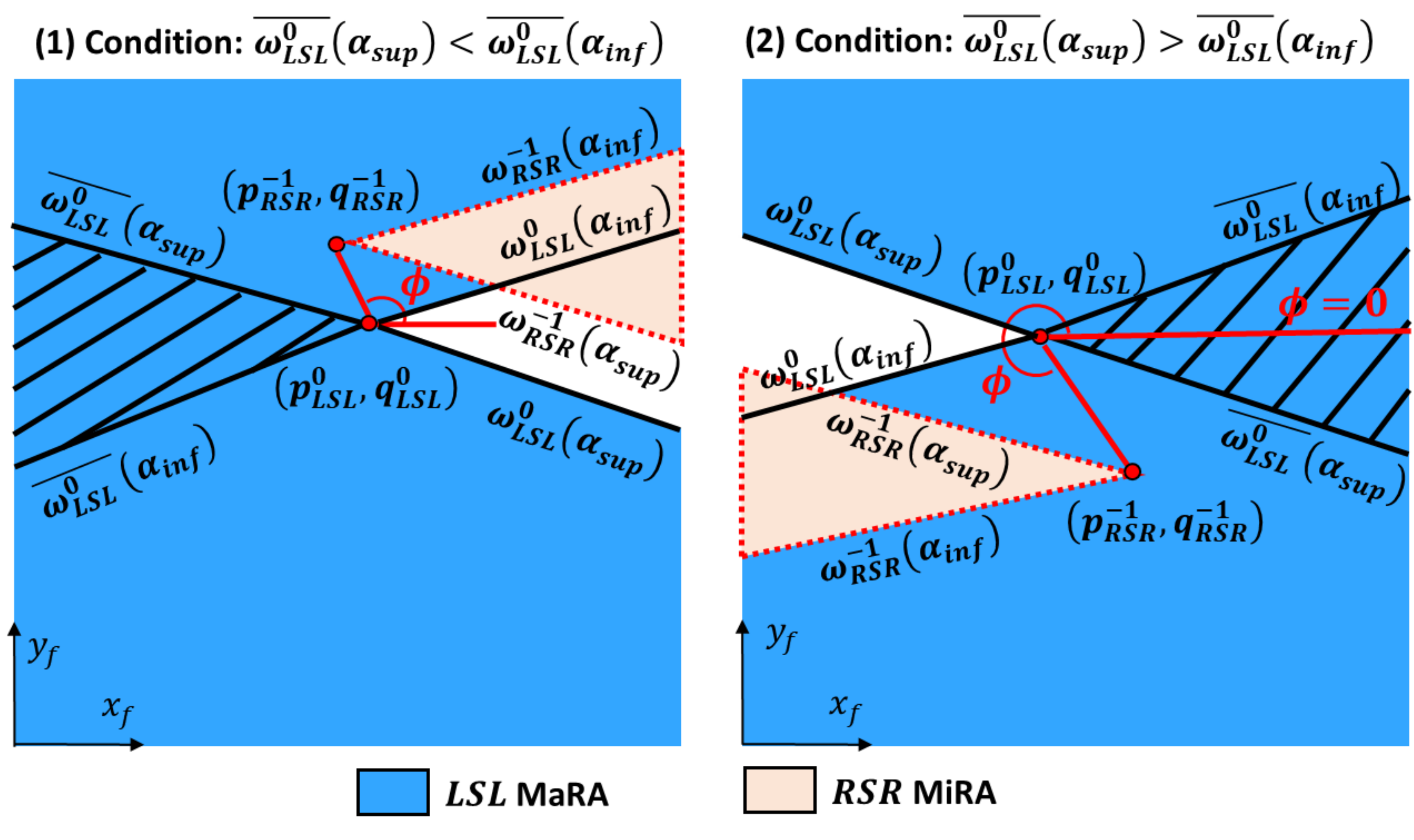}\label{fig:different_LSL}}
    \subfloat[Case 4: Union of MaRA of $RSR$ and MiRA of $LSL$ paths. The two plots show the two conditions for the subcase, where $k=-1$ forms MaRA and $k=0$ forms MiRA. The  titles show the corresponding conditions.]{
        \includegraphics[width=.48\textwidth]{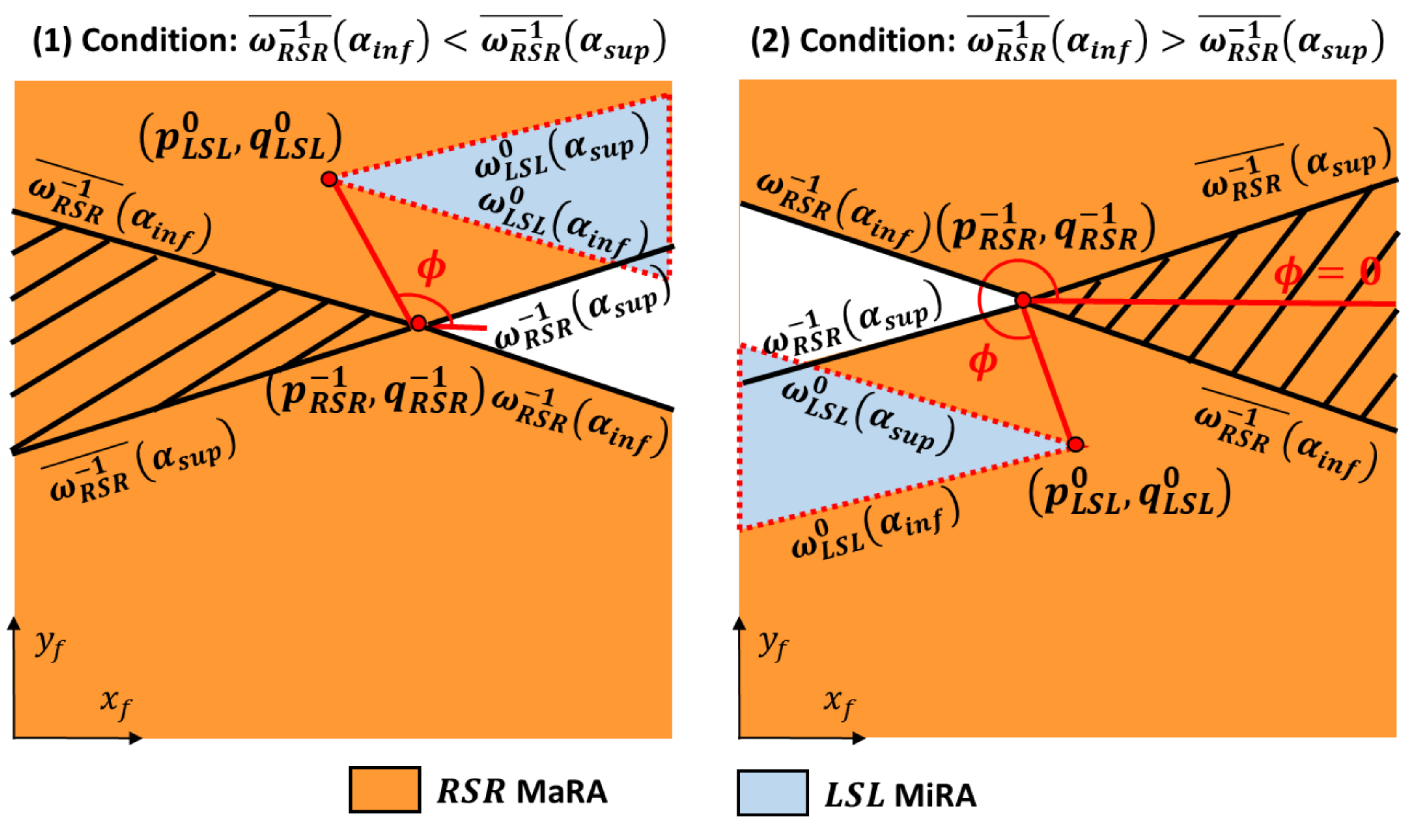}\label{fig:different_RSR}}\\ \vspace{6pt}
    \caption{Illustration of full reachability conditions using the $2\pi$-arc paths}\label{fig:reachability_conditions}  \vspace{6pt}
\end{figure*}

\vspace{6pt}
Before we start with the detailed analysis of the reachability conditions, we present a useful notation. Let $\delta \in [0,2\pi)$ be the rotation of a ray, then denote
\begin{equation}\label{pirotation}
\boxed{
\overline{\delta} \equiv \delta + \pi \Mod{2\pi}},
\end{equation}
to be the rotation of the ray in its opposite direction.

As discussed in Section~\ref{sec:full_reachability_conditions}, full reachability is achieved by $2\pi$-arc $LSL$ and $RSR$ paths, if the entire space is covered by atleast one of the following cases:

\begin{enumerate}
\item Union of MaRA and MiRA of $LSL$, and/or
\item Union of MaRA and MiRA of $RSR$, and/or
\item Union of MaRA of $LSL$ and MiRA of $RSR$, and/or
\item Union of MaRA of $RSR$ and MiRA of $LSL$.
\end{enumerate}

Now, we derive the full reachability conditions for Case $1$, while the derivation of the rest of the cases are similar.

\vspace{12pt}
\textbf{Case 1: Conditions under which the union of $\mathbf{LSL}$ MaRA and $\mathbf{LSL}$ MiRA provide full reachability}

\vspace{6pt}
Consider the centers  $(p_{LSL}^0, q_{LSL}^0)$ and $(p_{LSL}^1, q_{LSL}^1)$, as described in (\ref{eq:p_q_LSL}), for $k=0$ and $k=1$, respectively. There are two subcases:

\vspace{3pt}
\begin{itemize}
\item [1.1]\textit{$k=0$ forms $LSL$ MaRA and  $k=1$ forms $LSL$ MiRA}:

An illustrative example is shown in Fig. \ref{fig:same_LSL}. Note that the boundaries of $LSL$ MaRA are formed by  rays with rotations $\omega^0_{LSL}(\alpha_{inf})$ and $\omega^0_{LSL}(\alpha_{sup})$. Similarly, the boundaries of $LSL$ MiRA are formed by  rays with  rotations $\omega^1_{LSL}(\alpha_{inf})$ and $\omega^1_{LSL}(\alpha_{sup})$.

\vspace{6pt}
Now, using the notation in (\ref{pirotation}), we define $\overline{\omega_{LSL}^0}(\alpha_{sup})$  and  $\overline{\omega_{LSL}^0}(\alpha_{inf})$ to denote the rotations of the boundaries of $LSL$ MaRA by $\pi$ about the center $(p_{LSL}^0, q_{LSL}^0)$.

\vspace{3pt}
Further,  since $\overline{\omega_{LSL}^0}(\alpha_{sup}) \neq \overline{\omega_{LSL}^0}(\alpha_{inf})$, we can have:

\begin{itemize}
    \item [1)] $\overline{\omega_{LSL}^0}(\alpha_{sup}) < \overline{\omega_{LSL}^0}(\alpha_{inf})$, as shown in Fig.~\ref{fig:same_LSL}(1) or
    \vspace{3pt}
    \item [2)] $\overline{\omega_{LSL}^0}(\alpha_{sup})$ > $\overline{\omega_{LSL}^0}(\alpha_{inf})$, as shown in Fig.~\ref{fig:same_LSL}(2).
\end{itemize}

\vspace{6pt}
The region enclosed within the $\pi$ rotations of $LSL$ MaRA boundaries is shown as the shaded area in Fig.~\ref{fig:same_LSL}.

\vspace{6pt}
For full reachability, $LSL$ MiRA should cover the unreachable area of $LSL$ MaRA. From Lemma \ref{lem:parallelslope}, we know that $\omega^0_{LSL}(\alpha_{inf})=\omega^1_{LSL}(\alpha_{sup})$ and $\omega^0_{LSL}(\alpha_{sup})=\omega^1_{LSL}(\alpha_{inf})$, thus the respective boundaries of $LSL$ MaRA and $LSL$ MiRA are parallel. This fact implies that, to achieve full reachability, the center of rotation $(p_{LSL}^1, q_{LSL}^1)$ of $LSL$ MiRA should lie within the shaded area of $LSL$ MaRA (see  Fig.~\ref{fig:same_LSL}).

\begin{table*}[t]
\caption{\label{table:full_reachability} Full reachability conditions using $2\pi$-arc $LSL$ and $RSR$ paths.}
\begin{tabular}{|c|c|c|c|c|c|c|c|}
\hline
\multirow{2}{*}{Case} & \multicolumn{2}{c|}{MaRA} & \multicolumn{2}{c|}{MiRA} & \multirow{2}{*}{\begin{tabular}[c]{@{}c@{}}Rotation of the line segment joining\\ the centers of MaRA and MiRA\end{tabular}} & \multicolumn{2}{c|}{Full Reachability Conditions} \\ \cline{2-5} \cline{7-8}
 & Path Type & $k$ & Path Type & $k$ &  & If & Reachability Condition \\ \hline
\multirow{4}{*}{$1$} & \multirow{4}{*}{$LSL$} & \multirow{2}{*}{$0$} & \multirow{4}{*}{$LSL$} & \multirow{2}{*}{$1$} & \multirow{2}{*}{$\phi_{0,1} = \atantwo \big(w_y, w_x \big) \Mod{2\pi}$} & $\overline{\omega_{LSL}^0}(\alpha_{sup}) < \overline{\omega_{LSL}^0}(\alpha_{inf})$ & $\overline{\omega_{LSL}^0}(\alpha_{sup}) \leq \phi_{0,1} \leq \overline{\omega_{LSL}^0}(\alpha_{inf})$ \\ \cline{7-8}
 &  &  &  &  &  & $\overline{\omega_{LSL}^0}(\alpha_{sup}) > \overline{\omega_{LSL}^0}(\alpha_{inf})$ & \begin{tabular}[c]{@{}c@{}}$\overline{\omega_{LSL}^0}(\alpha_{sup}) \leq \phi_{0,1} < 2\pi$, \\ \text{ or } $0 \leq \phi_{0,1} \leq \overline{\omega_{LSL}^0}(\alpha_{inf})$\end{tabular} \\ \cline{3-3} \cline{5-8}
 &  & \multirow{2}{*}{$1$} &  & \multirow{2}{*}{$0$} & \multirow{2}{*}{$\phi_{1,0} = \atantwo \big(-w_y, -w_x \big) \Mod{2\pi}$} & $\overline{\omega_{LSL}^1}(\alpha_{sup}) < \overline{\omega_{LSL}^1}(\alpha_{inf})$ & $\overline{\omega_{LSL}^1}(\alpha_{sup}) \leq \phi_{1,0} \leq \overline{\omega_{LSL}^1}(\alpha_{inf})$ \\ \cline{7-8}
 &  &  &  &  &  & $\overline{\omega_{LSL}^1}(\alpha_{sup}) > \overline{\omega_{LSL}^1}(\alpha_{inf})$ & \begin{tabular}[c]{@{}c@{}}$\overline{\omega_{LSL}^1}(\alpha_{sup}) \leq \phi_{1,0} < 2\pi$,\\        \text{ or } $0 \leq \phi_{1,0} \leq  \overline{\omega_{LSL}^1}(\alpha_{inf})$\end{tabular} \\ \hline
\multirow{4}{*}{$2$} & \multirow{4}{*}{$RSR$} & \multirow{2}{*}{$-1$} & \multirow{4}{*}{$RSR$} & \multirow{2}{*}{$-2$} & \multirow{2}{*}{$\phi_{-1,-2}=\atantwo \big(w_y, w_x \big) \Mod{2\pi}$} & $\overline{\omega_{RSR}^{-1}}(\alpha_{inf}) < \overline{\omega_{RSR}^{-1}}(\alpha_{sup})$ & $\overline{\omega_{RSR}^{-1}}(\alpha_{inf}) \leq \phi_{-1,-2} \leq \overline{\omega_{RSR}^{-1}}(\alpha_{sup})$ \\ \cline{7-8}
 &  &  &  &  &  & $\overline{\omega_{RSR}^{-1}}(\alpha_{inf}) > \overline{\omega_{RSR}^{-1}}(\alpha_{sup})$ & \begin{tabular}[c]{@{}c@{}}$\overline{\omega_{RSR}^{-1}}(\alpha_{inf}) \leq \phi_{-1,-2} < 2\pi$,\\ \text{ or  } $0 \leq \phi_{-1,-2} \leq \overline{\omega_{RSR}^{-1}}(\alpha_{sup})$\end{tabular} \\ \cline{3-3} \cline{5-8}
 &  & \multirow{2}{*}{$-2$} &  & \multirow{2}{*}{$-1$} & \multirow{2}{*}{$\phi_{-2,-1}=\atantwo \big(-w_y, -w_x \big) \Mod{2\pi}$} & $\overline{\omega_{RSR}^{-2}}(\alpha_{inf}) < \overline{\omega_{RSR}^{-2}}(\alpha_{sup})$ & $\overline{\omega_{RSR}^{-2}}(\alpha_{inf}) \leq \phi_{-2,-1} \leq \overline{\omega_{RSR}^{-2}}(\alpha_{sup})$ \\ \cline{7-8}
 &  &  &  &  &  & $\overline{\omega_{RSR}^{-2}}(\alpha_{inf}) > \overline{\omega_{RSR}^{-2}}(\alpha_{sup})$ & \begin{tabular}[c]{@{}c@{}}$\overline{\omega_{RSR}^{-2}}(\alpha_{inf}) \leq \phi_{-2,-1} < 2\pi$,\\ \text{ or } $0 \leq \phi_{-2,-1} \leq \overline{\omega_{RSR}^{-2}}(\alpha_{sup})$\end{tabular} \\ \hline
\multirow{4}{*}{$3$} & \multirow{4}{*}{$LSL$} & \multirow{2}{*}{$0$} & \multirow{4}{*}{$RSR$} & \multirow{2}{*}{$-1$} & \multirow{2}{*}{\begin{tabular}[c]{@{}c@{}}$\phi_{0,-1}=\atantwo \Big(\cos{\theta_f}-1 + w_y(\pi-\theta_f)$,\\  $-\sin{\theta_f} + w_x (\pi - \theta_f) \Big) \Mod{2\pi}$\end{tabular}} & $\overline{\omega_{LSL}^{0}}(\alpha_{sup}) < \overline{\omega_{LSL}^{0}}(\alpha_{inf})$ & $\overline{\omega_{LSL}^{0}}(\alpha_{sup}) \leq \phi_{0,-1} \leq \overline{\omega_{LSL}^{0}}(\alpha_{inf})$ \\ \cline{7-8}
 &  &  &  &  &  & $\overline{\omega_{LSL}^{0}}(\alpha_{sup}) > \overline{\omega_{LSL}^{0}}(\alpha_{inf})$ & \begin{tabular}[c]{@{}c@{}}$\overline{\omega_{LSL}^{0}}(\alpha_{sup}) \leq \phi_{0,-1} < 2\pi$,\\ \text{ or } $0 \leq \phi_{0,-1} \leq \overline{\omega_{LSL}^{0}}(\alpha_{inf})$\end{tabular} \\ \cline{3-3} \cline{5-8}
 &  & \multirow{2}{*}{$1$} &  & \multirow{2}{*}{$-2$} & \multirow{2}{*}{\begin{tabular}[c]{@{}c@{}}$\phi_{1,-2}=\atantwo \Big(\cos{\theta_f}-1 + w_y(\pi-\theta_f),$\\  $-\sin{\theta_f} + w_x (\pi - \theta_f) \Big) \Mod{2\pi}$\end{tabular}} & $\overline{\omega_{LSL}^{1}}(\alpha_{sup}) < \overline{\omega_{LSL}^{1}}(\alpha_{inf})$ & $\overline{\omega_{LSL}^{1}}(\alpha_{sup}) \leq \phi_{1,-2} \leq \overline{\omega_{LSL}^{1}}(\alpha_{inf})$ \\ \cline{7-8}
 &  &  &  &  &  & $\overline{\omega_{LSL}^{1}}(\alpha_{sup}) > \overline{\omega_{LSL}^{1}}(\alpha_{inf})$ & \begin{tabular}[c]{@{}c@{}}$\overline{\omega_{LSL}^{1}}(\alpha_{sup}) \leq \phi_{1,-2} < 2\pi$,\\ \text{ or } $0 \leq \phi_{1,-2} \leq \overline{\omega_{LSL}^{1}}(\alpha_{inf})$\end{tabular} \\ \hline
\multirow{4}{*}{$4$} & \multirow{4}{*}{$RSR$} & \multirow{2}{*}{$-1$} & \multirow{4}{*}{$LSL$} & \multirow{2}{*}{$0$} & \multirow{2}{*}{\begin{tabular}[c]{@{}c@{}}$\phi_{-1,0}=\atantwo \Big(1- \cos{\theta_f} - w_y (\pi-\theta_f)$,\\ $\sin{\theta_f} - w_x (\pi - \theta_f) \Big) \Mod{2\pi}$\end{tabular}} & $\overline{\omega_{RSR}^{-1}}(\alpha_{inf}) < \overline{\omega_{RSR}^{-1}}(\alpha_{sup})$ & $\overline{\omega_{RSR}^{-1}}(\alpha_{inf}) \leq \phi_{-1,0} \leq \overline{\omega_{RSR}^{-1}}(\alpha_{sup})$ \\ \cline{7-8}
 &  &  &  &  &  & $\overline{\omega_{RSR}^{-1}}(\alpha_{inf}) > \overline{\omega_{RSR}^{-1}}(\alpha_{sup})$ & \begin{tabular}[c]{@{}c@{}}$\overline{\omega_{RSR}^{-1}}(\alpha_{inf}) \leq \phi_{-1,0} < 2\pi$,\\ \text{ or } $0 \leq \phi_{-1,0} \leq \overline{\omega_{RSR}^{-1}}(\alpha_{sup})$\end{tabular} \\ \cline{3-3} \cline{5-8}
 &  & \multirow{2}{*}{$-2$} &  & \multirow{2}{*}{$1$} & \multirow{2}{*}{\begin{tabular}[c]{@{}c@{}}$\phi_{-2,1}=\atantwo \Big(1- \cos{\theta_f} - w_y (\pi-\theta_f)$,\\ $\sin{\theta_f} - w_x (\pi - \theta_f) \Big) \Mod{2\pi}$\end{tabular}} & $\overline{\omega_{RSR}^{-2}}(\alpha_{inf}) < \overline{\omega_{RSR}^{-2}}(\alpha_{sup})$ & $\overline{\omega_{RSR}^{-2}}(\alpha_{inf}) \leq \phi_{-2,1} \leq \overline{\omega_{RSR}^{-2}}(\alpha_{sup})$ \\ \cline{7-8}
 &  &  &  &  &  & $\overline{\omega_{RSR}^{-2}}(\alpha_{inf}) > \overline{\omega_{RSR}^{-2}}(\alpha_{sup})$ & \begin{tabular}[c]{@{}c@{}}$\overline{\omega_{RSR}^{-2}}(\alpha_{inf}) \leq \phi_{-2,1} < 2\pi$,\\ \text{ or } $0 \leq \phi_{-2,1} \leq \overline{\omega_{RSR}^{-2}}(\alpha_{sup})$\end{tabular} \\ \hline
\end{tabular}
\end{table*}

\vspace{6pt}
To implement this full reachability condition, we find the rotation of the line segment joining the centers $(p_{LSL}^0, q_{LSL}^0)$ and $(p_{LSL}^1, q_{LSL}^1)$ as
\begin{subequations}
\begin{align}
\phi_{0,1} & = \atantwo \big(q_{LSL}^1 - q_{LSL}^0, p_{LSL}^1 - p_{LSL}^0 \big) \Mod{2\pi} \\
& = \atantwo \big(w_y, w_x \big) \Mod{2\pi}, \label{eq:anglebetcenterscase1.1}
\end{align}
\end{subequations}
where (\ref{eq:anglebetcenterscase1.1}) is obtained using (\ref{eq:p_q_LSL}).

\vspace{6pt}
Then, based on the above discussion, we obtain the condition for full reachability as

\vspace{6pt}
\begin{itemize}
    \item If $\overline{\omega_{LSL}^0}(\alpha_{sup}) < \overline{\omega_{LSL}^0}(\alpha_{inf})$, then:
    \begin{equation}\label{eq:same_condition1}
        \boxed{\overline{\omega_{LSL}^0}(\alpha_{sup}) \leq \phi_{0,1} \leq \overline{\omega_{LSL}^0}(\alpha_{inf}).}
    \end{equation}
    \item If $\overline{\omega_{LSL}^0}(\alpha_{sup})$ > $\overline{\omega_{LSL}^0}(\alpha_{inf})$, then:
    \begin{equation}\label{eq:same_condition2}
        \boxed{\begin{split}
        \overline{\omega_{LSL}^0}(\alpha_{sup}) &\leq \phi_{0,1} < 2\pi, \\
        \text{ or \ \ \ \ \ \ \ } 0 &\leq \phi_{0,1} \leq \overline{\omega_{LSL}^0}(\alpha_{inf}).
        \end{split}}
    \end{equation}
\end{itemize}

\vspace{6pt}
\item [1.2] \textit{$k=1$ forms $LSL$ MaRA and $k=0$ forms $LSL$ MiRA}:

Since this subcase is similar to the first subcase of Case 1, we do not show the corresponding figure here.  Using the same logic as for the first subcase, we find the  rotation of the line segment joining the above two centers as
\begin{subequations}
\begin{align}
\phi_{1,0} & = \atantwo \big(q_{LSL}^0 - q_{LSL}^1, p_{LSL}^0 - p_{LSL}^1 \big) \Mod{2\pi}\\
& = \atantwo \big(-w_y, -w_x \big) \Mod{2\pi}, \label{eq:anglebetcenters2}
\end{align}
\end{subequations}
where (\ref{eq:anglebetcenters2}) is obtained using (\ref{eq:p_q_LSL}).

Then, we obtain the condition for full reachability as

\vspace{6pt}
\begin{itemize}
  \item
      If $\overline{\omega_{LSL}^1}(\alpha_{sup}) < \overline{\omega_{LSL}^1}(\alpha_{inf})$, then:\\
    \begin{equation}\label{eq:same_condition3}
       \boxed{
        \overline{\omega_{LSL}^1}(\alpha_{sup}) \leq \phi_{1,0} \leq \overline{\omega_{LSL}^1}(\alpha_{inf}).
        }
    \end{equation}

  \item If $\overline{\omega_{LSL}^1}(\alpha_{sup}) > \overline{\omega_{LSL}^1}(\alpha_{inf})$, then:
   \begin{equation}\label{eq:same_condition4}
     \boxed{
    \begin{split}
       \overline{\omega_{LSL}^1}(\alpha_{sup}) &\leq \phi_{1,0} < 2\pi, \\
       \text{ or \ \ \ \ \ \ \ } 0 &\leq \phi_{1,0} \leq  \overline{\omega_{LSL}^1}(\alpha_{inf}).
      \end{split}
       }
  \end{equation}
\end{itemize}
\end{itemize}

The reachability conditions for Cases $2-4$ can be derived in a similar fashion as Case $1$, and their illustrative examples are shown in Figs.~\ref{fig:same_RSR}, \ref{fig:different_LSL} and \ref{fig:different_RSR}, respectively. However, for Cases $3$ and $4$, the union of different path types is used. Therefore, to obtain reachability conditions for Cases $3$ and $4$, we need Lemma~\ref{lem:slopeproperty2} which connects the $k$ values associated with the MaRA and MiRA regions across different path types.

\begin{lem} \label{lem:slopeproperty2}
The following are true:
\begin{itemize}
\item [a)] If $k = 0$ forms $LSL$ MaRA (MiRA), then $k = -1$ forms $RSR$ MiRA (MaRA).

\item [b)] If $k = 1$ forms $LSL$ MaRA (MiRA), then $k = -2$ forms $RSR$ MiRA (MaRA).
\end{itemize}
\end{lem}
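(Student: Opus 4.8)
The plan is to read off the angular extents of the four reachable wedges from Lemma~\ref{lem:swipe} and Lemma~\ref{lem:parallelslope}, to observe that the $LSL$ and $RSR$ path types sweep a complementary pair of circular arcs, and to conclude that the ``larger/smaller'' labels must alternate between the two path types. Throughout, write $\omega_A \equiv \omega^0_{LSL}(\alpha_{inf})$ and $\omega_B \equiv \omega^0_{LSL}(\alpha_{sup})$; by Lemma~\ref{lem:parallelslope} these are the only two boundary directions involved, with $\big(\omega^k(\alpha_{inf}),\omega^k(\alpha_{sup})\big)$ equal to $(\omega_A,\omega_B)$ for $k\in\{0,-1\}$ and to $(\omega_B,\omega_A)$ for $k\in\{1,-2\}$.

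First I would describe each reachable region as a wedge of directions at its center of rotation. By Lemma~\ref{lem:swipe}, as $\alpha$ runs from $\alpha_{inf}^k$ to $\alpha_{sup}^k$ the generating ray rotates \emph{counter}clockwise for an $LSL$ type and \emph{clockwise} for an $RSR$ type; hence the set of reachable directions is the counterclockwise arc from $\omega_A$ to $\omega_B$ for $(LSL,0)$, the counterclockwise arc from $\omega_B$ to $\omega_A$ for $(LSL,1)$, the clockwise arc from $\omega_A$ to $\omega_B$ for $(RSR,-1)$, and the clockwise arc from $\omega_B$ to $\omega_A$ for $(RSR,-2)$. The elementary fact I would use is that the counterclockwise arc from $\omega_A$ to $\omega_B$ and the clockwise arc from $\omega_B$ to $\omega_A$ are the \emph{same} arc, and likewise the counterclockwise arc from $\omega_B$ to $\omega_A$ equals the clockwise arc from $\omega_A$ to $\omega_B$. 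Writing $\mu^k_{LSL}$ and $\mu^k_{RSR}$ for the angular measures of these arcs, this gives $\mu^0_{LSL}=\mu^{-2}_{RSR}=(\omega_B-\omega_A)\Mod{2\pi}$ and $\mu^1_{LSL}=\mu^{-1}_{RSR}=(\omega_A-\omega_B)\Mod{2\pi}$, two quantities that sum to $2\pi$.

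Then I would finish as follows. Since the two admissible arcs for each path type together cover the circle, exactly one of them has angular measure exceeding $\pi$ (the degenerate case $\omega_A-\omega_B\equiv\pi\Mod{2\pi}$ gives two half-plane wedges and is immaterial to the coverage analysis underlying Table~\ref{table:full_reachability}), and that one is by definition the MaRA of its path type while the other is the MiRA --- here I would make explicit that comparing wedges by angular measure is what ``larger reachable area'' means for the coverage question, since a wider wedge covers more of any sufficiently large goal region. Because $\mu^0_{LSL}=\mu^{-2}_{RSR}$, if $k=0$ forms the $LSL$ MaRA then $\mu^0_{LSL}>\pi$, hence $\mu^{-2}_{RSR}>\pi$, so $k=-2$ forms the $RSR$ MaRA and therefore $k=-1$ forms the $RSR$ MiRA; exchanging $>\pi$ with $<\pi$ gives the MiRA half of part~(a), and the identity $\mu^1_{LSL}=\mu^{-1}_{RSR}$ yields part~(b) by the same argument.

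The step I expect to be the main obstacle is the orientation bookkeeping: one must carefully convert a ``clockwise sweep'' into the correct $(\cdot)\Mod{2\pi}$ measure and match the $\alpha_{inf}/\alpha_{sup}$ endpoints to the directions provided by Lemma~\ref{lem:parallelslope} without reversing an arc, since an off-by-$\pi$ error or a swapped arc would destroy the pairing. A secondary point worth pinning down is the meaning of MaRA/MiRA when a wedge is reflex (measure $>\pi$), which I would settle by fixing the comparison to be in terms of $\mu(\cdot)$, consistent with the way the conditions in Table~\ref{table:full_reachability} are used.
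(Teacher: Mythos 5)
Your proof is correct and takes essentially the same route as the paper's: both rest on Lemma~\ref{lem:parallelslope} (the four regions share the same two boundary ray directions) combined with Lemma~\ref{lem:swipe} (opposite sweep orientations for $LSL$ vs.\ $RSR$), so that the wedges swept by the paired $k$ values are complementary arcs and the MaRA/MiRA labels must flip between the two path types. Your version merely makes the comparison explicit via angular measures summing to $2\pi$ (and flags the degenerate half-plane case), which is a cleaner bookkeeping of the same argument rather than a different one.
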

\begin{proof}
See Appendix~\ref{proof:slopeproperty2}
\end{proof}

Table~\ref{table:full_reachability} presents the  reachability conditions for all cases.

\begin{rem}
Besides Cases $1-4$, there are other cases that can be considered for reachability analysis. However, Lemma~\ref{lem:slopeproperty3} below negates those cases and shows that Cases $1-4$ are sufficient for full reachability analysis.
\end{rem}

\begin{lem} \label{lem:slopeproperty3}
The following are true:
\begin{itemize}
\item [a)] $LSL$ ($RSR$) MaRA alone cannot provide full reachability
\item [b)] Union of $LSL$ MaRA and $RSR$ MaRA cannot provide full reachability.
\item [c)] If Cases 1-4 do not provide full reachability, then the union of $LSL$ MaRA, $LSL$ MiRA, $RSR$ MaRA and $RSR$ MiRA cannot provide full reachability.

\end{itemize}
\end{lem}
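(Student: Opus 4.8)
The plan is to realise every reachable region as an explicit infinite angular sector and to reduce all three assertions to comparisons in the partial order of a fixed pointed convex cone. Throughout, write $C^{Ma}_{L},C^{Ma}_{R}$ for the centers of rotation of the $LSL$ and $RSR$ MaRAs and $C^{Mi}_{L},C^{Mi}_{R}$ for those of the corresponding MiRAs.

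\emph{Setup.} By Lemma~\ref{lem:swipe} the rotation $\omega(\alpha)=\atantwo(\sin\alpha+w_y,\cos\alpha+w_x)$ is strictly monotone in $\alpha$, so for each path type and admissible $k$ the reachable set is the genuine angular sector about its center spanning the arc of directions $\{\omega(\alpha):\alpha\in[\alpha^k_{inf},\alpha^k_{sup}]\}$, whose angular width is strictly below $2\pi$. The two admissible $\alpha$-intervals of one path type partition $[0,2\pi)$, hence MaRA and MiRA span complementary arcs; let $A$ (with $|A|>\pi$) be the common MaRA arc and $B$ (with $|B|<\pi$) the common MiRA arc — that these arcs are shared by both path types is exactly Lemmas~\ref{lem:parallelslope}--\ref{lem:slopeproperty2}. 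Let $\bar K\subset\mathbb{R}^{2}$ be the closed convex cone generated by the unit vectors with directions in $B$; since $|B|<\pi$, $\bar K$ is pointed, and we write $K$ for its interior and $x\preceq y$ for $y-x\in\bar K$. Then each MaRA is (up to its two boundary rays) the complement of the narrow cone $c+K$ about its center $c$, and each MiRA is the translate $c'+\bar K$.

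\emph{Comparability lemma, and parts (a)--(b).} Part (a) is immediate: a single MaRA spans an arc $A$ of width $<2\pi$, so its complement — the nonempty open cone $c+K$ — is not void. For (b) I first show the two MaRA centers are $\preceq$-comparable, i.e.\ one MaRA contains the other. From the explicit formulas (\ref{eq:p_q_LSL})--(\ref{eq:p_q_RSR}) and $\int_{0}^{2\pi}(\cos\alpha,\sin\alpha)\,d\alpha=\mathbf{0}$ one checks that the difference of the two MaRA centers equals $\pm\,2r\!\int_{I}\big((\cos\alpha,\sin\alpha)+(w_x,w_y)\big)\,d\alpha$, where $I$ is the $\alpha$-interval of the MiRA; every integrand points in a direction belonging to $B$, so by convexity of $\bar K$ this integral lies in $\bar K$. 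Hence $C^{Ma}_{L}-C^{Ma}_{R}\in\bar K$ or $C^{Ma}_{R}-C^{Ma}_{L}\in\bar K$, which says precisely that the narrow cone about one MaRA center contains the narrow cone about the other, i.e.\ one MaRA contains the other. Therefore the union of the two MaRAs equals the larger one, which by (a) is not $\mathbb{R}^{2}$; this proves (b).

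\emph{Part (c).} I use one more elementary fact, a \emph{cone-covering lemma}: for $c,c_{1},c_{2}\in\mathbb{R}^{2}$ one has $c+K\subseteq(c_{1}+\bar K)\cup(c_{2}+\bar K)$ iff $c-c_{1}\in\bar K$ or $c-c_{2}\in\bar K$ (the nontrivial direction: since $\bar K$ is closed, if $c-c_{1}\notin\bar K$ and $c-c_{2}\notin\bar K$ then the points $c+\epsilon d$ with $d$ of direction in the interior of $B$ and $\epsilon>0$ small lie in $c+K$ but in neither translate). Assume now that every one of Cases 1--4 fails to give full reachability. By the comparability lemma we may assume without loss of generality that the $LSL$ MaRA contains the $RSR$ MaRA (the opposite case is symmetric under interchanging the two path types, which interchanges Cases $1,3$ with $2,4$). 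Then the union of all four regions equals $(LSL\ \mathrm{MaRA})\cup(C^{Mi}_{L}+\bar K)\cup(C^{Mi}_{R}+\bar K)$, whose complement is
\[
(C^{Ma}_{L}+K)\ \setminus\ \big((C^{Mi}_{L}+\bar K)\cup(C^{Mi}_{R}+\bar K)\big).
\]
By the cone-covering lemma this set is empty iff $C^{Ma}_{L}-C^{Mi}_{L}\in\bar K$ or $C^{Ma}_{L}-C^{Mi}_{R}\in\bar K$, i.e.\ iff the $LSL$ MaRA together with the $LSL$ MiRA already covers $\mathbb{R}^{2}$ (Case 1) or the $LSL$ MaRA together with the $RSR$ MiRA does (Case 3) — both excluded by hypothesis. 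Hence the four-region union has nonempty complement, which is the claim.

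\emph{Main obstacle.} The heart of the argument is the comparability lemma: recognising that the difference of the two MaRA centers is, up to the factor $\pm2r$, the integral over the MiRA's parameter interval of the net-direction vectors $(\cos\alpha,\sin\alpha)+(w_x,w_y)$, and that this forces the difference into the cone $\bar K$ generated by the MiRA arc precisely because that arc has angular width below $\pi$. The rest is bookkeeping, apart from the degenerate configurations in which $\bar K$ fails to be pointed ($|A|=|B|=\pi$, or $\theta_{f}=0$): there the sectors collapse to half-planes or rays and the three statements are immediate, so these cases should be dispatched separately at the outset.
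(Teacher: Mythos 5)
Your proposal is correct in substance, but it takes a genuinely different route from the paper, most notably for part b). The paper proves b) by observing (via Lemmas~\ref{lem:parallelslope} and \ref{lem:slopeproperty2}) that the two MaRAs have parallel boundaries spanning the same acute angle, reducing containment to whether one MaRA's center lies in the other's shadow region, and then \emph{numerically} verifying over a grid of $(\theta_f,\theta_w)$ values (Fig.~\ref{fig:reachability_MARA}) that one of the two containments always holds; your comparability lemma replaces this numerical validation with an analytical argument: the difference of the two MaRA centers is, up to a factor $\pm 2r$, the integral over the MiRA's $\alpha$-interval of $(\cos\alpha+w_x,\sin\alpha+w_y)$, whose direction at each $\alpha$ is exactly $\omega_{LSL}(\alpha)$ and hence lies in the MiRA arc $B$, so convexity of the cone $\bar K$ generated by $B$ forces the difference into $\pm\bar K$. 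I checked this identity in both configurations ($k=0$ vs.\ $k=-2$ MaRAs with $I=[\theta_f,2\pi]$, and $k=1$ vs.\ $k=-1$ with $I=[0,\theta_f]$) and it holds, so your b) is in fact a strengthening of the paper's proof, which only establishes the covering fact computationally. For a) and c) your argument is essentially the paper's, recast in the language of translated convex cones: the cone-covering lemma ($c+K\subseteq(c_1+\bar K)\cup(c_2+\bar K)$ iff $c-c_1\in\bar K$ or $c-c_2\in\bar K$) makes precise the paper's informal claim that two MiRAs whose centers lie outside the MaRA's shadow region cannot jointly cover the gap, and the conditions $C^{Ma}-C^{Mi}\in\bar K$ are exactly the tabulated angular conditions of Cases 1--4, so the hypotheses line up. What the paper's version buys is a direct visual link to Table~\ref{table:full_reachability} and Figs.~\ref{fig:same_LSL}--\ref{fig:different_RSR}; what yours buys is a grid-free, fully analytical proof valid for all $(\theta_f,\theta_w,v_w)$. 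Two caveats you should make explicit if this is written up: the integral identity should be displayed and verified per configuration (the sign comes out determinate in each case, which is fine since only comparability is needed), and the degenerate situations you flag ($\theta_f=0$, where $B$ collapses to a single direction and $K$ has empty interior, and $|A|=|B|=\pi$) must indeed be handled separately, since several of your iff-equivalences use that $\bar K$ is closed, convex, and has nonempty interior.
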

\begin{proof}
See Appendix~\ref{proof:slopeproperty3}.
\end{proof}

\begin{cor}\label{lemma_cor}
Cases $1-4$ and the conditions therein are sufficient for full reachability analysis.
\end{cor}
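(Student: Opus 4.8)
The plan is to obtain Corollary~\ref{lemma_cor} as an immediate consequence of Lemma~\ref{lem:slopeproperty3}, which has already been proved. First I would fix the triple $(\theta_f, v_w, \theta_w)$ and recall, from the construction in Section~\ref{sec:reachability_graphs} together with Lemma~\ref{lem:swipe}, that the set of goal positions $(x_f,y_f)$ reachable by $2\pi$-arc $LSL$ and $RSR$ paths is \emph{exactly} the union of the four reachable areas --- $LSL$ MaRA, $LSL$ MiRA, $RSR$ MaRA and $RSR$ MiRA --- generated by sweeping the rays~(\ref{eq:LSL_reachability}) and~(\ref{eq:RSR_reachability}) over the admissible $\alpha$-ranges for $k\in\{0,1\}$ and $k\in\{-1,-2\}$, respectively. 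Hence full reachability at $(\theta_f, v_w, \theta_w)$ holds if and only if this four-fold union equals $\mathbb{R}^2$.

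The sufficiency direction is immediate. Each of Cases~$1$--$4$ is the union of a particular pair drawn from these four areas, and the condition recorded against it in Table~\ref{table:full_reachability} is --- as derived in detail for Case~$1$ in Appendix~\ref{app:reachability} and obtained analogously for Cases~$2$--$4$ (with Lemma~\ref{lem:slopeproperty2} supplying the MaRA/MiRA pairing across path types in Cases~$3$ and~$4$) --- precisely the statement that the MiRA center of that pair lies inside the $\pi$-rotated wedge of the pair's MaRA, equivalently that the pair already covers $\mathbb{R}^2$. Whenever any such condition is met, the four-fold union a fortiori equals $\mathbb{R}^2$, so full reachability is guaranteed; thus the conditions in Table~\ref{table:full_reachability} are sufficient for full reachability.

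For completeness one must establish the converse: if full reachability holds, then at least one of the Table~\ref{table:full_reachability} conditions is satisfied; equivalently (contrapositive), if none of Cases~$1$--$4$ provides full reachability then full reachability fails. Since the total reachable set is exactly the union of $LSL$ MaRA, $LSL$ MiRA, $RSR$ MaRA and $RSR$ MiRA, and every other conceivable combination (a single MaRA or MiRA, the union of the two MaRAs or of the two MiRAs, any triple union) is a subset of this four-fold union, this converse is precisely Lemma~\ref{lem:slopeproperty3}(c); parts~(a) and~(b) of that lemma (a single MaRA is insufficient; the union of the two MaRAs is insufficient), together with the fact that each MiRA is a wedge of opening angle at most $\pi$ and so cannot cover $\mathbb{R}^2$ on its own, are the ingredients of the proof of~(c) that make all those intermediate combinations irrelevant. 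Combining the two directions yields: full reachability holds if and only if at least one of Cases~$1$--$4$ satisfies its condition in Table~\ref{table:full_reachability}, which is exactly the corollary. The only genuine obstacle is Lemma~\ref{lem:slopeproperty3} itself, which I invoke as a black box; beyond that, the corollary is bookkeeping --- identifying the total reachable region with the four-fold union and checking that every way of covering $\mathbb{R}^2$ is either one of the four listed cases or excluded by the lemma.
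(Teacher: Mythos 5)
Your proposal is correct and follows essentially the same route as the paper: the paper's proof of Corollary~\ref{lemma_cor} simply invokes Lemma~\ref{lem:slopeproperty3} to discard every combination beyond Cases~$1$--$4$, with the sufficiency of the Table~\ref{table:full_reachability} conditions already built into their derivation in Appendix~\ref{app:reachability}. You merely spell out the two directions (identifying the total reachable set with the four-fold union and using Lemma~\ref{lem:slopeproperty3}(a)--(c) to exclude the remaining combinations), which is a faithful elaboration of the paper's one-line argument.
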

\begin{proof}
Lemma~\ref{lem:slopeproperty3} discards all cases for full reachability analysis beyond Cases $1-4$. Hence proved.
\end{proof}

\subsection{Lemma proofs}
\label{app:lemma_proofs}
\vspace{6pt}
\subsubsection{\textbf{Proof of Lemma \ref{lem:swipe}}}\label{app:lemma1}
\begin{proof}
Lemma~\ref{lem:swipe} is proved in two steps. First, we show that as $\alpha$ varies within its feasible range as shown in Table~\ref{table:range_conventional}, the  rays (\ref{eq:LSL_reachability}) (corresponding to the $LSL$ path type) and (\ref{eq:RSR_reachability}) (corresponding to the $RSR$ path type) rotate, where the points $(p_{LSL}^k, q_{LSL}^k)$ and $(p_{RSR}^k, q_{RSR}^k)$ form their centers of rotation, respectively. Second, we show that as $\alpha$ increases, (\ref{eq:LSL_reachability}) rotates anticlockwise, while (\ref{eq:RSR_reachability}) rotates clockwise.

For $LSL$ path type, (\ref{eq:LSL_reachability}) can be re-written as

\begin{equation}\label{eq:LSL_reachability_revisit}
    a(\alpha) \cdot ( x_f - p_{LSL}^k \big) - c(\alpha) \cdot \big( y_f - q_{LSL}^k ) = 0.
\end{equation}

Thus, the slope of (\ref{eq:LSL_reachability_revisit}) varies when $\alpha$ changes, while the point $(p_{LSL}^k, q_{LSL}^k)$ always lies on  (\ref{eq:LSL_reachability_revisit}) for all  rotations. This indicates that $(p_{LSL}^k, q_{LSL}^k)$ is the center of rotation of (\ref{eq:LSL_reachability}). Moreover, for any given $\alpha$, one can determine the signs of $a(\alpha)$ and $c(\alpha)$, and the corresponding inequality constraint in (\ref{eq:LSL_reachability}), which in turn determines the quadrant of the coordinate system with center at $(p_{LSL}^k, q_{LSL}^k)$,  within which (\ref{eq:LSL_reachability}) falls in. Thus (\ref{eq:LSL_reachability}) represents a ray starting from the center $(p_{LSL}^k, q_{LSL}^k)$.

Now, we show that as $\alpha$ increases from $\alpha_{inf}^k$ to $\alpha^k_{sup}$, (\ref{eq:LSL_reachability}) rotates in the anticlockwise manner. To see this, denote the slope of (\ref{eq:LSL_reachability}) as $S_{LSL}(\alpha)=\frac{a(\alpha)}{c(\alpha)},  c(\alpha) \neq 0$. Note that $S_{LSL}(\alpha)$ is a continuous function of $\alpha$.

Taking the first-order derivative of $S_{LSL}(\alpha)$, we get

\begin{equation}
    \frac{\partial{S_{LSL}(\alpha)}}{\partial{\alpha}}=\frac{1+v_w\cos(\alpha-\theta_w)}{(\cos\alpha+v_w\cos\theta_w)^2}.
\end{equation}

Since, $v_w<1$ and $\cos(\alpha-\theta_w)\in[-1,1]$, we get $\frac{\partial{S_{LSL}(\alpha)}}{\partial{\alpha}} > 0$. Thus, as $\alpha$ grows, (\ref{eq:LSL_reachability}) rotates in the anticlockwise manner.

For $RSR$ path type,  (\ref{eq:RSR_reachability}) can be re-written as

\vspace{-3pt}
\begin{equation}\label{eq:RSR_reachability_revisit}
    b(\alpha) \cdot ( x_f - p_{RSR}^k \big) + c(\alpha) \cdot \big( y_f - q_{RSR}^k ) = 0.
\end{equation}

Thus, the point $(p_{RSR}^k, q_{RSR}^k)$ always lies on  (\ref{eq:RSR_reachability_revisit}) for all rotations. This indicates that $(p_{RSR}^k, q_{RSR}^k)$ is the center of rotation of (\ref{eq:RSR_reachability}). Moreover, for any given $\alpha$, one can determine the signs of $b(\alpha)$ and $c(\alpha)$, and the corresponding inequality constraint in (\ref{eq:RSR_reachability}), which in turn determines the quadrant of the coordinate system with center at $(p_{RSR}^k, q_{RSR}^k)$,  within which (\ref{eq:RSR_reachability}) falls in. This implies that (\ref{eq:RSR_reachability}) represents a ray starting from the center $(p_{RSR}^k, q_{RSR}^k)$.

Now, we show that as $\alpha$ increases from $\alpha_{inf}^k$ to $\alpha^k_{sup}$, (\ref{eq:RSR_reachability}) rotates in the clockwise manner. To see this, denote the slope of (\ref{eq:RSR_reachability}) as $S_{RSR}(\alpha) = -\frac{b(\alpha)}{c(\alpha)}, c(\alpha) \neq 0$. Note that $S_{RSR}(\alpha)$ is a continuous function of $\alpha$.

Taking the first-order derivative of $S_{RSR}(\alpha)$, we get

\begin{equation}
    \frac{\partial{S_{RSR}(\alpha)}}{\partial{\alpha}}=-\frac{1+ v_w\cos(\alpha+\theta_w)}{(\cos\alpha+v_w\cos\theta_w)^2}.
\end{equation}

Since $v_w < 1$ and $\cos(\alpha + \theta_w) \in [-1, 1]$, we get $\frac{\partial{S_{RSR}(\alpha)}}{\partial{\alpha}} < 0$. Thus, as $\alpha$ grows, (\ref{eq:RSR_reachability}) rotates in the clockwise manner.
\end{proof}

\vspace{6pt}

\subsubsection{\textbf{Proof of Lemma~\ref{lem:parallelslope}}}
\label{proof:parallelslope}
\begin{proof}
 First, consider $2\pi$-arc $LSL$ paths. From Table~\ref{table:range_conventional}, for $k = 0$: $\alpha_{inf} = 0$ and $\alpha_{sup} = \theta_f$; while for $k = 1$: $\alpha_{inf} = \theta_f$ and $\alpha_{sup} = 2\pi$. Then, using (\ref{eq:LSL_slope}) we get
\vspace{-3pt}
\begin{subequations}
\begin{align}
\omega_{LSL}^0(\alpha_{inf}) &= \omega_{LSL}^1(\alpha_{sup}) \nonumber \\
&= \atantwo(w_y,1+w_x) \Mod{2\pi}, \label{Lemma2:eq1}\\
\omega_{LSL}^0(\alpha_{sup}) &= \omega_{LSL}^1(\alpha_{inf}) \nonumber \\
&= \atantwo(\sin{\theta_f} + w_y, \cos{\theta_f} + w_x) \Mod{2\pi}. \label{Lemma2:eq2}
\end{align}
\end{subequations}

Now, consider $2\pi$-arc $RSR$ paths. From Table~\ref{table:range_conventional}, for $k = -1$: $\alpha_{inf} = 0$ and $\alpha_{sup} = 2\pi-\theta_f$; while for $k = -2$: $\alpha_{inf} = 2\pi-\theta_f$ and $\alpha_{sup} = 2\pi$. Then, using (\ref{eq:RSR_slope}) and  we get
\vspace{-3pt}
\begin{subequations}
\begin{align}
\omega_{RSR}^{-1}(\alpha_{inf}) &= \omega_{RSR}^{-2}(\alpha_{sup}) \nonumber \\
&= \atantwo(w_y,1+w_x) \Mod{2\pi}, \label{Lemma2:eq3}\\
\omega_{RSR}^{-1}(\alpha_{sup}) &= \omega_{RSR}^{-2}(\alpha_{inf}) \nonumber \\
&= \atantwo(\sin{\theta_f} + w_y, \cos{\theta_f} + w_x) \Mod{2\pi}. \label{Lemma2:eq4}
\end{align}
\end{subequations}

Therefore, from (\ref{Lemma2:eq1}) and (\ref{Lemma2:eq3}) we get:

$\omega_{LSL}^0(\alpha_{inf}) = \omega_{LSL}^1(\alpha_{sup}) = \omega_{RSR}^{-1}(\alpha_{inf}) = \omega_{RSR}^{-2}(\alpha_{sup})$.

\vspace{3pt}
And from (\ref{Lemma2:eq2}) and (\ref{Lemma2:eq4}) we get:

$\omega_{LSL}^0(\alpha_{sup}) = \omega_{LSL}^1(\alpha_{inf}) =  \omega_{RSR}^{-1}(\alpha_{sup})= \omega_{RSR}^{-2}(\alpha_{inf})$.
\end{proof}

\vspace{10pt}
\subsubsection{\textbf{Proof of Lemma~\ref{lem:slopeproperty2}}}
\label{proof:slopeproperty2}
\begin{proof}
From Lemma~\ref{lem:parallelslope}, we get:

\begin{itemize}
\item $\omega_{LSL}^0(\alpha_{inf}) = \omega_{RSR}^{-1}(\alpha_{inf})$ and
\item $\omega_{LSL}^0(\alpha_{sup}) = \omega_{RSR}^{-1}(\alpha_{sup})$.
\end{itemize}

Thus, the  rotations of the two boundaries of the region spanned by $k=0$ (i.e., ($\omega_{LSL}^0(\alpha_{inf})$ and $\omega_{LSL}^0(\alpha_{sup})$)) are the same as the rotations of the corresponding boundaries of the region spanned by $k=-1$ (i.e., $\omega_{RSR}^{-1}(\alpha_{inf})$ and $\omega_{RSR}^{-1}(\alpha_{sup})$), respectively. Therefore, the acute angles between the boundaries corresponding to $k=0$ and $k=1$ are the same. Note that the centres of these two regions could be different. However, from Lemma~\ref{lem:swipe}, the swiping direction for $k=0$ and $k=-1$ are opposite. Thus, if $k = 0$ forms $LSL$ MaRA (MiRA), then $k = -1$ forms $RSR$ MiRA (MaRA). This proves a). The proof of  b) follows similar logic and is omitted here.
\end{proof}

\subsubsection{\textbf{Proof of Lemma~\ref{lem:slopeproperty3}}}
\label{proof:slopeproperty3}
\begin{proof}
a) Consider $LSL$ MaRA formed by $k = 0$. From Table~\ref{table:range_conventional}, we have the feasible range of $\alpha$ as $[0,\theta_f]$, where $\theta_f \in [0,2\pi)$. Since $\omega^{0}_{LSL}(0) = \omega^{0}_{LSL}(2\pi) = \atantwo \big(w_y,1+w_x\big)$, then given any $\theta_f <2\pi$, $\omega^{0}_{LSL}(\alpha)$ cannot make a full rotation as $\alpha$ varies from $0$ to $\theta_f$. Thus, for $k=0$, $LSL$ MaRA cannot provide full reachability. Similarly, we can show that the MaRAs formed by $k=1,-1$ and $-2$ cannot provide full reachability.

b) First, we show that either $LSL$ MaRA completely covers the $RSR$ MaRA (i.e., $RSR$ MaRA is a subset of $LSL$ MaRA), or $RSR$ MaRA completely covers the $LSL$ MaRA (i.e., $LSL$ MaRA is a subset of $RSR$ MaRA).

Suppose $LSL$ MaRA is formed by $k=0$ (hence $LSL$ MiRA is formed by $k = 1$). By Lemma~\ref{lem:slopeproperty2}, $RSR$ MaRA is formed by $k=-2$. According to Lemma~\ref{lem:parallelslope}, $\omega^0_{LSL}(\alpha_{inf})= \omega_{RSR}^{-2}(\alpha_{sup})$ and $\omega^0_{LSL}(\alpha_{sup})= \omega_{RSR}^{-2}(\alpha_{inf})$. This implies that the boundaries of $LSL$ MaRA and $RSR$ MaRA are parallel to each other, and that they form the same acute angle, as shown in Fig. \ref{fig:reachability_condition_MARA}. Thus, $LSL$ MaRA can completely cover $RSR$ MaRA if the center $(p^{-2}_{RSR},q^{-2}_{RSR})$ falls inside the shadow region in Fig. \ref{fig:reachability_condition_MARA}, which is formed by the boundaries with angles $\overline{\omega^0_{LSL}}(\alpha_{sup})$ and $\overline{\omega^0_{LSL}}(\alpha_{inf})$. Similarly, we can also determine the other condition when $LSL$ MaRA is formed by $k=1$ and $RSR$ MaRA is formed by $k=-1$. Subsequently, we checked the trueness of both conditions for the full range of $\theta_f$ and $\theta_w$ from $0$ to $2\pi$, and the results are presented in Fig.~\ref{fig:reachability_MARA}. It is seen that for any given pair of $\theta_f$ and $\theta_w$, one of the above conditions is always true. Thus, either $LSL$ MaRA completely covers the $RSR$ MaRA or $RSR$ MaRA completely covers the $LSL$ MaRA. This indicates that the union of both MaRAs equals to the larger MaRA, then following part a) above, this in turn implies that their union cannot provide full reachability.

\begin{figure}[!t]
    \centering
    \subfloat[Illustration of the condition required for $2\pi$-arc $LSL$ MaRA with $k=0$ to completely cover the $2\pi$-arc $RSR$ MaRA with $k=-2$.]{
        \includegraphics[width=0.226\textwidth]{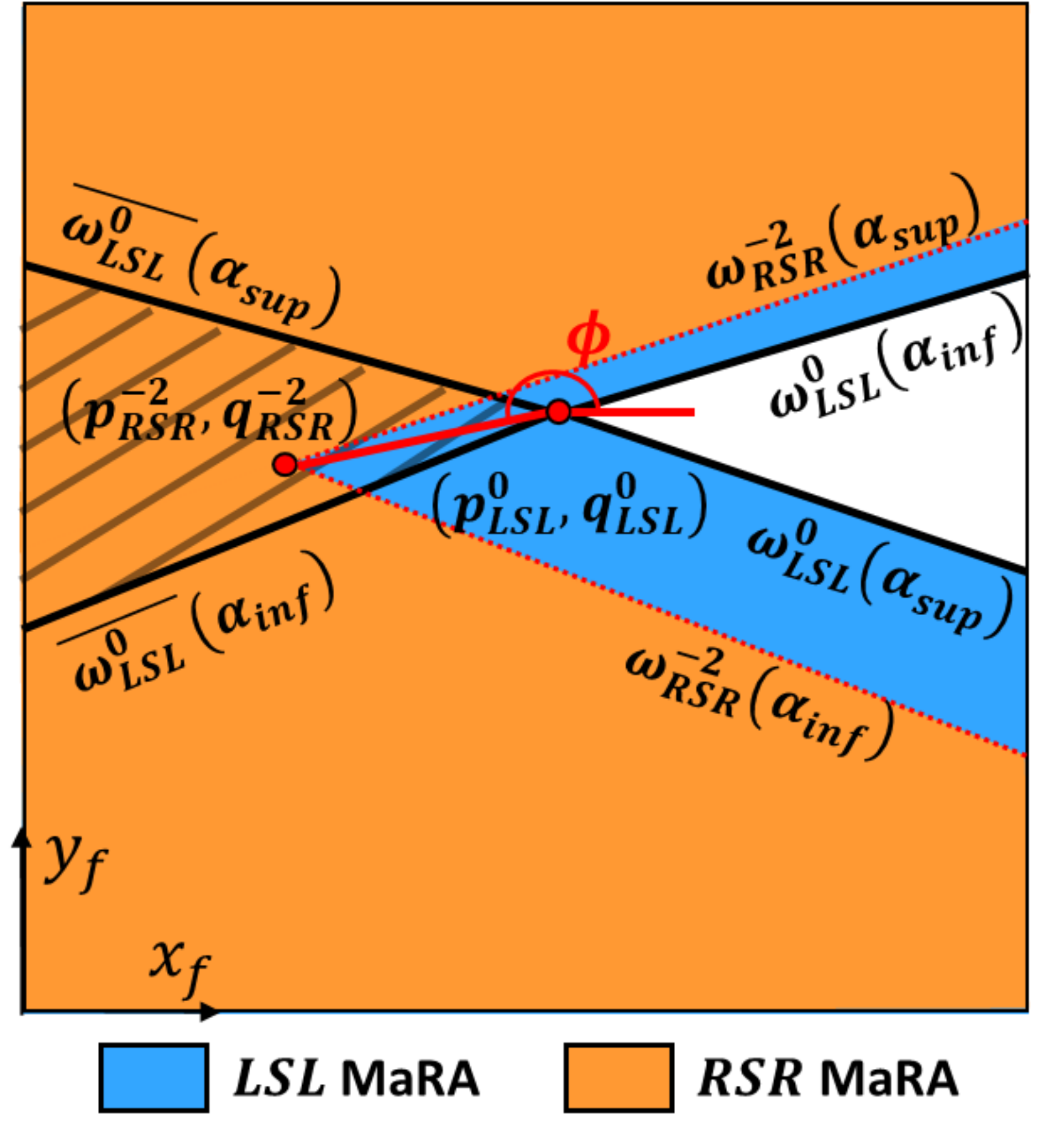}\label{fig:reachability_condition_MARA}} \hspace{-3pt}
    \subfloat[Numerical validation of the fact that either $2\pi$-arc $LSL$ MaRA covers $2\pi$-arc $RSR$ MaRA or vice versa over the full range of $\theta_f$ and $\theta_w$.]{
        \includegraphics[width=0.245\textwidth]{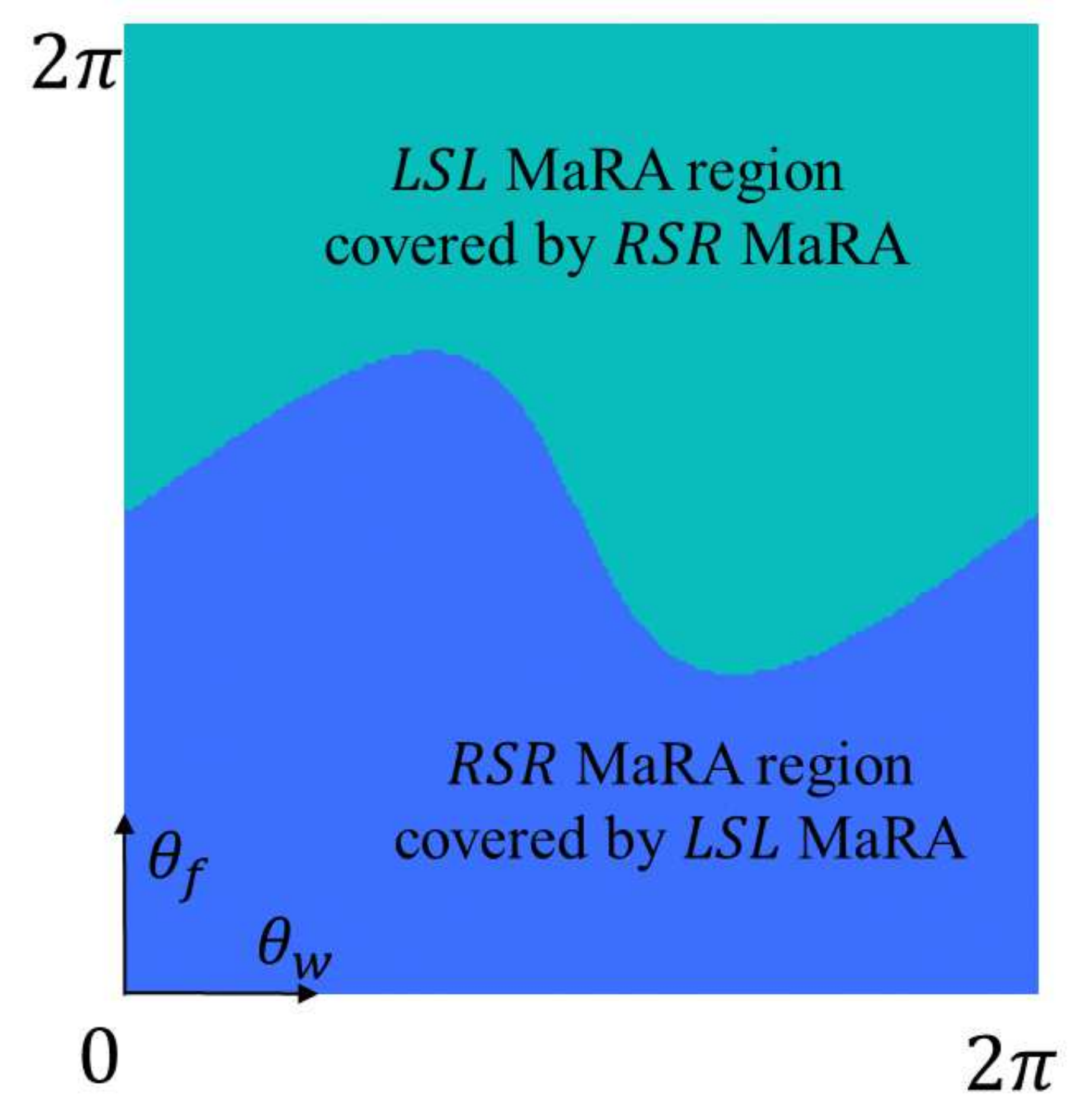}\label{fig:reachability_MARA}}
        \caption{Proof of Lemma~\ref{lem:slopeproperty3}b).}
     \label{fig:prooflemma4} \vspace{-6pt}
\end{figure}

c) According to part b) above, either $LSL$ MaRA completely covers the $RSR$ MaRA or $RSR$ MaRA completely covers the $LSL$ MaRA. First, suppose that $LSL$ MaRA is the larger of the two and covers the $RSR$ MaRA. If the condition of Case 1 is not satisfied, then the union of $LSL$ MaRA and $LSL$ MiRA cannot provide full reachability and there exists some region that is unreachable, say $\mathcal{R}_1$ (e.g., see the white region in Fig.~\ref{fig:same_LSL}). Thus, the center of $LSL$ MiRA is not in the shadow region formed by the $\pi$ rotations of $LSL$ MaRA boundaries. Also, if the condition of Case 3 is not satisfied, then the union of $LSL$ MaRA and $RSR$ MiRA cannot provide full reachability and there exists some region that is unreachable, say $\mathcal{R}_2$ (e.g., see the white region in Fig.~\ref{fig:different_LSL}). Thus, the center of $RSR$ MiRA is not in the shadow region formed by the $\pi$ rotations of $LSL$ MaRA boundaries. Since by Lemma~\ref{lem:parallelslope} the boundaries of $LSL$ MaRA, $LSL$ MiRA and $RSR$ MiRA are parallel to each other, as long as the centers of $LSL$ MiRA and $RSR$ MiRA are outside the shadow region of $LSL$ MaRA, there is no way they can together cover the reachability gaps $\mathcal{R}_1$ and $\mathcal{R}_2$ completely. Thus, in this case, because $RSR$ MaRA is a subset of $LSL$ MaRA, the union of $LSL$ MaRA, $LSL$ MiRA, $RSR$ MaRA and $RSR$ MiRA cannot provide full reachability.

Using a similar logic, when $RSR$ MaRA is the larger MaRA, one can show that if the conditions of Case 2 and Case 4 are not satisfied, then the union of $LSL$ MaRA, $LSL$ MiRA, $RSR$ MaRA and $RSR$ MiRA cannot provide full reachability.

\end{proof}

\vspace{-12pt}
\balance
\bibliographystyle{IEEEtran}
\bibliography{Dubins_Wind}

%%%%%%%%%%%%%%%%%%%%%%%%%%%%%%%%%%%%%%%%%%%%%%%%%%%%%%%%%%%%%%%
\end{document}